\title{Action Gaps and Advantages in Continuous-Time Distributional Reinforcement Learning}
\newtheorem{theorem}{Theorem}[section]
\newtheorem{lemma}[theorem]{Lemma}
\newtheorem{definition}[theorem]{Definition}
\newtheorem{remark}[theorem]{Remark}
\newtheorem{example}[theorem]{Example}
\newtheorem{assumption}[theorem]{Assumption}
\newtheorem{axiom}{Axiom}
\newtheorem{proposition}[theorem]{Proposition}
\numberwithin{equation}{section}
\numberwithin{figure}{section}
\newif\ifenablecomments
\pgfplotsset{compat=1.18}
\begin{document}

\author{%
Harley Wiltzer\thanks{Equal contribution. Correspondence to \texttt{harley.wiltzer@mail.mcgill.ca}.}\\
Mila--Qu\'ebec AI Institute\\McGill University
\And
Marc G. Bellemare\thanks{CIFAR AI Chair.}\\
Mila--Qu\'ebec AI Institute\\McGill University
\And
David Meger\\
McGill University
\AND
Patrick Shafto\\
Rutgers University--Newark
\And
Yash Jhaveri\footnotemark[1]\\
Rutgers University--Newark
}

\maketitle 

\begin{abstract}
When decisions are made at high frequency, traditional reinforcement learning (RL) methods struggle to accurately estimate action values.
In turn, their performance is inconsistent and often poor.
Whether the performance of distributional RL (DRL) agents suffers similarly, however, is unknown.
In this work, we establish that DRL agents \emph{are} sensitive to the decision frequency.
We prove that action-conditioned return distributions collapse to their underlying policy's return distribution as the decision frequency increases.
We quantify the rate of collapse of these return distributions and exhibit that their statistics collapse at different rates.
Moreover, we define distributional perspectives on action gaps and advantages.
In particular, we introduce the \emph{superiority} as a probabilistic generalization of the advantage---the core object of approaches to mitigating performance issues in high-frequency value-based RL.
In addition, we build a superiority-based DRL algorithm.
Through simulations in an option-trading domain, we validate that proper modeling of the superiority distribution produces improved controllers at high decision frequencies.
\end{abstract}

\section{Introduction}
\label{s:intro}
In many real-time deployments of reinforcement learning (RL)---quantitative finance, robotics, and autonomous driving, for instance---the state of the environment evolves continuously in time, but policies make decisions at discrete timesteps ($\timestep$ units of time apart) \cite{ramstedt2019real}.
In such systems, the performance of value-based agents is sensitive to the frequency $\omega := \nicefrac{1}{\timestep}$ with which actions are taken.
In particular, action values become indistinguishable as the time between actions decreases.
In turn, in high-frequency settings, Baird demonstrated that action value estimates are susceptible to noise and approximation error \cite{leemonc.bairdAdvantageUpdating1993}. Moreover, Tallec et al. exhibited that the performance of popular deep $Q$-learning agents is inconsistent and often poor \cite{tallecMakingDeepQlearning2019}.

In order to remedy this sensitivity, Baird proposed the advantage function and advantage-based variants of $Q$-learning, Advantage Updating (AU) \cite{leemonc.bairdAdvantageUpdating1993} and Advantage Learning (AL) \cite{bairdReinforcementLearningGradient1999}. 
Unlike action values, advantages (appropriately rescaled) do not become indistinguishable as decision frequency increases.
As a result, Baird, in \cite{leemonc.bairdAdvantageUpdating1993, bairdReinforcementLearningGradient1999}, demonstrated that advantage-based agents can learn faster and be more resilient to noise than their action value-based counterparts. 
Furthermore, Tallec et al., in \cite{tallecMakingDeepQlearning2019}, exhibited that their extension of AU, Deep Advantage Updating (DAU), works efficiently over a wide range of timesteps and environments, unlike standard deep $Q$-learning approaches.

While advantage-based approaches to RL have demonstrated robustness to decision frequency, in this work, we establish that they are nevertheless sensitive to the frequency with which actions are taken.
This discovery arises as we answer the question: to what extent is the performance of distributional RL (DRL) agents sensitive to decision frequency?
To this end, we build theory within the formalism of continuous-time RL where environmental dynamics are governed by SDEs, as in \cite{munosReinforcementLearningContinuous1997}. 
Additionally, we validate our theory empirically through simulations.
Specifically, we make the following four contributions:

\textbf{Distributional Action Gap.} 
First, we extend notions of action gap to the realm of DRL. 
Precisely, we consider the minimal distance between pairs of action-conditioned distributions under metrics on the space of probability measures on $\R$.
We observe that some metrics are viable for this extension, while others are not.
This formalism sets the stage for analyzing the influence of individual actions as well as decision frequency on, for example, an agent's return distributions.

\textbf{Collapse of Distributional Control at High Frequency.}
Second, we establish tight bounds on the distributional action gaps of \emph{$\timestep$-dependent action-conditioned return distributions}---return distributions induced by applying a specific initial action for $\timestep$ units of time.
We prove that these distributional action gaps not only collapse, as $\timestep$ tends to zero, but do so at a \emph{slower rate} than action-value gaps.
On one hand, therefore, distributional $Q$-learning algorithms are susceptible to the same failures as $Q$-learning in continuous-time RL. 
On the other hand, however, remedies to these failures transliterated to distributional $Q$-learning algorithms are unlikely to succeed, because the means of these return distributions collapse \emph{faster} than their other statistics.

\textbf{Distributional Superiority.}
Third, we propose an axiomatic construction of a distributional analogue of the advantage, which we call the \emph{superiority}.
Leveraging our analysis of $\timestep$-dependent action-conditioned returns and their distributional action gaps, we present a frequency-scaled superiority distribution that enables greedy action selection at any fixed decision frequency. 

\textbf{A Distributional Action Gap-Preserving Algorithm.}
Fourth, we propose an algorithm that learns the superiority distribution from data. 
Empirically, we demonstrate that our algorithm maintains the ability to perform policy optimization at high frequencies more reliably than existing methods.

\section{Setting}
\label{s:background}
{\bf Notation:} Spaces will either be subsets of Euclidean space or discrete.
Measurability, in the former case, will be with respect to the Borel sigma algebra; in the latter case, it will be with respect to the power set.
The set of probability measures over a space $\mathsf{Y}$ will be denoted by $\probset{\mathsf{Y}}$.
Functions on spaces are assumed to be measurable.
For $f: \mathsf{Y} \to \mathsf{Z}$ and $\mu \in \probset{\mathsf{Y}}$, the \emph{push forward} of $\mu$ through/by $f$, $\pushforward{f}{\mu} \in \probset{\mathsf{Z}}$, is defined by $\pushforward{f}{\mu} := \mu \circ f^{-1}$. 
For a random variable $X$, defined implicitly on some probability space $(\Omega, \mathcal{F}, \mathbf{P})$, we write $\law{X} := \pushforward{X}{\mathbf{P}}$ to denote the law of $X$; the notation $X \eqlaw Y$ is shorthand for $\law{X} = \law{Y}$.
For any $\mu \in \probset{\R}$, the \emph{quantile function of $\mu$}, $F^{-1}_\mu$, is defined by $F^{-1}_\mu(\tau) := \inf_z\{F_\mu(z) \geq\tau\}$, where $F_{\mu}(z)$ is the CDF of $\mu$.

\subsection{Continuous-Time RL}

Here we give a brief introduction to the technical aspects of continuous-time RL, \`{a} la \cite{munosReinforcementLearningContinuous1997}.
We provide additional exposition and references in Appendix \ref{app:controlled-processes}. 
For any reader looking to defer some of this technical introduction, we summarize the core objects of interest at \hyperref[text:ctrl-summary]{the end of Section \ref{s:mdps}}.

\subsubsection{MDPs}
\label{s:mdps}
Continuous-time Markov Decision Processes (MDPs) are defined by three spaces and four measurable functions: a time interval $\timespace := [0, T]$  with $T \in (0, \infty)$ or $\timespace := [0, T)$ with $T=\infty$, a state space $\statespace \subset \R^n$, an action space $\actionspace$, a drift $b: \timespace \times \statespace \times \actionspace \to \R^n$, a diffusion $\volatility : \timespace \times \statespace \times \actionspace  \to \R^{n \times n}$, a reward $r: \timespace\times\statespace \to \R$, and a terminal reward $\termrew : \statespace \to \R$.\footnote{\,We work with action-independent rewards. This is the norm in continuous-time DRL.}
The pair $(\drift, \volatility)$ govern the environment's dynamics by a family of SDEs parameterized by $a \in \actionspace$,
\begin{equation}
\label{eq:ito:action}
\mathrm{d} X^a_t = \drift(t, X^a_t, a)\,\mathrm{d}t + \volatility(t, X^a_t, a)\,\mathrm{d}B_t.
\end{equation}
Here $(B_t)_{t \geq 0}$ is an $n$-dimensional Brownian motion.
In turn, any solution to \eqref{eq:ito:action} collects the state paths of an agent that chooses action $a$ at every time, regardless of the state they are in.

As is done in discrete-time RL, an agent might consider the induced Markov Reward Process (MRP) derived from a policy $\pi : \timespace \times \statespace \to \probset{\actionspace}$.\footnote{\,We assume that $(t, x) \mapsto \pi(E \mid t, x)$ is measurable for every $E \subset \actionspace$ and every $(t, x) \in \timespace \times \statespace$.}
The dynamics of a policy-induced MRP (with policy $\pi$) are governed by the SDE
\begin{equation}
\label{eq:ito:policy:averaged}
\mathrm{d}X^\pi_t = \drift^\pi(t, X_t^\pi)\,\mathrm{d}t + \volatility^\pi(t, X_t^\pi)\,\mathrm{d}B_t.
\end{equation}
Here, following \cite{wang2020reinforcement,jiaPolicyGradientActorCritic2022,jiaQLearningContinuousTime2023}, the policy-averaged coefficients $\drift^\pi$ and $\volatility^\pi$ are defined by
\begin{equation}
\label{eq:ito:averaged}
\drift^\pi(t, x) := \int_{\actionspace}\drift(t, x,a)\,\pi(\mathrm{d}a \mid  t, x) 
\quad\text{and}\quad
\volatility^\pi(t, x) := \left(\int_{\actionspace}\volatility\volatility^\top(t, x,a) \, \pi(\mathrm{d}a \mid t,x)\right)^{\nicefrac{1}{2}}.
\end{equation}
Thus, solutions to \eqref{eq:ito:policy:averaged} collect the paths of an agent following policy $\pi$.

A class of policies central to our study is those that fix an action $a$ from some time $t$ for a given \emph{persistence horizon $\timestep$}.
\begin{definition}
\label{def:persistent-policy}
Given $\timestep > 0$ and $a \in \actionspace$, a policy $\pi$ is said to be \emph{$(\timestep,a )$-persistent at time $t \in \timespace$} if $\pi(\cdot\mid s, y) = \delta_a$ for all $(s, y)\in [t, t+ \timestep)\times\statespace$.
\end{definition}
In particular, given a policy $\pi$, we will consider \emph{$(\timestep, a)$-persistent modifications of $\pi$}: for $t \in \timespace$,
\[
\pi|_{\timestep, a, t}(\cdot \mid s, y) 
:= 
\begin{cases}
\delta_a & \text{if } s\in [t, t + \timestep)\\
\pi(\cdot\mid s, y) & \text{if } s \notin [t, t + \timestep).
\end{cases}
\]
These policies will help us understand the influence of taking actions relative to others as well as to those taken by $\pi$.
We assume $\timestep$ is small enough so that $t+ \timestep \in \timespace$.

In order to guarantee the global-in-time existence and uniqueness of solutions to our SDEs \eqref{eq:ito:action} and \eqref{eq:ito:policy:averaged}, we make two sets of assumptions.

\begin{assumption}
\label{a:ito:action}
The functions $\drift$ and $\volatility$ have linear growth and are Lipschitz in state, uniformly in time and action: a finite, positive constants $C_{\ref{a:ito:action}}$ exists such that
\[
\sup_{t, a} |\drift(t, x,a)| + \sup_{t, a}|\volatility(t, x,a)|
\leq C_{\ref{a:ito:action}}(1+ |x|) \quad \forall x \in \statespace;\quad \text{and}
\]
\[
\sup_{t, a} |\drift(t, x,a) - \drift(t, y,a)| + \sup_{t, a} |\volatility(t, x,a) - \volatility(t, y,a)|
\leq C_{\ref{a:ito:action}}|x - y| \quad \forall x, y \in \statespace.
\]
\end{assumption}

\begin{assumption}
\label{a:ito:averaged}
The averaged coefficient functions $\drift^\pi$ and $\volatility^\pi$ are Lipschitz in state, uniformly in time: a finite, positive constant $C_{\ref{a:ito:averaged}}$ exists such that
\begin{align*}
\sup_t |\drift^\pi(t, x) - \drift^\pi(t, y)| + \sup_t |\volatility^\pi(t, x) - \volatility^\pi(t, y)| \leq C_{\ref{a:ito:averaged}}|x - y|\quad\forall x, y \in \statespace.
\end{align*}
\end{assumption}

These assumptions are standard in the analysis of continuous-time RL, optimal control, and SDEs \cite{fleming2006controlled,oksendal2013stochastic,wang2020reinforcement,jiaQLearningContinuousTime2023,zhao2024policy}.
Since $\pi$ is a function of state, we note that Assumption \ref{a:ito:averaged} is not a direct consequence of Assumption \ref{a:ito:action}. 
The coefficients $\drift^\pi$ and $\volatility^\pi$ satisfy the conditions of Assumption \ref{a:ito:averaged} provided $\drift$ and $\volatility$ satisfy some (also standard) additional regularity conditions and $\pi$ satisfies some regularity conditions. 
These technical details are discussed in Appendix \ref{s:averagecoeffassumption}.

\label{text:ctrl-summary}In summary, in continuous-time RL, there are three stochastic processes of interest: $(X^\bullet_s)_{s \geq t}$ with $\bullet \in \{ a, \pi, \pi|_{\timestep, a, t} \}$, all beginning at some time $t$.
These processes collect the state paths of an agent in one of three scenarios: 1. choosing action $a$ at every state and time; 2. following a policy $\pi$; or 3. choosing $a$ at every state and time for the first $\timestep$ units of time and following $\pi$ thereafter.
\subsubsection{Value Functions and their Distributions}
\label{s:valuefnsandreturns}

Given a policy-induced state process $(X^\pi_s)_{s\geq t}$, the (discounted) random \emph{return} $\randretx(t, x)$ earned by $\pi$ starting from state $x\in\statespace$ at time $t \in \timespace$ is defined \cite{wiltzerDistributionalHamiltonJacobiBellmanEquations2022} by
\begin{equation}
\label{eq:randretx}
\randretx(t, x) := \int_t^T \gamma^{s - t}r(s, X^\pi_s)\,\mathrm{d}s + \gamma^{T-t}\termrew(X^\pi_T) \quad\text{with}\quad X_t^\pi = x,
\end{equation}
where $\termrew \equiv 0$ when $\timespace = [0, \infty)$.
We distinguish returns earned by $(\timestep, a)$-persistent modifications of policies.
We call these \emph{$\timestep$-dependent action-conditioned returns} and denote them by $\randretxa_{\timestep}(t, x, a)$. 
Given $\pi$, they are defined by
\begin{equation}
\label{eq:randretxa:h}
\randretxa_{\timestep}(t, x, a) := \int_t^T \gamma^{s-t}  r(s, X^{\pi|_{\timestep,a , t}}_s)\,\mathrm{d}s + \gamma^{T-t}\termrew(X^{\pi|_{\timestep,a , t}}_T) \quad\text{with}\quad X_t^{\pi|_{\timestep,a , t}} = x.
\end{equation}
Value-based approaches in RL estimate either the \emph{value function} $V^\pi(t, x) : = \mathbf{E}[\randretx(t, x)]$ or the \emph{$\timestep$-dependent action-value function} $Q^\pi_\timestep(t, x,a) := \mathbf{E}[\randretxa_{\timestep}(t, x, a)]$.\footnote{\,In \cite{wang2020reinforcement} and \cite{jiaQLearningContinuousTime2023}, the authors established $V^\pi$ and $Q^\pi_\timestep$ as continuous-time RL analogues of the classic value and action-value functions. We note, however, that this was done without explicitly defining $\randretx$ and $\randretxa_\timestep$.} As distributional approaches in RL estimate the laws of returns, following \cite{wiltzerDistributionalHamiltonJacobiBellmanEquations2022}, we define
\[
\distretx(t, x) := \law{\randretx(t, x)}
\quad\text{and}\quad
\distretxa_\timestep(t, x, a) := \law{\randretxa_\timestep(t, x, a)}.
\]
It is important to note that only the laws of random returns (and not their representations as random variables) are observable and modeled in practice.

\subsection{\texorpdfstring{$Q$}{Q}-Learning in Continuous Time}

The failure of action-value-based RL in continuous-time stems from the collapse of action values at a given state to the value of that state.
Precisely, Tallec et al. and Jia and Zhou established that 
\begin{equation}
\label{eq:action gap rate expected value}
Q^\pi_{\timestep}(t, x,a) - V^\pi(t, x) = (H^\pi(t, x,a) + (\log \gamma) V^\pi(t, x))\timestep + o(\timestep),
\end{equation}
where $H^\pi\in\R$ is independent of $\timestep$ (see \cite{tallecMakingDeepQlearning2019} and \cite{jiaQLearningContinuousTime2023} respectively).\footnote{\,The function $H^\pi$
is the \emph{Hamiltonian} of the classic stochastic optimal control problem defined by our MDP.}  
In a discrete action space, given a state $x$ and time $t$, a concise way to capture the asymptotic information of \eqref{eq:action gap rate expected value} is by considering the \emph{action gap} \cite{farahmand2011action,bellemareIncreasingActionGap2015} of the associated $\timestep$-dependent action values
\[
\gap(Q^\pi_{\timestep}, t, x) := \min_{a_1 \neq a_2} | Q^\pi_{\timestep}(t, x, a_1) - Q^\pi_{\timestep}(t, x, a_2)|.
\]
Anticipating \eqref{eq:action gap rate expected value}, which implies that $\gap(Q^\pi_{\timestep}, t, x) = O(h)$, Baird proposed AU wherein he estimated the \emph{rescaled advantage function} $A^\pi_{\timestep}$ in place of $Q^\pi_{\timestep}$:
\begin{equation}
\label{eq:advantage}
A^\pi_{\timestep}(t, x, a) := 
\frac{Q^\pi_{\timestep}(t, x, a) - V^\pi(t, x)}{\timestep} \quad\forall (t,x,a) \in \timespace \times \statespace \times \actionspace.
\end{equation}
Note that $\gap(A^\pi_{\timestep}, t, x) = O(1)$.
Tallec et al. \cite{tallecMakingDeepQlearning2019} and Jia and Zhou \cite{jiaQLearningContinuousTime2023}, following Baird, also estimated $A^\pi_{\timestep}$ to ameliorate $Q$-learning in continuous time.

\section{The Distributional Action Gap}
\label{s:distributional-action-gap}
In this section, we define a distributional notion of action gap; we prove that $\timestep$-dependent action-conditioned return distributions collapse to their underlying policy's return distribution as $\timestep$ vanishes; and we quantify the rate of collapse of these return distributions.
\begin{definition}
Consider an MDP with discrete action space  and let $\mu :\timespace \times \statespace\times\actionspace\to (\probset{\R},d) $ for a metric $d$.
The \emph{$d$ action gap of $\mu$ at a state $x$ and time $t$} is given by
\[
\distgap[d](\mu,t,x) := \min_{a_1\neq a_2}d(\mu(t,x, a_1), \mu(t,x, a_2)).
\]
\end{definition}
While $\R$ has a canonical metric, induced by $| \cdot |$, the space $\probset{\R}$ does not. 
So a choice must be made,
and some metrics are unsuitable.
For example, in deterministic MDPs with deterministic policies, return distributions are identified by expected returns: $\distretxa_\timestep(t,x,a)$ is the delta at $Q^\pi_\timestep(t,x,a)$, for all $(t,x,a) \in \timespace \times \statespace \times \actionspace$.
Thus, $\distgap[d](\distretxa_{\timestep}, t, x)$ should vanish as $\timestep$ decreases to zero if $\gap(Q^\pi_\timestep, t, x)$ vanishes as $\timestep$ decreases to zero.
With the total variation metric $d = \mathrm{TV}$, for instance, this is not the case, making $\mathrm{TV}$ unsuitable.
Indeed, suppose we have a deterministic MDP with $\actionspace = \{ a_1, a_2 \}$ and such that $\distretxa_{\timestep}(t, x, a_1) = \delta_h$ and $\distretxa_{\timestep}(t, x, a_2) = \delta_0$, for some state $x$ and time $t$ (see, e.g., \cite{tallecMakingDeepQlearning2019}).
Then $\distgap[\mathrm{TV}](\distretxa_{\timestep}, t, x) = 1$, for all $h > 0$, yet $\gap(Q^\pi_\timestep, t, x) = h$.

The \emph{$W_p$ distances} from the theory of Optimal Transportation (see \cite{villani2009optimal}), however, are suitable.
They are defined via \emph{couplings} of distributions.
\begin{definition}
\label{def:coupling}
Let $\mu, \nu \in \probset{\R}$.
A $\kappa \in \probset{\R^2}$ is a \emph{coupling of $\mu$ and $\nu$} if its first and second marginals are $\mu$ and $\nu$ respectively.
We denote the set of these couplings by $\couplingset(\mu, \nu)$.
\end{definition}
\begin{definition}
\label{def:wasserstein}
Let $\mu, \nu \in \probset{\R}$\footnote{\,Here we assume that $\mu$ and $\nu$ have finite absolute $p$th moments.} and $p \in [1, \infty)$.
The \emph{$W_p$ distance between $\mu$ and $\nu$} is 
\begin{equation}
\label{eq:wasserstein}
W_p(\mu, \nu) := \inf_{\kappa\in\couplingset(\mu, \nu)}\left(\int_{\R^2} |z - w|^p \,\kappa(\mathrm{d}z\mathrm{d}w)\right)^{1/p}.
\end{equation}
\end{definition}
Any coupling attaining the infimum in \eqref{eq:wasserstein} is called a \emph{$W_p$-optimal coupling}. 
Henceforth, we write $\distgap[p]$ when considering $W_p$ action gaps.
If $\mu$ and $\nu$ are deltas at $Q^\pi_\timestep(t,x,a_1)$ and $Q^\pi_\timestep(t,x,a_2)$ respectively, then the right-hand side of \eqref{eq:wasserstein} is equal to $|Q^\pi_\timestep(t,x,a_1) - Q^\pi_\timestep(t,x,a_2)|$.
Hence, in deterministic MDPs with deterministic policies, $W_p$ action gaps of $\distretxa_\timestep$ are identical to action gaps of $Q^\pi_\timestep$, making the $W_p$ distances suitable in the above sense.
In non-deterministic MDPs, the relationship between $\distgap[p](\distretxa_{\timestep}, t, x)$ and $\gap(Q^\pi_\timestep, t, x)$ is opaque.

The following results study the $W_p$ action gap of $\distretxa_\timestep$ as a function of $\timestep$, lending some color to the relationship between $\distgap[p](\distretxa_{\timestep}, t, x)$ and $\gap(Q^\pi_\timestep, t, x)$.
These results all hold under Assumptions \ref{a:ito:action} and \ref{a:ito:averaged}.
Henceforth, we suppress mention of these assumptions; we do not restate them explicitly.
First, we observe that $W_p$ action gaps of $\distretxa_\timestep$ are bounded from below by action gaps of $Q^\pi_\timestep$. 
\begin{restatable}{proposition}{distgapactiongap}
For all $(t,x) \in \timespace \times \statespace$, we have that $\distgap[p](\distretxa_\timestep, t, x) \geq  \gap(Q^\pi_\timestep, t, x)$.
\end{restatable}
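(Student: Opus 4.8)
The plan is to reduce the claim to a single elementary fact---that $W_p$ dominates the distance between means---and then take minima over pairs of actions. Fix $(t,x)\in\timespace\times\statespace$ and two distinct actions $a_1,a_2$. Write $\mu_i := \distretxa_\timestep(t,x,a_i)$ and $m_i := \int_\R z\,\mu_i(\mathrm{d}z)$, so that $m_i = Q^\pi_\timestep(t,x,a_i)$ by the definitions of $\distretxa_\timestep$ and $Q^\pi_\timestep$. (The means here are finite: the linear-growth half of Assumption \ref{a:ito:action}, combined with standard moment estimates for solutions of \eqref{eq:ito:policy:averaged}, shows $\randretxa_\timestep(t,x,a)$ has finite absolute $p$th moment, so each $\mu_i$ has the integrability required by Definition \ref{def:wasserstein}.)

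Next I would establish that $W_p(\mu_1,\mu_2)\geq |m_1-m_2|$. Let $\kappa\in\couplingset(\mu_1,\mu_2)$ be arbitrary. Since $\kappa$ is a probability measure on $\R^2$ and $p\geq 1$, Jensen's inequality gives $\bigl(\int_{\R^2}|z-w|^p\,\kappa(\mathrm{d}z\,\mathrm{d}w)\bigr)^{1/p}\geq \int_{\R^2}|z-w|\,\kappa(\mathrm{d}z\,\mathrm{d}w)$, and the triangle inequality for integrals bounds the right-hand side below by $\bigl|\int_{\R^2}(z-w)\,\kappa(\mathrm{d}z\,\mathrm{d}w)\bigr|$. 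Because the first and second marginals of $\kappa$ are $\mu_1$ and $\mu_2$, this last quantity equals $|m_1-m_2|$. Taking the infimum over $\kappa\in\couplingset(\mu_1,\mu_2)$ in \eqref{eq:wasserstein} then yields $W_p(\mu_1,\mu_2)\geq |m_1-m_2| = |Q^\pi_\timestep(t,x,a_1)-Q^\pi_\timestep(t,x,a_2)|$.

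Finally, since the inequality $W_p\bigl(\distretxa_\timestep(t,x,a_1),\distretxa_\timestep(t,x,a_2)\bigr)\geq |Q^\pi_\timestep(t,x,a_1)-Q^\pi_\timestep(t,x,a_2)|$ holds for every pair $a_1\neq a_2$, minimizing both sides over such pairs (monotonicity of $\min$ under a pointwise inequality) gives exactly $\distgap[p](\distretxa_\timestep,t,x)\geq \gap(Q^\pi_\timestep,t,x)$.

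I do not expect a genuine obstacle here; the argument is Jensen plus the triangle inequality plus monotonicity of the minimum. The only point requiring care is the well-posedness bookkeeping---verifying that $\distretxa_\timestep(t,x,a)$ really does have finite $p$th moment (hence finite mean), so that both $W_p$ and the $m_i$ are well defined---which follows from the linear-growth part of Assumption \ref{a:ito:action} and classical $L^p$ bounds on the state process $X^{\pi|_{\timestep,a,t}}$.
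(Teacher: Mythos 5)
Your argument is correct and is essentially the paper's own proof: both apply Jensen's inequality to pass from the $p$th moment to the first moment of $|Z_1-Z_2|$ under an arbitrary coupling, bound that below by the absolute difference of means via the marginal constraints, and then minimize over distinct action pairs. The only cosmetic difference is that you work directly with couplings $\kappa$ while the paper phrases the same step via random vectors and the identity \eqref{eq:Wp=pthmoment}.
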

For a proof of this statement and any other made in this work, see Appendix \ref{app:proofs}.
Our next result establishes that $W_p$ action gaps of $\distretxa_\timestep$, like action gaps of $Q^\pi_\timestep$, vanishes for a large class of MDPs.

\begin{restatable}{theorem}{vanishingwassersteingap}
\label{thm:vanishingwassersteingap}
If $r$ and $\termrew$ are bounded, then $\lim_{h \downarrow 0} W_p(\distretxa_\timestep(t, x, a), \distretx(t, x)) = 0$, for all  $(t,x,a) \in \timespace \times \statespace\times\actionspace$; hence, $\lim_{h \downarrow 0} \distgap[p](\distretxa_\timestep, t, x) = 0$.
\end{restatable}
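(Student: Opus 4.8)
The plan is to control the $W_p$ distance between $\distretxa_\timestep(t,x,a)$ and $\distretx(t,x)$ by exhibiting an explicit (not necessarily optimal) coupling of the two return distributions, namely the one induced by driving both processes $X^{\pi|_{\timestep,a,t}}$ and $X^\pi$ with the \emph{same} Brownian motion $(B_s)_{s\ge t}$ from the same initial condition $X_t = x$. Under this synchronous coupling, the $p$th power of the $W_p$ distance is bounded by
\[
\Exp\left[\left|\randretxa_\timestep(t,x,a) - \randretx(t,x)\right|^p\right],
\]
so it suffices to show this expectation tends to $0$ as $\timestep\downarrow 0$. Writing out the returns from \eqref{eq:randretx} and \eqref{eq:randretxa:h}, the difference is an integral of $\gamma^{s-t}(r(s,X^{\pi|_{\timestep,a,t}}_s) - r(s,X^\pi_s))$ plus a terminal term, and since $r$ and $\termrew$ are bounded, a crude bound already gives that the difference is $O(1)$ with the integrand supported (essentially) where the two paths differ. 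So the real work is to show the two \emph{paths} converge: $\Exp[\sup_{s\in[t,T]}|X^{\pi|_{\timestep,a,t}}_s - X^\pi_s|^q]\to 0$ for a suitable $q$ (by boundedness of $r,\termrew$ and dominated convergence we only need convergence in probability of the return difference, so $q$ can be taken as small as convenient).

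The path comparison splits naturally at time $t+\timestep$. On $[t, t+\timestep)$, the two processes solve different SDEs (one with coefficients $b(\cdot,\cdot,a),\volatility(\cdot,\cdot,a)$, the other with the policy-averaged $\drift^\pi,\volatility^\pi$), but both have coefficients of linear growth (Assumption \ref{a:ito:action} for the persistent leg; Assumption \ref{a:ito:averaged} together with the linear growth inherited through \eqref{eq:ito:averaged} for the $\pi$ leg), so a standard moment estimate for SDEs gives $\Exp[\sup_{s\in[t,t+\timestep)}|X^{\pi|_{\timestep,a,t}}_s - x|^q] \to 0$ and likewise for $X^\pi$; hence $\Exp[|X^{\pi|_{\timestep,a,t}}_{t+\timestep} - X^\pi_{t+\timestep}|^q]\to 0$ as $\timestep\downarrow 0$. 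On $[t+\timestep, T]$, both processes evolve under the \emph{same} coefficients $\drift^\pi,\volatility^\pi$, which are Lipschitz in state by Assumption \ref{a:ito:averaged}; so by the usual Gr\"onwall/Burkholder--Davis--Gundy argument for SDE stability with respect to initial conditions,
\[
\Exp\left[\sup_{s\in[t+\timestep, T]}\left|X^{\pi|_{\timestep,a,t}}_s - X^\pi_s\right|^q\right] \le C_T\,\Exp\left[\left|X^{\pi|_{\timestep,a,t}}_{t+\timestep} - X^\pi_{t+\timestep}\right|^q\right] \xrightarrow[\timestep\downarrow 0]{} 0,
\]
with $C_T$ depending only on $T$ and the Lipschitz constant $C_{\ref{a:ito:averaged}}$. (When $T=\infty$ one instead uses the discount factor $\gamma<1$ to kill the tail of the reward integral and runs the same estimate on a truncated horizon, letting the truncation grow slowly as $\timestep\downarrow 0$; the terminal term is absent in that case.) Combining the two legs gives uniform-on-$[t,T]$ convergence of the paths in $L^q$, hence convergence in probability of $\randretxa_\timestep(t,x,a)\to\randretx(t,x)$, and then boundedness of $r,\termrew$ upgrades this to $L^p$ convergence of the returns by dominated convergence, yielding $W_p(\distretxa_\timestep(t,x,a),\distretx(t,x))\to 0$. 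The action-gap consequence is immediate: for any fixed pair $a_1\ne a_2$, the triangle inequality gives $W_p(\distretxa_\timestep(t,x,a_1),\distretxa_\timestep(t,x,a_2)) \le W_p(\distretxa_\timestep(t,x,a_1),\distretx(t,x)) + W_p(\distretx(t,x),\distretxa_\timestep(t,x,a_2))\to 0$, and the minimum over the finitely many action pairs of quantities tending to $0$ tends to $0$.

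The main obstacle is the first leg: since the persistent and policy legs solve genuinely different SDEs on $[t,t+\timestep)$, one cannot use a Gr\"onwall comparison there and must instead argue that both solutions stay close to the constant path $x$ over a vanishing time window — this requires the a priori moment bounds on SDE solutions (finiteness of $\Exp[\sup_{s\le t+\timestep}|X_s|^q]$, uniformly for small $\timestep$) supplied by the linear-growth halves of Assumptions \ref{a:ito:action} and \ref{a:ito:averaged}, and care that the constants in those bounds do not blow up as the window shrinks (they do not — they depend on the horizon length $\timestep$ only through an increasing factor like $e^{C\timestep}$). The $T=\infty$ case adds the minor bookkeeping of trading off the reward-integral tail against the horizon-truncation error, which is routine given $\gamma<1$.
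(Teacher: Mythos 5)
Your proposal is correct and follows essentially the same route as the paper's proof: bound $W_p^p$ by the $p$th moment of the return difference under the synchronous coupling, split the path comparison at $t+\timestep$ (linear-growth estimates showing both processes stay near $x$ on the short leg, then a Gr\"onwall stability argument under the Lipschitz averaged coefficients on the long leg, exactly as in the paper's Lemmas \ref{lem:shorttimeGronwall} and \ref{lem:longtimeGronwall}), upgrade to convergence of returns via boundedness and dominated convergence, and finish the action-gap claim with the triangle inequality. The only cosmetic differences are your use of a supremum/BDG bound where the paper uses pointwise-in-$s$ moments plus Fubini, and horizon truncation for $T=\infty$ where the paper integrates against the normalized discount measure $\Gamma$.
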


While Theorem \ref{thm:vanishingwassersteingap} shows that the $W_p$ distance between $\distretxa_\timestep(t,x,a)$ and $\distretx(t,x)$ (and the $W_p$ action gap of $\distretxa_\timestep$ at $(t,x) \in \timespace \times \statespace$) does indeed vanish as $\timestep$ decreases, it does not identify the rate at which it does so. 
Our next two theorems establish this rate.

\begin{restatable}{theorem}{witnesslargegap}
\label{thm:witnesslargegap}
MDPs and policies exist in and under which, for all $(t,x,a) \in \timespace \times \statespace\times\actionspace$,  we have that  $W_p(\distretxa_\timestep(t, x, a), \distretx(t, x)) \geqsim h^{\nicefrac{1}{2}}$ and
$\distgap[p](\distretxa_{\timestep}, t, x) \geqsim h^{\nicefrac{1}{2}}$.
\end{restatable}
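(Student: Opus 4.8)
The plan is to produce an explicit MDP and policy in which the underlying policy's return is a point mass while a single persistence step injects additive Gaussian noise of amplitude $\Theta(\timestep^{\nicefrac{1}{2}})$ into the return; because no coupling can remove a spread that the baseline law does not already carry, this forces $W_p \geqsim \timestep^{\nicefrac{1}{2}}$. Concretely, I would take $\statespace = \R$, $\timespace = [0,1]$, $\actionspace = \{a_1, a_2\}$, zero drift $\drift \equiv 0$, diffusions $\volatility(\cdot,\cdot,a_1) \equiv 1$ and $\volatility(\cdot,\cdot,a_2) \equiv 0$, zero running reward $r \equiv 0$, terminal reward $\termrew(y) = y$, discount $\gamma = 1$, and let $\pi$ be the deterministic policy that always plays $a_2$. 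Every coefficient is constant, and $\drift^\pi \equiv \volatility^\pi \equiv 0$ by \eqref{eq:ito:averaged}, so Assumptions~\ref{a:ito:action} and \ref{a:ito:averaged} hold trivially.

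First I would compute the two relevant laws in closed form. Since $\drift^\pi \equiv \volatility^\pi \equiv 0$, the policy-induced process is frozen, $X^\pi_s \equiv x$, so $\randretx(t,x) = \termrew(X^\pi_1) = x$ and $\distretx(t,x) = \delta_x$. For the $(\timestep, a_1)$-persistent modification $\pi|_{\timestep, a_1, t}$ of Definition~\ref{def:persistent-policy}, the averaged coefficients equal $(0,1)$ on $[t,t+\timestep)$ and $(0,0)$ thereafter, so $X^{\pi|_{\timestep,a_1,t}}$ solves $\mathrm{d}X_s = \mathrm{d}B_s$ on $[t,t+\timestep)$ and is then frozen; by \eqref{eq:randretxa:h}, $\randretxa_\timestep(t,x,a_1) = x + (B_{t+\timestep} - B_t)$, hence $\distretxa_\timestep(t,x,a_1) = \mathcal{N}(x,\timestep)$. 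Since $\pi|_{\timestep, a_2, t} = \pi$, we also get $\distretxa_\timestep(t,x,a_2) = \delta_x$.

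Next I would read off the bounds from the fact that $\delta_x \otimes \mu$ is the only element of $\couplingset(\delta_x, \mu)$, so $W_p(\delta_x, \mu)^p = \int_\R |z-x|^p \, \mu(\mathrm{d}z)$. Taking $\mu = \mathcal{N}(x,\timestep)$ and writing $Z \sim \mathcal{N}(0,1)$, this gives $W_p(\distretxa_\timestep(t,x,a_1), \distretx(t,x)) = (\timestep^{p/2} \mathbf{E}|Z|^p)^{1/p} = c_p\, \timestep^{\nicefrac{1}{2}}$ with $c_p := (\mathbf{E}|Z|^p)^{1/p} \in (0,\infty)$ for every $p \in [1,\infty)$, so $W_p(\distretxa_\timestep(t,x,a_1), \distretx(t,x)) \geqsim \timestep^{\nicefrac{1}{2}}$. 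The same computation with $\distretxa_\timestep(t,x,a_2) = \delta_x$ yields $\distgap[p](\distretxa_\timestep, t, x) = W_p(\distretxa_\timestep(t,x,a_1), \distretxa_\timestep(t,x,a_2)) = c_p\, \timestep^{\nicefrac{1}{2}} \geqsim \timestep^{\nicefrac{1}{2}}$, since there are only two actions.

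The substantive work lies not in this toy computation but in making the phenomenon robust. The exponent $\nicefrac{1}{2}$ is sharp precisely because the post-persistence dynamics here are deterministic: the law of $X^{\pi|_{\timestep,a,t}}_{t+\timestep}$ is only an $O(\timestep^{\nicefrac{1}{2}})$ perturbation in $W_2$ of the law of $X^\pi_{t+\timestep}$, and if $\distretx(t,x)$ already carried comparable diffusive spread this perturbation would typically wash out into the $O(\timestep)$ term of $W_p$ rather than its $\timestep^{\nicefrac{1}{2}}$ term --- so one must keep $\volatility^\pi$, and the sensitivity of the return to the state at time $t+\timestep$, controlled, which is exactly what the frozen-state construction secures. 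The second subtlety is the quantifier: for a deterministic policy, forcing its own action is not a modification at all (here $W_p(\distretxa_\timestep(t,x,a_2), \distretx(t,x)) = 0$), so to obtain the $\timestep^{\nicefrac{1}{2}}$ lower bound on $W_p(\distretxa_\timestep(t,x,a), \distretx(t,x))$ for \emph{every} action $a$ one would instead use a mildly randomized $\pi$ whose averaged diffusion $\volatility^\pi$ differs from each $\volatility(\cdot,\cdot,a)$, together with a terminal reward having a jump placed off the path's mean, and then estimate the change in the induced (nearly Bernoulli) return laws; verifying $\geqsim \timestep^{\nicefrac{1}{2}}$ uniformly in $(t,x,a)$ and across all $p \in [1,\infty)$ in that variant is the delicate step, whereas the action-gap conclusion $\distgap[p](\distretxa_\timestep, t, x) \geqsim \timestep^{\nicefrac{1}{2}}$ is already delivered by the clean two-action construction above.
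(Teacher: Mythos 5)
Your construction is essentially the paper's: a two-action MDP on $\R$ with zero drift, unit diffusion for one action and zero for the other, a frozen (always-hold) policy, and a state-valued reward, so that the persistent action injects Gaussian noise of standard deviation $\timestep^{\nicefrac{1}{2}}$ into an otherwise deterministic return; the only differences are that you place the state in the terminal reward rather than the running reward (which makes the return exactly $\mathcal{N}(x,\timestep)$ and avoids the paper's $\int_0^\timestep \tilde B_s\,\mathrm{d}s$ term) and that you treat only $\gamma=1$, whereas the paper adds a second case for $\gamma\in(0,1)$. Your closing worry about the ``for all $a$'' quantifier is well taken but does not call for the randomized-policy variant you sketch: the paper's own example has exactly the same feature ($W_p(\distretxa_\timestep(t,x,0),\distretx(t,x))=0$ for the policy's action), so the clean two-action construction is all that is intended, and your proof matches the paper's.
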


Finally, we prove that for a large class of MDPs (different from but overlapping with the class of MDPs captured in Theorem \ref{thm:vanishingwassersteingap}), the lower bound found in Theorem \ref{thm:witnesslargegap} is an upper bound.

\begin{restatable}{theorem}{wassersteinactiongapdecay}
\label{thm:wassersteinactiongapdecay}
If $r$ is Lipschitz in state, uniformly in time, $\termrew$ is Lipschitz, and $T<\infty$, then $W_p(\distretxa_\timestep(t, x, a), \distretx(t,x)) \leqsim h^{\nicefrac{1}{2}}$, for all  $(t,x,a) \in \timespace \times \statespace\times\actionspace$;
hence, $\distgap[p](\distretxa_{\timestep}, t, x)\leqsim \timestep^{\nicefrac{1}{2}}$.
\end{restatable}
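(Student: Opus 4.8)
The plan is to bound $W_p(\distretxa_\timestep(t,x,a),\distretx(t,x))$ by exhibiting a single coupling of the two random returns and controlling the $L^p$ distance between the coupled variables. The natural coupling is the one induced by driving both state processes with the \emph{same} Brownian motion $(B_s)_{s\ge t}$: let $(X^{\pi|_{\timestep,a,t}}_s)_{s\ge t}$ and $(X^\pi_s)_{s\ge t}$ both start at $x$ at time $t$ and solve their respective SDEs against this common $B$. This gives a coupling $\kappa$ of $\distretxa_\timestep(t,x,a)$ and $\distretx(t,x)$, and since $r$ and $\termrew$ are Lipschitz in state (uniformly in time) and $\gamma^{s-t}\le 1$ with $T<\infty$, we get
\begin{align*}
W_p^p(\distretxa_\timestep(t,x,a),\distretx(t,x))
&\le \E\!\left[\left|\int_t^T \gamma^{s-t}\bigl(r(s,X^{\pi|_{\timestep,a,t}}_s)-r(s,X^\pi_s)\bigr)\mathrm{d}s
+ \gamma^{T-t}\bigl(\termrew(X^{\pi|_{\timestep,a,t}}_T)-\termrew(X^\pi_T)\bigr)\right|^p\right]\\
&\lesssim \E\!\left[\sup_{s\in[t,T]}\bigl|X^{\pi|_{\timestep,a,t}}_s - X^\pi_s\bigr|^p\right],
\end{align*}
using Jensen/Hölder on the time integral and the boundedness of $[t,T]$. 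So the theorem reduces to showing $\E\bigl[\sup_{s\in[t,T]}|X^{\pi|_{\timestep,a,t}}_s - X^\pi_s|^p\bigr]\lesssim \timestep^{p/2}$.

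Next I would split the analysis at time $t+\timestep$. On $[t,t+\timestep)$ the two processes obey different coefficients ($\drift(\cdot,\cdot,a),\volatility(\cdot,\cdot,a)$ versus $\drift^\pi,\volatility^\pi$), but the discrepancy they accumulate is small because the interval has length $\timestep$: by the linear-growth bound in Assumption \ref{a:ito:action} (and the corresponding bound on the averaged coefficients, which follows from Assumption \ref{a:ito:averaged} together with the moment bounds on SDE solutions), standard moment estimates for Itô integrals (Burkholder--Davis--Gundy) give $\E\bigl[\sup_{s\in[t,t+\timestep]}|X^{\pi|_{\timestep,a,t}}_s - X^\pi_s|^p\bigr]\lesssim \timestep^{p/2}$ --- the $\timestep^{p/2}$ coming from the Brownian term, the drift term contributing the smaller $\timestep^{p}$. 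In particular $\E\bigl[|X^{\pi|_{\timestep,a,t}}_{t+\timestep} - X^\pi_{t+\timestep}|^p\bigr]\lesssim\timestep^{p/2}$. On $[t+\timestep,T]$ both processes follow the \emph{same} dynamics $\drift^\pi,\volatility^\pi$ (the persistent modification has reverted to $\pi$), but started from the two different points reached at time $t+\timestep$. Here a Grönwall argument using the Lipschitz property of $\drift^\pi,\volatility^\pi$ (Assumption \ref{a:ito:averaged}) and BDG shows
\[
\E\!\left[\sup_{s\in[t+\timestep,T]}\bigl|X^{\pi|_{\timestep,a,t}}_s - X^\pi_s\bigr|^p\right]
\le C(T)\,\E\!\left[\bigl|X^{\pi|_{\timestep,a,t}}_{t+\timestep} - X^\pi_{t+\timestep}\bigr|^p\right]\lesssim \timestep^{p/2},
\]
with $C(T)$ finite precisely because $T<\infty$. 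Combining the two intervals yields the claimed $\timestep^{p/2}$ bound, hence $W_p\lesssim\timestep^{1/2}$; the action-gap statement follows from the triangle inequality, $\distgap[p](\distretxa_\timestep,t,x)\le W_p(\distretxa_\timestep(t,x,a_1),\distretx(t,x))+W_p(\distretx(t,x),\distretxa_\timestep(t,x,a_2))$, since $\distretx(t,x)$ does not depend on the action.

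The main obstacle I anticipate is not the Grönwall step but the short-interval estimate on $[t,t+\timestep)$: one has to be careful that the constants in the moment bounds depend only on $C_{\ref{a:ito:action}}$, $C_{\ref{a:ito:averaged}}$, $T$, $p$, and on $\E[\sup_{s\le T}|X^\pi_s|^p]$ (which is finite, uniformly, by the linear-growth assumption), and \emph{not} on $\timestep$ --- otherwise the $\timestep^{p/2}$ rate could be spoiled. This requires first establishing a uniform-in-$\timestep$ a priori bound $\sup_\timestep \E[\sup_{s\le T}|X^{\pi|_{\timestep,a,t}}_s|^p]<\infty$ (again via linear growth, BDG, and Grönwall), and then feeding it back into the difference estimate. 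A secondary subtlety is justifying that the common-noise construction genuinely produces a valid coupling of the two \emph{return} laws; this is immediate once one notes that $\randretxa_\timestep(t,x,a)$ and $\randretx(t,x)$ are measurable functionals of the respective solution paths, which are in turn measurable functions of the shared $B$, so their joint law on $\R^2$ has the correct marginals.
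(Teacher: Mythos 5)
Your proposal is correct and follows essentially the same route as the paper: a synchronous coupling of the two state processes through a common Brownian motion, reduction via the Lipschitz hypotheses on $r$ and $\termrew$ to moment bounds on $X^{\pi|_{\timestep,a,t}}_s - X^\pi_s$, a split at $t+\timestep$ where the diffusion term over the short interval produces the $\timestep^{\nicefrac{1}{2}}$ rate and a Gr\"onwall argument (finite because $T<\infty$) propagates it forward, and the triangle inequality for the action-gap claim. The only cosmetic difference is that you control $\mathbf{E}[\sup_{s}|X^{\pi|_{\timestep,a,t}}_s - X^\pi_s|^p]$ via Burkholder--Davis--Gundy, whereas the paper works with pointwise-in-time moments via It\^o's isometry and integrates using Fubini; both suffice.
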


Theorems \ref{thm:witnesslargegap} and \ref{thm:wassersteinactiongapdecay} demonstrate that the $W_p$ distance between $\distretxa_\timestep(t,x,a)$ and $\distretx(t,x)$ and the distance between $Q^\pi_\timestep(t,x,a)$ and $V^\pi(t,x)$ are of different orders in terms of $\timestep$.
Thus, we see that $\distgap[p](\distretxa_{\timestep}, t, x)$ and $\gap(Q^\pi_\timestep, t, x)$ in stochastic MDPs are fundamentally different.

\section{Distributional Superiority}
\label{s:distributional-superiority}
In this section, we introduce a probabilistic
generalization of the advantage.
We define this random variable---which we call the \emph{superiority} and denote by $\randsup_\timestep$---via a pair of axioms.

A natural construction of the superiority at $(t,x,a)$ is given by $\randretxa_\timestep(t, x, a) - \randretx(t,x)$.
The law of this difference, however, depends on the joint law of $(\randretxa_\timestep(t, x, a), \randretx(t,x))$, which is unobservable in practice and ill-defined (cf. Section \ref{s:valuefnsandreturns}). 
Yet, the set of all possible laws of this difference is easily characterized; it is the set of \emph{coupled difference representations of $\distretxa_{\timestep}(t, x, a)$ and $\distretx(t, x)$}.
\begin{definition}
\label{def:coupled-difference-representation}
Let $\mu,\nu\in\probset{\R}$.
A \emph{coupled difference representation (CDR) $\psi\in\probset{\R}$ of $\mu$ and $\nu$} takes the form $\psi = \pushforward{\diffmap}{\kappa}$ where $\kappa \in \couplingset(\mu, \nu)$ and $\diffmap : \R^2 \to \R$ is given by $\diffmap(z, w) := z - w$. 
The set of all coupled difference representations of $\mu$ and $\nu$ will be denoted by $\cdrset(\mu, \nu)$.
\end{definition}
Our first axiom places the superiority’s law in this set, $\cdrset(\distretxa_{\timestep}(t, x, a), \distretx(t, x))$.
\begin{axiom}
\label{axiom:cdr}
The law of $\randsup_{\timestep}(t, x, a)$ is a coupled difference representation of $\distretxa_{\timestep}(t, x, a)$ and $\distretx(t, x)$.
\end{axiom}
Our second axiom encodes a type of consistency for deterministic policy behavior.

\begin{axiom}
\label{axiom:consistency}
$\randsup_{\timestep}(t, x, a)$ is deterministic whenever $\pi$ is $(\timestep, a)$-persistent at time $t$. 
\end{axiom}
To see how Axiom \ref{axiom:consistency} encodes a notion of deterministic consistency, first consider its discrete-time analogue: \emph{the superiority at $(x,a)$ for a policy $\pi$ is deterministic if $\pi$ at $x$ deterministically chooses $a$.}
In this situation, our $a$-following agent makes the same choices as a $\pi$-following agent---both take action $a$ in state $x$ initially and then follow $\pi$ thereafter---, and we posit that the superiority should not be random.
The continuous-time analogue of the situation just described occurs precisely when a policy $\pi$ is $(\timestep, a)$-persistent at starting time $t$.
Given a starting time $t$ and state $x$, an agent that chooses action $a$ between $t$ and $t+\timestep$ and then follows $\pi$ is following the $(\timestep, a)$-persistent modification of $\pi$ at time $t$.
By definition, they make the same choices as a $\pi$-following agent when $\pi$ is $(\timestep, a)$-persistent starting at time $t$.
Axiom \ref{axiom:consistency} stipulates that, in this case, $\randsup_{\timestep}(t, x, a)$ should be deterministic.

By construction, if $\distsup_{\timestep}(t, x, a) \in \cdrset(\distretxa_{\timestep}(t, x, a), \distretx(t, x))$, then its mean is $Q^\pi_{\timestep}(t, x, a) - V^\pi(t, x)$ (see Appendix \ref{app:proofs:distsup} for a proof of this claim). 
Axiom \ref{axiom:consistency} then says that any determining coupling $\kappa^\pi_\timestep(t, x, a)$ when $\pi$ is $(h,a)$-persistent at time $t$  must be such that $\pushforward{\diffmap}{\kappa^\pi_\timestep(t, x, a)} = \delta_0$.\footnote{\,Here $\distretxa_{\timestep}(t, x, a)=\distretx(t, x)$; so the mean of $\pushforward{\diffmap}{\kappa^\pi_\timestep(t, x, a)}$ is $0$.}
In particular, Axiom \ref{axiom:consistency} nontrivially restricts $\cdrset(\distretxa_{\timestep}(t, x, a), \distretx(t, x))$.
\begin{example}
\label{ex:consistency-counterexample}
Let $\iota^\pi_{\timestep}(t, x, a) := \pushforward{\diffmap}{\kappa^\pi_\timestep(t, x, a)}$ for $\kappa^\pi_\timestep(t, x, a) =  \distretxa_{\timestep}(t, x, a) \otimes \distretx(t, x)$.
If $\pi$ is $(\timestep, a)$-persistent at time $t$, then $\mathbf{Var}(\iota^\pi_{\timestep}(t, x, a)) = 2\mathbf{Var}(\distretx(t,x))$.
This variance is $0$ only when $\pi$'s return is deterministic.
Hence, $\iota^\pi_{\timestep}(t, x, a)$ may be nontrivial even when conditioning on $a$ reflects the policy's behavior exactly.
We posit, via Axiom \ref{axiom:consistency}, that this should be prohibited.
\end{example}
In fact, Axiom \ref{axiom:consistency} determines a single coupling, if we want a consistent choice across all time-state-action triplets and all MDPs.
\begin{restatable}{theorem}{unqiuesuperiority}
\label{thm:unique superiority}
Let $\kappa \in \couplingset(\mu, \mu)$ for some $\mu \in \probset{\R}$.
The push-forward of $\kappa$ by $\diffmap$ is the delta at zero, $\pushforward{\diffmap}{\kappa} = \delta_0$, if and only if $\kappa$ is a $W_p$-optimal coupling, for some $p \in [1, \infty)$.
Moreover, there is only one such coupling.
It is given by $\kappa_\mu := \pushforward{(\identity,\identity)}{\mu}$ or, equivalently, $\kappa_\mu := \pushforward{(F_\mu^{-1}, F_\mu^{-1})}{\mathcal{U}(0,1)}$.
Here $\mathcal{U}(0,1)$ is the uniform distribution on $[0,1]$. 
\end{restatable}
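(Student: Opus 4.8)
The plan is to reduce the entire statement to one elementary observation: $W_p(\mu,\mu) = 0$, witnessed uniquely by the diagonal coupling. First I would record the basic facts about $\kappa_\mu := \pushforward{(\identity,\identity)}{\mu}$. Both of its marginals are $\mu$ by definition of the push-forward, so $\kappa_\mu \in \couplingset(\mu,\mu)$; and its transport cost is $\int_{\R^2}|z-w|^p\,\kappa_\mu(\mathrm{d}z\,\mathrm{d}w) = \int_{\R}|z-z|^p\,\mu(\mathrm{d}z) = 0$ for every $p$. Since the cost functional in \eqref{eq:wasserstein} is nonnegative, this immediately yields $W_p(\mu,\mu) = 0$ and shows $\kappa_\mu$ is $W_p$-optimal, for every $p \in [1,\infty)$.

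Next I would prove the equivalence. For the forward implication, assume $\pushforward{\diffmap}{\kappa} = \delta_0$; then the change-of-variables identity for push-forwards gives $\int_{\R^2}|z-w|^p\,\kappa(\mathrm{d}z\,\mathrm{d}w) = \int_{\R}|u|^p\,\delta_0(\mathrm{d}u) = 0 = W_p(\mu,\mu)$ for every $p$, so $\kappa$ is $W_p$-optimal (for every $p$, hence in particular for some $p$). For the converse, if $\kappa$ is $W_p$-optimal for some $p \in [1,\infty)$, then $\int_{\R^2}|z-w|^p\,\kappa(\mathrm{d}z\,\mathrm{d}w) = W_p(\mu,\mu)^p = 0$; as the integrand is nonnegative and $p \geq 1$, $|z-w| = 0$ holds for $\kappa$-almost every $(z,w)$, i.e.\ $\kappa$ is concentrated on the diagonal $\{(z,z): z \in \R\}$, and therefore $\pushforward{\diffmap}{\kappa}(\{0\}) = 1$, that is, $\pushforward{\diffmap}{\kappa} = \delta_0$.

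For uniqueness I would argue that diagonal concentration together with the first-marginal constraint pins $\kappa$ down completely: for any bounded measurable $f:\R^2 \to \R$, using first that $\kappa$ lives on the diagonal and then that its first marginal is $\mu$, $\int f\,\mathrm{d}\kappa = \int_{\R} f(z,z)\,\mu(\mathrm{d}z) = \int f\,\mathrm{d}\kappa_\mu$, whence $\kappa = \kappa_\mu$. Finally, for the equivalent quantile formula I would invoke the standard generalized-inverse-CDF fact that $\pushforward{F_\mu^{-1}}{\mathcal{U}(0,1)} = \mu$, so that $\pushforward{(F_\mu^{-1},F_\mu^{-1})}{\mathcal{U}(0,1)} = \pushforward{(\identity,\identity)}{(\pushforward{F_\mu^{-1}}{\mathcal{U}(0,1)})} = \pushforward{(\identity,\identity)}{\mu} = \kappa_\mu$.

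I do not expect a genuine obstacle here; the argument is elementary once one notices $W_p(\mu,\mu) = 0$. The only points requiring care are bookkeeping: checking that the diagonal coupling is admissible and costless, keeping the ``for some $p$'' quantifier straight across the two directions (the forward direction in fact produces optimality for every $p$, while the converse consumes only a single value of $p$), and the mild remark that even if $\mu$ lacks a finite $p$-th moment the diagonal coupling still has zero cost, so $W_p(\mu,\mu)=0$ and nothing in the proof changes.
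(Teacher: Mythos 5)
Your proposal is correct and follows essentially the same route as the paper: both arguments hinge on the observation that the diagonal coupling $\kappa_\mu$ has zero transport cost, so $W_p$-optimality forces concentration on the diagonal, which together with the marginal constraint pins the coupling down uniquely and is equivalent to $\pushforward{\diffmap}{\kappa}=\delta_0$. Your version is slightly more complete in that it explicitly tracks the ``for some $p$'' versus ``for every $p$'' quantifiers and verifies the quantile-function formula $\pushforward{(F_\mu^{-1},F_\mu^{-1})}{\mathcal{U}(0,1)}=\kappa_\mu$, which the paper's proof leaves implicit.
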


The second definition of $\kappa_\mu$ corresponds, more generally, to the $W_p$-optimal coupling, for all $p \geq 1$, of $\mu$ and $\nu$ given by $\kappa_{\mu, \nu} := \pushforward{(F_\mu^{-1}, F_\nu^{-1})}{\mathcal{U}(0,1)}$.
As $\pushforward{\diffmap}{\kappa_{\mu, \nu}}$'s quanitle function is $F_\mu^{-1} - F_\nu^{-1}$, we have in hand everything we need to define the superiority distribution (via its quantile function).
\begin{definition}
\label{def:distsup}
The \emph{superiority distribution $\distsup_{\timestep}$ at $(t, x, a) \in  \timespace \times \statespace \times \actionspace$} is
\[
\distsup_{\timestep}(t, x, a) := \pushforward{(F_{\distretxa_\timestep(t, x, a)}^{-1} - F_{\distretx(t, x)}^{-1})}{\mathcal{U}(0,1)}.
\]
\end{definition}

As $\distsup_{\timestep}(t, x, a)$ has the smallest possible central absolute $p$th moments among all CDRs of $\distretxa_{\timestep}(t, x, a)$ and $\distretx(t, x)$, heuristically, it captures more of the individual features of both return distributions than other such CDRs (like $\iota_\timestep^\pi(t,x,a)$ in Example \ref{ex:consistency-counterexample}).
We illustrate this by example in Figure \ref{fig:cdr:demo}.

\begin{figure}[h]
    \centering
    \begin{tabular}{l@{\hskip9pt} r@{\hskip1pt}}
    \includegraphics[height=1.97cm]{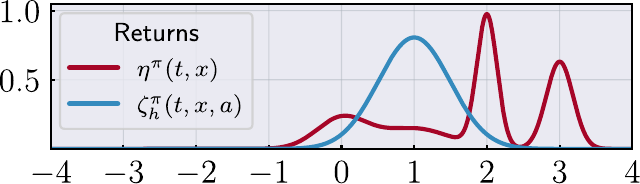} &
    \includegraphics[height=1.97cm]{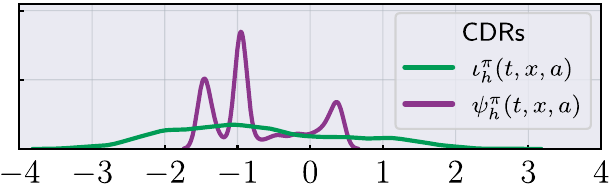}
    \end{tabular}\\
    \caption{PDFs of Return Distributions and Two Candidate CDRs. 
    }
    \label{fig:cdr:demo}
\end{figure}

\subsection{The Rescaled Superiority Distribution}
\label{s:rescaled sup}

From Section \ref{s:distributional-action-gap}, we know that the $W_p$ distance between $\distretxa_\timestep(t,x,a)$ and $\distretx(t,x)$, for every $p \geq 1$, vanishes as $\timestep$ vanishes.
Moreover, we know the rate at which this distance disappears.
As a result, by construction, the central absolute $p$th moments of $\distsup_\timestep(t,x,a)$ collapse as $\timestep$ collapses, and, for a large class of MDPs, we understand the rate at which these moments collapse.
More generally and precisely, if we consider the \emph{$q$-rescaled superiority distribution} defined by 
\[
\rescaleddistsup{q} := \pushforward{(\timestep^{-q}\identity)}{\distsup_\timestep},
\]
we see that Theorems \ref{thm:witnesslargegap} and \ref{thm:wassersteinactiongapdecay} translate to the follow statements on $W_p$ action gaps of $\rescaleddistsup{q}$:
\begin{restatable}{theorem}{rescaleddsuppreservelow}
\label{thm:rescaleddsuppreserve1}
MDPs and policies exist satisfying Assumptions \ref{a:ito:action} and \ref{a:ito:averaged} in and under which, for all $(t, x) \in \timespace \times \statespace$, we have that $\distgap[p](\rescaleddistsup{q}, t, x) \geqsim \timestep^{ \nicefrac{1}{2}-q}$.
\end{restatable}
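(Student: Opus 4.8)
The plan is to reuse the MDP and policy that witness Theorem \ref{thm:witnesslargegap}, for which we already know $\distgap[p](\distretxa_\timestep, t, x) \geqsim \timestep^{\nicefrac{1}{2}}$ at every $(t,x) \in \timespace \times \statespace$, and to show that on such an instance the $W_p$ action gap of the rescaled superiority is exactly $\timestep^{-q}$ times the $W_p$ action gap of $\distretxa_\timestep$. The whole argument then reduces to the chain $\distgap[p](\rescaleddistsup{q}, t, x) = \timestep^{-q}\distgap[p](\distsup_\timestep, t, x) = \timestep^{-q}\distgap[p](\distretxa_\timestep, t, x) \geqsim \timestep^{-q}\timestep^{\nicefrac{1}{2}} = \timestep^{\nicefrac{1}{2}-q}$, valid for all $\timestep$ small enough that $t+\timestep\in\timespace$, the last step being Theorem \ref{thm:witnesslargegap} verbatim.

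Both equalities I would obtain from the one-dimensional identity $W_p(\mu,\nu) = \left(\int_0^1 |F^{-1}_\mu(\tau) - F^{-1}_\nu(\tau)|^p\,\mathrm{d}\tau\right)^{\nicefrac{1}{p}}$. The scaling step is immediate: since $\timestep^{-q}>0$ we have $F^{-1}_{\pushforward{(\timestep^{-q}\identity)}{\mu}} = \timestep^{-q}F^{-1}_\mu$, so $W_p(\rescaleddistsup{q}(t,x,a_1),\rescaleddistsup{q}(t,x,a_2)) = \timestep^{-q}W_p(\distsup_\timestep(t,x,a_1),\distsup_\timestep(t,x,a_2))$ for every pair $a_1\neq a_2$, and taking minima over pairs preserves this. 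The second equality uses the structure of the witnessing instance: there the policy-induced return is deterministic, $\distretx(t,x) = \delta_{V^\pi(t,x)}$ for all $(t,x)$ (the natural witness has vanishing policy-averaged diffusion $\volatility^\pi$, so the $\timestep^{\nicefrac{1}{2}}$ rate is produced entirely by diffusion injected during the persistence window). Consequently $F^{-1}_{\distretx(t,x)}$ is the constant $V^\pi(t,x)$, so $F^{-1}_{\distretxa_\timestep(t,x,a)} - F^{-1}_{\distretx(t,x)}$ is itself a genuine (nondecreasing, left-continuous) quantile function and hence, by Definition \ref{def:distsup}, is the quantile function of $\distsup_\timestep(t,x,a)$. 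The constant baseline then cancels in cross-action differences, $F^{-1}_{\distsup_\timestep(t,x,a_1)} - F^{-1}_{\distsup_\timestep(t,x,a_2)} = F^{-1}_{\distretxa_\timestep(t,x,a_1)} - F^{-1}_{\distretxa_\timestep(t,x,a_2)}$, and the $W_p$ formula gives $W_p(\distsup_\timestep(t,x,a_1),\distsup_\timestep(t,x,a_2)) = W_p(\distretxa_\timestep(t,x,a_1),\distretxa_\timestep(t,x,a_2))$ for every pair, so the minima coincide.

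The main obstacle is exactly the structural claim that the Theorem \ref{thm:witnesslargegap} witness can be taken with deterministic policy return, equivalently that $F^{-1}_{\distretxa_\timestep(t,x,a)} - F^{-1}_{\distretx(t,x)}$ is nondecreasing: absent monotonicity, Definition \ref{def:distsup} replaces that difference by its nondecreasing rearrangement, and rearrangement is only an $L^p$ contraction, which would yield merely an \emph{upper} bound on $\distgap[p](\distsup_\timestep, t, x)$ instead of the lower bound we need. I would resolve this either by inspecting the construction in the proof of Theorem \ref{thm:witnesslargegap} and confirming the policy only ever plays zero-diffusion actions, or, failing that, by supplying a fresh self-contained witness---for instance $\statespace=\R$, $\actionspace=\{0,1,2\}$, $\timespace=[0,T]$ with $T<\infty$, drift $b\equiv 0$, diffusion $\volatility(t,x,a)=a$, reward $r(t,x)=x$, terminal reward $0$, and $\pi\equiv\delta_0$. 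That instance satisfies Assumptions \ref{a:ito:action} and \ref{a:ito:averaged} and has $\distretx(t,x)=\delta_{V^\pi(t,x)}$ by construction; a short It\^o estimate shows that $\distretxa_\timestep(t,x,a)$, recentered at $V^\pi(t,x)$ and rescaled by $\timestep^{-\nicefrac{1}{2}}$, converges to a centered Gaussian with positive, $a$-dependent variance for $a\neq 0$ and $t<T$, whence $\distgap[p](\distretxa_\timestep, t, x)\geqsim\timestep^{\nicefrac{1}{2}}$ and the argument above applies unchanged.
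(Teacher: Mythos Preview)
Your proposal is correct and follows the same route as the paper: both reduce to the witness of Theorem~\ref{thm:witnesslargegap}, establish $W_p(\rescaleddistsup{q}(t,x,a_1),\rescaleddistsup{q}(t,x,a_2)) = \timestep^{-q}W_p(\distretxa_\timestep(t,x,a_1),\distretxa_\timestep(t,x,a_2))$ via the one-dimensional quantile formula for $W_p$, and then invoke the $\timestep^{\nicefrac{1}{2}}$ lower bound. Your concern about monotonicity of $F^{-1}_{\distretxa_\timestep(t,x,a)}-F^{-1}_{\distretx(t,x)}$ is well-placed but ultimately moot here---the paper simply writes the quantile identity $F^{-1}_{\rescaleddistsup{q}(t,x,a)} = \timestep^{-q}(F^{-1}_{\distretxa_\timestep(t,x,a)}-F^{-1}_{\distretx(t,x)})$ as if general, whereas you verify it on the witness by noting $\distretx(t,x)$ is a Dirac there; either way the identity holds on the witness and the proof closes.
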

\begin{restatable}{theorem}{rescaleddsuppreserveup}
\label{thm:rescaleddsuppreserve2}
Under Assumptions \ref{a:ito:action} and \ref{a:ito:averaged}, if $r$ is Lipschitz in state, uniformly in time, $\termrew$ is Lipschitz, and $T<\infty$, then $\distgap[p](\rescaleddistsup{q}, t, x) \leqsim \timestep^{\nicefrac{1}{2}-q}$,  for all $(t, x) \in \timespace \times \statespace$.
\end{restatable}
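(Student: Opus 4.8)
The plan is to obtain this bound as an almost immediate consequence of Theorem~\ref{thm:wassersteinactiongapdecay}, by showing that the $W_p$ action gap of the $q$-rescaled superiority is controlled by $\timestep^{-q}$ times the $W_p$ action gap of $\distretxa_\timestep$. The only ingredient needed beyond Theorem~\ref{thm:wassersteinactiongapdecay} is the elementary observation that, for $g_1, g_2 \in L^p([0,1])$, the pair $\pushforward{(g_1, g_2)}{\mathcal{U}(0,1)}$ is a coupling of $\pushforward{g_1}{\mathcal{U}(0,1)}$ and $\pushforward{g_2}{\mathcal{U}(0,1)}$, so that $W_p\bigl(\pushforward{g_1}{\mathcal{U}(0,1)}, \pushforward{g_2}{\mathcal{U}(0,1)}\bigr) \le \|g_1 - g_2\|_{L^p([0,1])}$, together with the one-dimensional optimal transport formula $W_p(\mu,\nu) = \|F_\mu^{-1} - F_\nu^{-1}\|_{L^p([0,1])}$.

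First I would unpack Definition~\ref{def:distsup}: $\distsup_\timestep(t,x,a) = \pushforward{g^a}{\mathcal{U}(0,1)}$ for $g^a := F^{-1}_{\distretxa_\timestep(t, x, a)} - F^{-1}_{\distretx(t, x)}$, and therefore, since push-forwards compose, $\rescaleddistsup{q}(t,x,a) = \pushforward{(\timestep^{-q} g^a)}{\mathcal{U}(0,1)}$. The crucial point is that, for any two actions $a_1 \neq a_2$, the shared summand $F^{-1}_{\distretx(t, x)}$ cancels, $g^{a_1} - g^{a_2} = F^{-1}_{\distretxa_\timestep(t, x, a_1)} - F^{-1}_{\distretxa_\timestep(t, x, a_2)}$, whose $L^p([0,1])$ norm equals $W_p(\distretxa_\timestep(t, x, a_1), \distretxa_\timestep(t, x, a_2))$ by the one-dimensional optimal transport formula. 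Applying the coupling bound to $g_i = \timestep^{-q} g^{a_i}$ then gives
\[
W_p\bigl(\rescaleddistsup{q}(t, x, a_1), \rescaleddistsup{q}(t, x, a_2)\bigr) \le \timestep^{-q}\, W_p\bigl(\distretxa_\timestep(t, x, a_1), \distretxa_\timestep(t, x, a_2)\bigr),
\]
and minimizing over $a_1 \neq a_2$ yields $\distgap[p](\rescaleddistsup{q}, t, x) \le \timestep^{-q}\,\distgap[p](\distretxa_\timestep, t, x)$.

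To conclude, I would invoke Theorem~\ref{thm:wassersteinactiongapdecay}, whose hypotheses ($r$ Lipschitz in state uniformly in time, $\termrew$ Lipschitz, $T<\infty$, together with Assumptions~\ref{a:ito:action}--\ref{a:ito:averaged}) coincide with those assumed here, to get $\distgap[p](\distretxa_\timestep, t, x) \leqsim \timestep^{\nicefrac{1}{2}}$; multiplying by $\timestep^{-q}$ gives $\distgap[p](\rescaleddistsup{q}, t, x) \leqsim \timestep^{\nicefrac{1}{2}-q}$, which is the claim.

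I do not anticipate a serious obstacle: all the analytic content is inherited from Theorem~\ref{thm:wassersteinactiongapdecay}, and the rest is bookkeeping with quantile functions. The only details meriting care are confirming that the distributions involved have finite absolute $p$th moments so that the $W_p$ quantile formula applies (this follows from $T<\infty$, the linear growth in Assumption~\ref{a:ito:action}, and the at-most-linear growth of the Lipschitz reward and terminal reward, via standard moment bounds for SDE solutions), and being explicit that the $V^\pi$-term cancellation is legitimate precisely because it takes place at the level of quantile functions and requires only the coupling inequality above, not monotonicity of $g^a$ itself.
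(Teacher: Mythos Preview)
Your proposal is correct and follows essentially the same route as the paper: both reduce to Theorem~\ref{thm:wassersteinactiongapdecay} by expressing $\rescaleddistsup{q}(t,x,a)$ through $\timestep^{-q}g^a$ with $g^a = F^{-1}_{\distretxa_\timestep(t,x,a)} - F^{-1}_{\distretx(t,x)}$, cancelling the common $F^{-1}_{\distretx(t,x)}$ term, and invoking the one-dimensional quantile formula for $W_p$. The only difference is that the paper asserts the \emph{equality} $W_p(\rescaleddistsup{q}(t,x,a_1),\rescaleddistsup{q}(t,x,a_2)) = \timestep^{-q}W_p(\distretxa_\timestep(t,x,a_1),\distretxa_\timestep(t,x,a_2))$ by identifying $\timestep^{-q}g^a$ with the quantile function of $\rescaleddistsup{q}(t,x,a)$, whereas you obtain only the inequality $\le$ via the coupling bound; your version sidesteps the question of whether $g^a$ is monotone, and the inequality is all that is needed here.
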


These two theorems tell us how to preserves the $W_p$ action gaps of $q$-rescaled superiority distributions (as a function of $\timestep$). 
They identify $q=\nicefrac{1}{2}$.
For $q < \nicefrac{1}{2}$, $W_p$ action gaps vanish as $\timestep$ vanishes.
Whereas for $q > \nicefrac{1}{2}$, $W_p$ action gaps blow up as $\timestep$ vanishes.
These behaviors are undesirable.
When $q < \nicefrac{1}{2}$, the influence of an action on an agent's superiority becomes indistinguishable from any other action.
For $q > \nicefrac{1}{2}$, ever larger sample sizes are need to obtain any statistical estimate of an agent's superiority with the same level of accuracy.
These scenarios are untenable.

Another consideration regarding rescalings of $\distsup_\timestep$ is whether they upend rankings of actions determined by some given measure of utility.
This would be counterproductive.
In DRL, agents often use \emph{distortion risk measures} \cite{acerbiSpectralMeasuresRisk2002} to rank actions (\cite{dabneyDistributionalReinforcementLearning2017,dabneyImplicitQuantileNetworks2018,limDistributionalReinforcementLearning2022,bellemareDistributionalReinforcementLearning2023,kastnerDistributionalModelEquivalence2023}).
\begin{definition}
\label{def:srm}
Given $\beta \in \probset{[0,1]}$, the \emph{distortion risk measure} $\rho_\beta:\probset{\R}\to\R$ is defined by
$\rho_\beta(\mu) := \langle \beta, F_{\mu}^{-1} \rangle$; on $\mu \in \probset{\R}$, its value is given by the integral of $F^{-1}_\mu$ with respect to $\beta$. 
\end{definition}
A family of $\rho_\beta$ is the $\alpha$-conditional value-at-risk measures ($\alpha$-CVaR) \cite{rockafellar2002conditional}, where $\beta_\alpha = \mathcal{U}(0,\alpha)$ for $\alpha \in (0,1]$; $\alpha = 1$ is the expected-value utility measure.
Crucially, $\rescaleddistsup{q}$ preserves $\rho_\beta$-valued utility.
\begin{restatable}{theorem}{rescaleddsuprank}
\label{thm:rescaleddsuprank}
Let $\rho_\beta$ be a distortion risk measure, $q \geq 0$, and $\timestep > 0$.
If $\rho_\beta(\distretx(t,x)) < \infty$, then $\arg\max_{a\in\actionspace} \rho_\beta(\rescaleddistsup{q}(t, x, a)) = \arg\max_{a\in\actionspace} \rho_\beta(\distretxa_\timestep(t, x, a))$.
\end{restatable}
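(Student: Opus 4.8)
The plan is to push everything through the quantile representation of the superiority distribution and then exploit that a distortion risk measure is a linear functional of the quantile function. First I would identify the quantile function of $\rescaleddistsup{q}(t,x,a)$. By Definition~\ref{def:distsup}, together with the observation preceding it that $\pushforward{\diffmap}{\kappa_{\mu,\nu}}$ has quantile function $F_{\mu}^{-1} - F_{\nu}^{-1}$, the quantile function of $\distsup_{\timestep}(t,x,a)$ is $F_{\distretxa_{\timestep}(t, x, a)}^{-1} - F_{\distretx(t, x)}^{-1}$. Since $\rescaleddistsup{q}(t,x,a) = \pushforward{(\timestep^{-q}\identity)}{\distsup_{\timestep}(t,x,a)}$ and $\timestep^{-q} > 0$ (because $\timestep > 0$ and $q \geq 0$), pushing a law forward through the strictly increasing map $z \mapsto \timestep^{-q} z$ multiplies its quantile function pointwise by $\timestep^{-q}$; hence $F_{\rescaleddistsup{q}(t,x,a)}^{-1} = \timestep^{-q}\bigl(F_{\distretxa_{\timestep}(t, x, a)}^{-1} - F_{\distretx(t, x)}^{-1}\bigr)$.

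Next I would apply $\rho_\beta$. By Definition~\ref{def:srm}, $\rho_\beta(\rescaleddistsup{q}(t,x,a)) = \bigl\langle \beta,\ \timestep^{-q}\bigl(F_{\distretxa_{\timestep}(t, x, a)}^{-1} - F_{\distretx(t, x)}^{-1}\bigr) \bigr\rangle$, and the aim is to split this into $\timestep^{-q}\rho_\beta(\distretxa_{\timestep}(t,x,a)) - \timestep^{-q}\rho_\beta(\distretx(t,x))$ using linearity of the pairing $\langle \beta, \cdot\rangle$. This is the one step that needs care, and the only place the hypothesis $\rho_\beta(\distretx(t,x)) < \infty$ enters: because $F_{\distretx(t,x)}^{-1}$ is $\beta$-integrable, the identity $\langle \beta, g - F_{\distretx(t,x)}^{-1}\rangle = \langle\beta, g\rangle - \langle\beta, F_{\distretx(t,x)}^{-1}\rangle$ is valid in $[-\infty,+\infty]$ for $g = F_{\distretxa_{\timestep}(t, x, a)}^{-1}$, with no indeterminate $\infty - \infty$ form; in the degenerate case $\rho_\beta(\distretxa_{\timestep}(t,x,a)) = +\infty$, both sides of the resulting identity equal $+\infty$, so that action lies in both $\arg\max$ sets and the claim reduces to the remaining actions. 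Thus, for every $a \in \actionspace$, $\rho_\beta(\rescaleddistsup{q}(t,x,a)) = \timestep^{-q}\rho_\beta(\distretxa_{\timestep}(t,x,a)) - \timestep^{-q}\rho_\beta(\distretx(t,x))$.

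Finally, I would observe that $\Phi(u) := \timestep^{-q} u - \timestep^{-q}\rho_\beta(\distretx(t,x))$ is a strictly increasing affine function of $u$ whose slope $\timestep^{-q} > 0$ and intercept $-\timestep^{-q}\rho_\beta(\distretx(t,x)) \in \R$ do not depend on $a$. Since $a \mapsto \rho_\beta(\rescaleddistsup{q}(t,x,a))$ is the composition of $a \mapsto \rho_\beta(\distretxa_{\timestep}(t,x,a))$ with the order-preserving map $\Phi$, the two functions of $a$ attain their maxima over $\actionspace$ at exactly the same actions, which is the asserted equality $\arg\max_{a\in\actionspace}\rho_\beta(\rescaleddistsup{q}(t,x,a)) = \arg\max_{a\in\actionspace}\rho_\beta(\distretxa_{\timestep}(t,x,a))$.

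I expect the only real obstacle to be the bookkeeping in the middle step: justifying the linear decomposition of the distortion integral when some of the quantities involved may be infinite, which is precisely what the finiteness of $\rho_\beta(\distretx(t,x))$ buys. Everything else is an unwinding of Definitions~\ref{def:distsup} and~\ref{def:srm} and the elementary invariance of $\arg\max$ under an order-preserving reparametrization of the objective.
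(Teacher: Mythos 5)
Your proposal is correct and follows essentially the same route as the paper's proof: expand $\rho_\beta$ as the $\beta$-integral of the quantile function $F^{-1}_{\rescaleddistsup{q}(t,x,a)} = \timestep^{-q}\bigl(F^{-1}_{\distretxa_\timestep(t,x,a)} - F^{-1}_{\distretx(t,x)}\bigr)$, split by linearity into $\timestep^{-q}\rho_\beta(\distretxa_\timestep(t,x,a)) - \timestep^{-q}\rho_\beta(\distretx(t,x))$, and observe that an $a$-independent positive-slope affine reparametrization preserves the $\arg\max$. Your extra care with the possibly infinite case of $\rho_\beta(\distretxa_\timestep(t,x,a))$ is a minor refinement the paper elides, not a different argument.
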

In turn, the $\nicefrac{1}{2}$-rescaled superiority distribution is not only $W_p$ action gap preserving but matches $\distretxa_\timestep$ in its greedy choice of action as measured by a distortion risk measure.

\subsection{Algorithmic Considerations}
\label{s:algoconsids}
\newcommand{\predcolor}{NavyBlue}
\newcommand{\targetcolor}{Blue}

We now turn to building DRL algorithms based on our theory.
Our algorithms leverage the quantile TD-learning framework \cite{dabneyDistributionalReinforcementLearning2017} to learn $\rho_\beta$-greedy policies, for a given distortion risk measure $\rho_\beta$, just as DAU \cite{tallecMakingDeepQlearning2019} leverages the $Q$-learning framework to learn greedy policies.
Full pseudocode and implementation details are given in Appendix \ref{app:pseudocode}.

At the heart of our algorithms is an equality of quantile functions, which holds by construction,
\begin{equation}
\label{eqn:quantile of zeta in terms of eta and psi}
F^{-1}_{\distretxa_\timestep(t, x, a)}
= F^{-1}_{\distretx(t, x)} + \timestep^q F^{-1}_{\rescaleddistsup{q}(t, x, a)}.
\end{equation}
Indeed, given $\distretx[]$ and $\rescaleddistsup[]{q}$, as models of $\distretx$ and $\rescaleddistsup{q}$ respectively, equation \eqref{eqn:quantile of zeta in terms of eta and psi} justifies the application of quantile TD-learning to $\distretxa[]_\timestep$, as a model for $\distretxa_\timestep$, defined via the quantile function
\begin{equation}
\label{eqn: pred_0}
F^{-1}_{\distretxa[]_\timestep(t, x, a)}
:= F^{-1}_{\distretx[](t, x)} + \timestep^q F^{-1}_{\rescaleddistsup[]{q}(t, x, a)}.
\end{equation}
That said, we cannot realize quantile TD-learning without defining \textcolor{\predcolor}{predictions} and bootstrap \textcolor{\targetcolor}{targets} in terms of $m$-quantile representations\footnote{\,An $m$-quantile representation of a given distribution in $\probset{\R}$ is a vector in $\R^m$ whose $i$th component encodes the $\frac{i-\nicefrac{1}{2}}{m}$-quantile of said distribution.}  \cite{dabneyDistributionalReinforcementLearning2017, bellemareDistributionalReinforcementLearning2023} of $\distretxa[]_\timestep$ via those of $\distretx[]$ and $\rescaleddistsup[]{q}$.

While we may freely parameterize the $m$-quantile representation of $\distretx[]$ with a neural network (with interface) $\theta :\timespace\times\statespace \to\R^m$, we have to be careful when parameterizing the $m$-quantile representation of $\rescaleddistsup[]{q}$.
Given a neural network $\phi:\timespace\times\statespace\times\actionspace\to\R^m$, we set
\begin{equation}
\label{eq:dsup:proxy}
F^{-1}_{\rescaleddistsup[]{q}(t, x, a)} := \phi(t, x, a) - \phi(t, x, a^\star)\quad\text{with}\quad a^\star\in\arg\max_{a\in\actionspace}\rho_\beta(\phi(t, x, a)).
\end{equation}
This ensures we identify a $\rho_\beta$-greedy policy; it is $0$ at the $\rho_\beta$-greedy action $a^\star$
(cf. \cite[Eq. 27]{tallecMakingDeepQlearning2019}).

With appropriate parameterized $m$-quantile representations of $\eta$ and $\rescaleddistsup[]{q}$ in hand, we derive our \textcolor{\predcolor}{predictions} and bootstrap \textcolor{\targetcolor}{targets}.
By \eqref{eqn: pred_0}, recalling $\theta$ and \eqref{eq:dsup:proxy}, we compute our  \textcolor{\predcolor}{predictions} via
\begin{equation}
\label{eq:zeta-pred}
{\color{\predcolor}F^{-1}_{\distretxa[]_\timestep(t, x, a)} := \theta(t, x) + h^q(\phi(t, x, a) - \phi(t, x, a^\star))}.
\end{equation}
By Wiltzer \cite{wiltzer2021evolution}, as $X^{\pi|_{\timestep,a,t}}_t=x$ and $X^{\pi|_{\timestep,a,t}}_{t+\timestep} = X^a_{t+\timestep}$, observe that
\[
\begin{split}
\randretxa_\timestep(t, x, a)
&\eqlaw \int_0^{\timestep} \gamma^{s}r(t+ s, X^{\pi|_{\timestep, a, t}}_{t+s}) \, \mathrm{d}s + \gamma^\timestep\randretx(t+\timestep, X^{\pi|_{\timestep,a,t}}_{t+\timestep})
\\
&\eqlaw \timestep r(t, x) + \gamma^\timestep\randretx(t + \timestep, X^{a}_{t+\timestep}) + Y_\timestep,
\end{split}
\]
where $\mathbf{E}[|Y_\timestep|^p] = o(h)$\footnote{\,This holds if $r$ is continuous, a standard assumption in continuous-time control (see, e.g., \cite{munosReinforcementLearningContinuous1997,tallecMakingDeepQlearning2019,jiaQLearningContinuousTime2023}).} for all $p\in\bbfont{N}$.
So upon getting a sample state/realization $x_{t + \timestep}^a$ of $X^a_{t+\timestep}$, as in \cite{rowlandAnalysisQuantileTemporalDifference2023}, we compute our bootstrap \textcolor{\targetcolor}{targets} via
\begin{equation}
\label{eq:zeta-target}
{\color{\targetcolor}\mathcal{T}F^{-1}_{\distretxa[]_\timestep(t,x,a)} := \timestep r(t, x) + \gamma^\timestep \theta(t + \timestep, x^{a}_{t+\timestep})}.
\end{equation}
In summary, the \textcolor{\predcolor}{predictions} \eqref{eq:zeta-pred} and the bootstrap \textcolor{\targetcolor}{targets} \eqref{eq:zeta-target} together characterize a family of QR-DQN-based algorithms called DSUP($q$), whose core update is outlined in Algorithm \ref{alg:dsup:update}.
\begin{algorithm}[H]
    \caption{DSUP($q$) Update}\label{alg:dsup:update}
    \begin{algorithmic}
        \Require $q$ (rescale factor), $\alpha$ (step size), $\mathcal{D}$ (replay buffer), $\rho_\beta$ (distortion risk measure)
        \Require $\theta:\timespace\times\statespace\to\R^m$ ($m$-quantile approximator of $F^{-1}_{\distretx[]}$), $\phi:\timespace\times\statespace\times\actionspace\to\R^m$ (see \eqref{eq:dsup:proxy})
        \State Sample $(t, x_t, a_t, r_t, x_{t+\timestep}, \mathsf{done}_{t+\timestep})$ from $\mathcal{D}$
        \State $a^\star\gets\arg\max_a\rho_\beta(\frac{1}{m}\sum_{n=1}^m\delta_{\phi(t, x_t, a)_n})$\Comment{Risk-sensitive greedy action}
        \State $\textcolor{\predcolor}{F^{-1}_{\distretxa[]}(\phi, \theta)}\gets \theta(t, x_t) + \timestep^q(\phi(t, x_t, a_t) - \phi(t, x_t, a^\star))$\Comment{Prediction \eqref{eq:zeta-pred}}
        \State $\textcolor{\targetcolor}{\mathcal{T}F^{-1}_{\distretxa[]}}\gets \timestep r_t + \gamma^{\timestep}(1 - \mathsf{done}_{t+\timestep}){\theta}(t + \timestep, x_{t+\timestep}) + \gamma^\timestep\mathsf{done}_{t+\timestep}\termrew(x_{t+\timestep})$\Comment{Target \eqref{eq:zeta-target}}
        \State $\ell(\phi, \theta)\gets \mathsf{QuantileHuber}(\textcolor{\predcolor}{F^{-1}_{\distretxa[]}(\phi, \theta)}, \textcolor{\targetcolor}{\mathcal{T}F^{-1}_{\distretxa[]}})$\Comment{See \cite[Eq. (10)]{dabneyDistributionalReinforcementLearning2017}}
        \State $\phi\gets\phi - \alpha\nabla_\phi\ell(\phi, \theta)$ and $\theta\gets\theta - \alpha\nabla_\theta\ell(\phi, \theta)$\Comment{Gradient updates}
    \end{algorithmic}
\end{algorithm}

One theoretical drawback of DSUP($q$) for mean-return control is that the mean of the $q$-rescaled superiority distribution is $O(1)$ only when $q = 1$, by \eqref{eq:action gap rate expected value} and \eqref{eq:advantage}. 
Thus, we propose modeling $A^\pi_\timestep$ simultaneously.
This yields a novel form of a \emph{two-timescale} approach to value-based RL (see, e.g., \cite{chung2018two}).
In particular, we estimate $\shiftedrescaleddistsup{q}$ defined by
\begin{equation*}\label{eq:shifted-dsup}
F^{-1}_{\shiftedrescaleddistsup{q}(t, x, a)}
:= F^{-1}_{\rescaleddistsup{q}(t, x, a)} + (1 - \timestep^{1-q})A^\pi_\timestep(t, x, a).
\end{equation*}
We call $\shiftedrescaleddistsup{q}$ the \emph{advantage-shifted $q$-rescaled superiority}.
Note that its mean is $A^\pi_\timestep$, which is $O(1)$. 
To realize this, we approximate $A^\pi_\timestep$ using DAU and employ parameter sharing between the approximators of $A^\pi_\timestep$ and $\rescaleddistsup{q}$.
We call this family of algorithms DAU+DSUP($q$).
We note that $A^\pi_\timestep$ is used only for increasing action gaps; it does not change the training loss for $\distretx[]$ and $\rescaleddistsup[]{q}$.

\section{Simulations}
\label{s:experiments}
The empirical work herein is two-fold in nature: illustrative and comparative. 
First, we simulate an example that illustrates Theorems \ref{thm:witnesslargegap}/\ref{thm:rescaleddsuppreserve1} and Theorem \ref{thm:wassersteinactiongapdecay}/\ref{thm:rescaleddsuppreserve2} and their consequences.
Second, in an option-trading environment, we compare the performance of $\distsup_\timestep$-based agent(s) against QR-DQN \cite{dabneyDistributionalReinforcementLearning2017} and DAU \cite{tallecMakingDeepQlearning2019} in the risk-neutral setting and against QR-DQN in a risk-sensitive setting.

\subsection{The Rescaled Superiority Distribution Revisited}
Consider an MDP with time horizon $10$, a two element action space, $0$ and $1$---when action $1$ is executed, the system follows $1$-dimensional Brownian dynamics with a constant drift of $10$, and otherwise, the state is fixed---, a reward that equals the agent’s signed distance to $0$, and a trivial terminal reward. 
We estimate four distributions  at $(t,x,a) = (0, 0, 1)$ for the policy that always selects $0$.  
Figure \ref{fig:ridgeplots} shows these estimated distributions for a sample of frequencies (kHz), $\omega = \nicefrac{1}{\timestep}$.

\begin{figure}[h]
\label{fig:ridgeplots}
\begin{tabular}{c@{\hskip1pt} c@{\hskip1pt} c@{\hskip5pt} c@{\hskip3pt} c}
$\distretxa_\timestep=\distsup_\timestep$ & $\rescaleddistsup{1}$ & ${\color{CadetBlue}\omega}$ & $\rescaleddistsup{\nicefrac{1}{2}}$ & $\shiftedrescaleddistsup{\nicefrac{1}{2}}$
\\
\includegraphics[height=2.58cm]{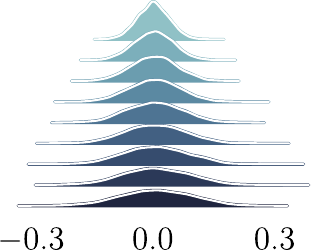} &
\includegraphics[height=2.58cm]{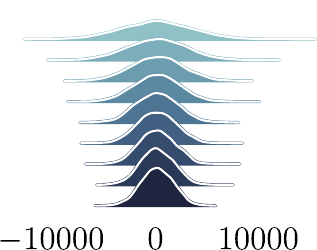} &
\includegraphics[height=2.58cm]{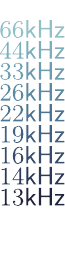} &
\includegraphics[height=2.58cm]{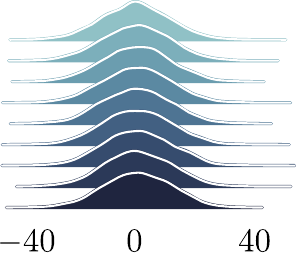} &
\includegraphics[height=2.58cm]{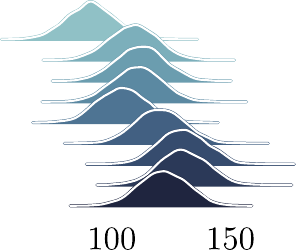}
\end{tabular}
\caption{Monte-Carlo estimates of $\rescaleddistsup{q}$, for $q=0,1,\nicefrac{1}{2}$, and $\shiftedrescaleddistsup{\nicefrac{1}{2}}$ as a function of $\omega=\nicefrac{1}{\timestep}$.}
\end{figure}
First (from the left), we see that $\distsup_\timestep$ collapses to $\delta_0$, as $\timestep$ tends to $0$. 
Thus, accurate action ranking distributional or otherwise becomes impossible in the vanishing $\timestep$ limit. 
Second, we see that rescaling by $\timestep$ produces distributions with $O(1)$ mean but infinite non-mean statistics in the vanishing $\timestep$ limit.
Here the $O(1)$ means are imperceptible in face of the large variances.
So while this rescaling permits ranking actions by action values, it does so at the expense of producing high-variance distributions.

Third, we see that rescaling by $\timestep^{\nicefrac{1}{2}}$ yields distributions with $O(1)$ non-mean statistics but vanishingly small means,  $O(\timestep^{\nicefrac{1}{2}})$.
Hence, this rescaling permits ranking actions by non-mean statistics, even if action values again becomes indistinguishable in the vanishing $\timestep$ limit.
That said, the vanishing rate of the means here is slower than when no rescaling is considered, $\timestep$.
Fourth, we see that rescaling by $\timestep^{\nicefrac{1}{2}}$ and then shifting it by $(1-\timestep^{\nicefrac{1}{2}})A^\pi_\timestep$ produces distributions with $O(1)$ mean and non-mean statistics.
In turn, this two-timescale approach permits ranking actions by either action values or non-mean statistics (but not both by Theorem~\ref{thm:rescaleddsuprank}).
However, the mean estimates here are inaccurate and imprecise---rather than uniformly being $100$, they oscillate substantially.

In risk-neutral control, we are left with a number of questions.
What effect do the high variance distributions in DAU/DSUP($1$) have on performance?
What effect do the $O(\timestep^{\nicefrac{1}{2}})$ means have on the performance of DSUP($\nicefrac{1}{2}$)?
What effect does the instability of the mean estimates in DAU+DSUP($\nicefrac{1}{2}$) have on performance? 
In Section \ref{s:experiments:options}, we begin to answer these questions and others by testing our superiority-based algorithms against appropriate benchmarks in an option-trading environment.

\subsection{High-Frequency Option Trading}\label{s:experiments:options}
The option-trading environment in which we run our comparative experiments is a commonly used benchmark (see, e.g., \cite{limDistributionalReinforcementLearning2022, kastnerDistributionalModelEquivalence2023}).
We use an Euler--Maruyama discretization scheme \cite{maruyama1955continuous} at high resolution to simulate high-frequency trading.
Returns are averaged over $10$ seeds and $10$ different dynamics models (corresponding to data from different stocks). 
Additionally, following \cite{limDistributionalReinforcementLearning2022}, we use disjoint datasets to estimate the dynamics parameters for simulation during training and evaluation.\footnote{\,Our code is available at \url{https://github.com/harwiltz/distributional-superiority}.}

First, we consider the risk-neutral setting.
Here we compare QR-DQN, DAU, and three algorithms based on the $q$-rescaled superiority distribution with $q = 1, \nicefrac{1}{2}$: DSUP($1$), DAU+DSUP($\nicefrac{1}{2}$), and DSUP($\nicefrac{1}{2}$).
Figure \ref{fig:lim-malik:mean-return} summarizes their performance at a sample of frequencies (Hz).

\begin{figure}[h]
    \centering
    \includegraphics[width=0.975\textwidth]{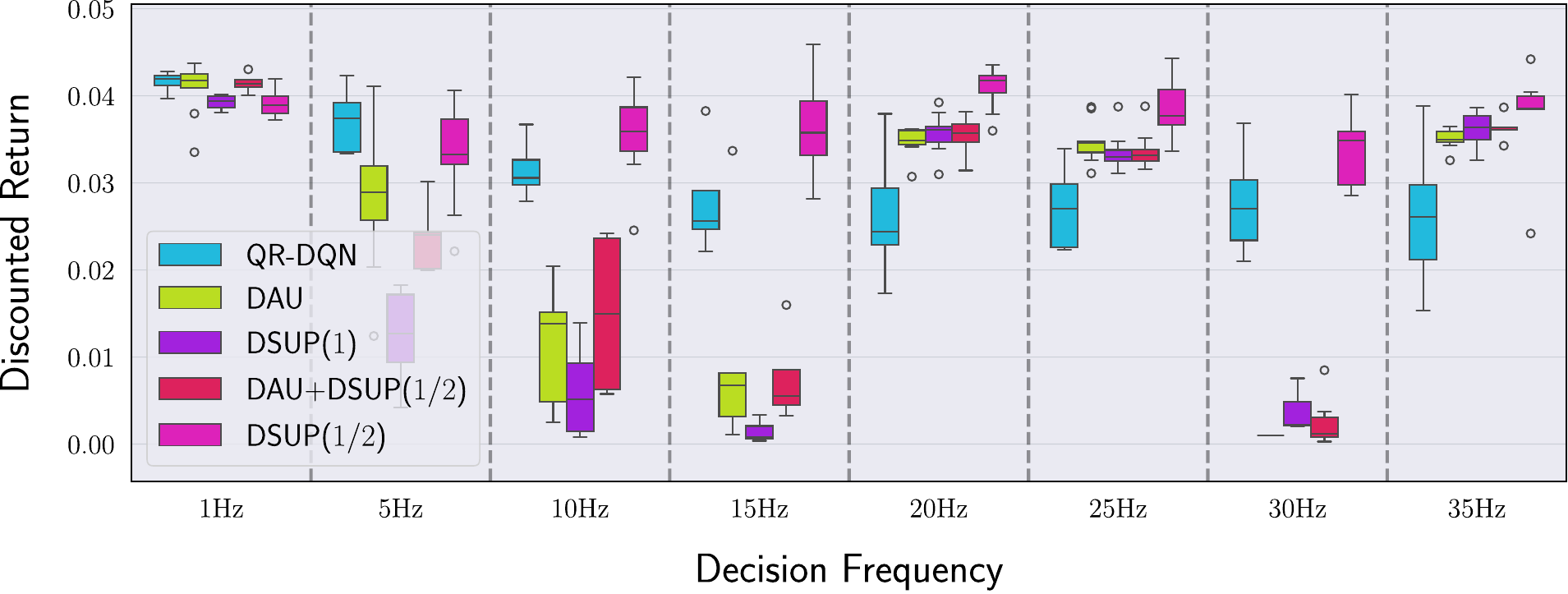}
    \caption{Risk-neutral algorithms on high-frequency option-trading as a function of $\omega$.}
    \label{fig:lim-malik:mean-return}
\end{figure}

We see that DSUP($\nicefrac{1}{2}$) is not only the most consistent performer, but outperforms every competitor at all but the two lowest frequencies.
Even then, its performance is very close to the best performer. 
We also see that DAU+DSUP($\nicefrac{1}{2}$)'s preservation of both action gaps and $W_p$ action gaps does not lead to the strongest performance.  
In particular, its performance is inconsistent and sometimes poor.
We believe this is because the tested frequencies are low enough that DSUP($\nicefrac{1}{2}$) maintains large enough action gaps to learn performant policies, but high enough that the variances of the distributions underlying $A^\pi_\timestep$ cause estimation difficulty. 
Indeed, the three methods that estimate $A^\pi_\timestep$ (explicitly in DAU and DAU+DSUP($\nicefrac{1}{2}$) or implicitly in DSUP($1$)) exhibit almost identical behavior.

Our results highlight a dichotomy in existing (ours included) methods for value-based, high-frequency, risk-neutral control.
They can either maintain $O(1)$ expected return estimates or $O(1)$ return variance estimates, but not both. 
We observe better performance in estimating small means from $O(1)$ variance distributions than in estimating $O(1)$ means from receipricolly large variance distributions.

To qualitatively illustrate the appeal of the $\nicefrac{1}{2}$-rescaled superiority, Figure \ref{fig:actiondists} presents examples of learned action-conditioned distributions used by DSUP($\nicefrac{1}{2}$) and QR-DQN agents to make decisions.

\begin{wrapfigure}{h}{0.585\textwidth}
\begin{center}
\includegraphics[height=2.95cm]{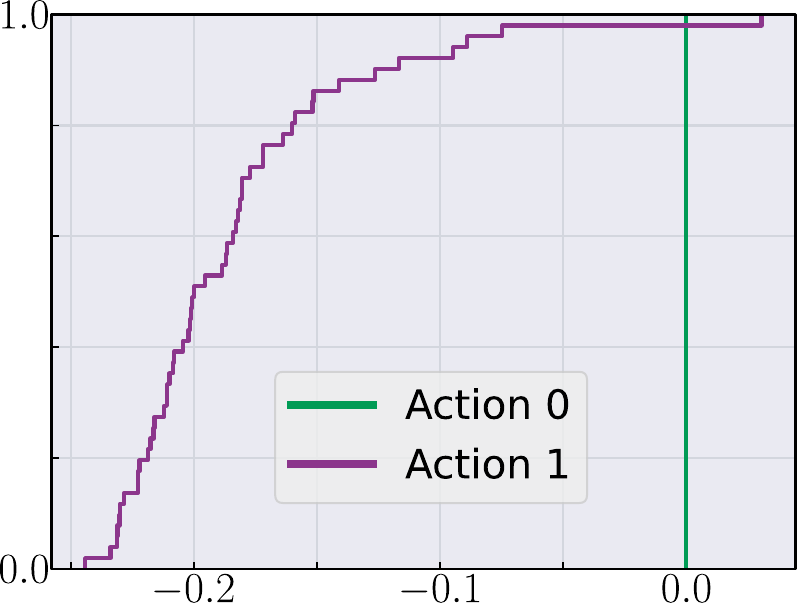}
\includegraphics[height=2.95cm]{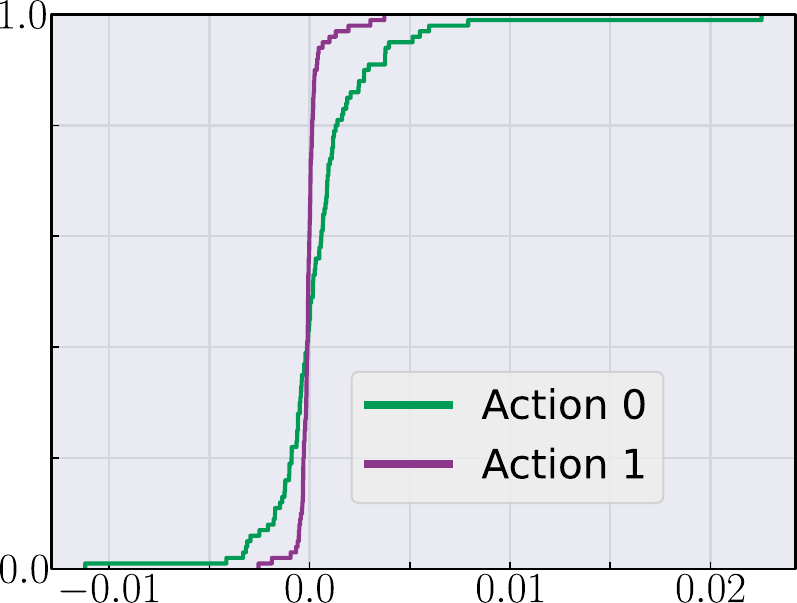}
\end{center}
\caption{CDFs of $\rescaleddistsup{\nicefrac{1}{2}}$ from DSUP($\nicefrac{1}{2}$) (left) and $\distretxa_\timestep$ from QR-DQN (right) at the start state at $\omega=35$Hz.}
\label{fig:actiondists}
\vspace{-0.5cm}
\end{wrapfigure}
In this environment, action $1$ taken in the start state terminates the episode, yielding the smallest return, $0$, making this action inferior to its alternative action, $ 1$. 
We see that DSUP($\nicefrac{1}{2}$) infers this fact.
QR-DQN, on the other hand, has difficulty distinguishing these actions. 
This is because the $\nicefrac{1}{2}$-rescaled superiority preserves $W_p$ action gaps, while $\distretxa_\timestep$ does not. 

Second, we consider a risk-sensitive setting.
Here we compare QR-DQN and DSUP($\nicefrac{1}{2}$) using $\alpha$-CVaR for greedy action selection.
We do this because Theorem \ref{thm:rescaleddsuprank} does not hold with $\shiftedrescaleddistsup{q}$, and preserving means is less critical in risk-sensitive control than it is in risk-neutral control.
Figure \ref{fig:lim-malik:cvar:35} depicts our results at $\omega=35\mathrm{Hz}$ (see Appendix \ref{app:additional-experiments} for results across a range of $\omega$).

\begin{figure}[h]
    \centering
    \includegraphics[width=0.975\textwidth]{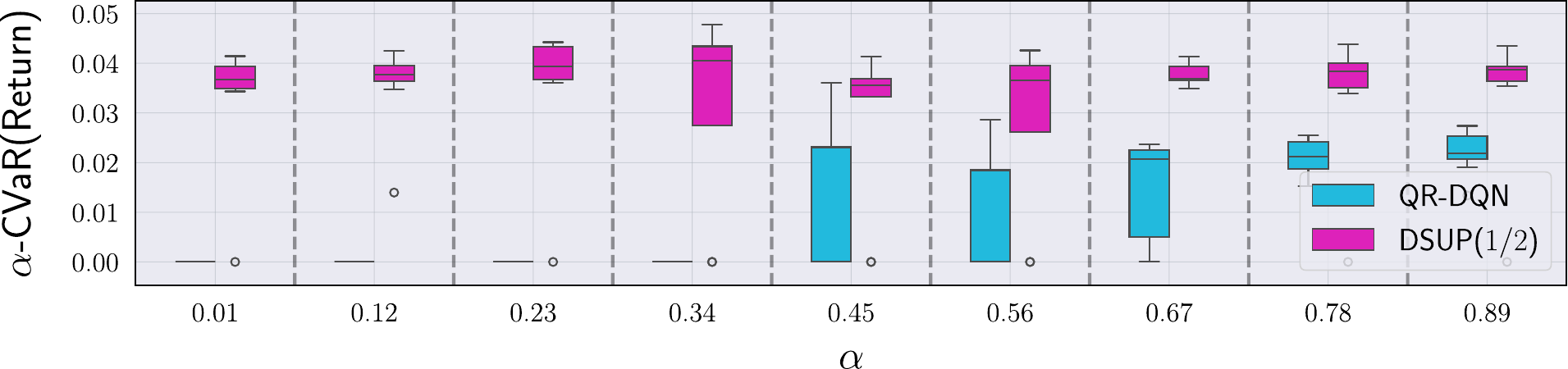}
    \vspace{-0.25cm}
    \caption{Risk-sensitive algorithms on high-frequency option-trading at $\omega=35$Hz.} 
    \label{fig:lim-malik:cvar:35}
\end{figure}

Again, we see that DSUP($\nicefrac{1}{2}$) is conclusively the best performer.

\section{Related Work}
\label{s:related-work}
Notions of action gap and ranking have long been of interest in RL (see, e.g.,  \cite{farahmand2011action}).
Action gaps are related to sample complexity in RL---indeed, instance-dependent sample complexity rates are inversely proportional to the divergence between action-conditioned return distributions (\cite{gravesAsymptoticallyEfficientAdaptive1997,lattimore2020bandit,wagenmakerInstanceOptimalityInteractiveDecision2023}).
Bellemare et al. \cite{bellemareIncreasingActionGap2015} argue for the consideration of alternatives to the Bellman operator that explicitly devalue suboptimal actions, and 
they show that Baird's AL \cite{bairdReinforcementLearningGradient1999} operator falls within this class of operators. 
On the other hand, Schaul et al. \cite{schaul2022phenomenon}
implicitly question Bellemare et al.'s position.
They demonstrate that stochastic gradient updates in deep value-based RL algorithms induce frequent changes in relative action values, which in turn is a mechanism for exploration.

The advantage function is commonplace in RL (see, e.g., \cite{wangDuelingNetworkArchitectures2016,schulmanProximalPolicyOptimization2017,panDirectAdvantageEstimation2022, tangVAlearningMoreEfficient2023,mesnard2023quantile}).
In \cite{mesnard2023quantile}, M\'esnard et al. employ a distributional critic that is closely related to our (unscaled) distributional superiority.
Their choice of critic stems from a desire to minimize variance.
We note that the distributional superiority is \emph{a posteriori} characterized as a minimal variance coupled difference representation of action-conditioned return distributions and policy-induced return distributions.

Lastly, DRL in continuous-time MDPs is in its infancy. 
There are only three works to mention.
Wiltzer et al. \cite{wiltzer2021evolution,wiltzerDistributionalHamiltonJacobiBellmanEquations2022} give a characterization of return distributions for policy evaluation, 
and Halperin \cite{halperin2024distributional} studies algorithms for control.
That said, neither work considers distributional notions of action gaps or advantages.
Moreover, Halperin does not consider any of the challenges of estimating the influence of actions in high decision frequency settings.

\section{Conclusion}
\label{s:conclusion}
We establish that DRL agents are sensitive to decision frequency through analysis and simulation.
In experiments, DSUP($\nicefrac{1}{2}$) learns well-performing policies across a range of high decision frequencies, unlike prior approaches.
DSUP($1$) and DAU+DSUP($\nicefrac{1}{2}$) are less robust.
Given our analysis, the performance of DSUP($1$) is expected. 
Building an alternate algorithm to DAU+DSUP($\nicefrac{1}{2}$) that is both tailored to risk-neutral control and robust to $\timestep$ is an important avenue for future work.

\acksection
\label{s:acknowledgements}
The authors are very grateful to Yunhao Tang for fruitful correspondence about distributional analogues to the advantage.
Additionally, we thank Mark Rowland, Jesse Farebrother, Tyler Kastner, Pierluca D'Oro, Nate Rahn, and Arnav Jain for helpful discussions.
HW was supported by the Fonds de Recherche du Qu\'ebec and the National Sciences and Engineering Research Council of Canada (NSERC).
MGB was supported by the Canada CIFAR AI Chair program and NSERC.
This work was supported in part by DARPA HR0011-23-9-0050 to PS.
YJ was supported by in part by NSF Grant 2243869.
This research was enabled in part by support provided by Calcul Qu\'ebec, the Digital Research Alliance of Canada (\texttt{alliancecan.ca}), and the compute resources provided by Mila (\texttt{mila.quebec}).

\bibliography{sources}
\bibliographystyle{plain}

\newpage
\appendix

\section{Formalism of Continuous-Time RL Controlled Markov Processes}
\label{app:controlled-processes}
Expected-value RL is a data-driven approach to solving the (classic) optimal control problem: find an action (control) process $(A_s)_{s\geq t}$ and an associated state process (then determined by the environment) $(X_s)_{s \geq t}$ with $X_t = x$, for a given $t \geq 0$, that maximize the expected return earned by following the state-action process $(X_s, A_s)_{s \geq t}$.
In particular, RL agents search the space of state-action processes via policies $\pi : \timespace \times \statespace \to \probset{\actionspace}$.
Policies prescribe the conditional probabilities of the laws of state-action processes.
Indeed, $(X_s, A_s)_{s \geq t}$ is the state-action process of an agent following $\pi$ if and only if, for each $s \geq t$, the set $\{ \pi(\cdot \mid s, x) \}_{x \in \statespace}$ is the set of conditional probabilities of $\law{(X_s, A_s})$ with respect to $\law{X_s}$.

Continuous-time RL is a data-driven approach to stochastic optimal control.
Whence, environmental dynamics are assumed to arise from an action-parameterized family of SDEs determined by a drift $b: \timespace \times \statespace \times \actionspace \to \R^n$ and diffusion $\volatility : \timespace \times \statespace \times \actionspace  \to \R^{n \times n}$.\footnote{\,Without loss of generality, as the the laws of our stochastic processes are the objects of interest, we may assume that our diffusion matrix is square. Indeed, for any SDE with $n \times m$-dimensional diffusion $\volatility$, the matrix $(\volatility \volatility^\top)^{\nicefrac{1}{2}}$, which is uniquely defined by $\volatility \volatility^\top$, and not $\volatility$ is the diffusion matrix that determines the law of the solution to that SDE.}
Thus, the goal of expected-value RL (and stochastic optimal control) is to find an expected-return maximizing state-action process among state-action processes $(X_s, A_s)_{s \geq t}$ that satisfy
\begin{equation}
\label{eq:ito:random}
\mathrm{d} X_s = \drift(s, X_s, A_s)\,\mathrm{d}s + \volatility(s, X_s, A_s)\,\mathrm{d}B_s \quad\text{with}\quad X_t = x.
\end{equation}

We note that the MDPs defined in Section \ref{s:background} have equivalent formulations in terms of transition kernel, exactly as they formulated in discrete-time RL.
We refer the reader to \cite{stroockvaradhan2006multdimdiffprocess} for an in-depth discussion regarding this fact.
\subsection{Justification of Random Returns} 
\label{app:controlled-processes:just}
Given the above formalism, the ``true'' distribution of returns of an agent following a policy $\pi$ is the law of
\[
\int_t^T \gamma^{s-t} r(s, X_s) \, \mathrm{d}s + \gamma^{T-t}\termrew(X_T)
\]
where $(X_s, A_s)_{s \geq t}$ is a state-action process associated to $\pi$ and solves \eqref{eq:ito:random}.
That said, by the definition of $\pi$,
\[
\drift^\pi(s,X_s) = \mathbf{E}[\drift(s, X_s, A_s) \mid X_s ] \quad\text{and}\quad \volatility^\pi(\volatility^\pi)^\top(s,X_s) = \mathbf{E}[\volatility\volatility^\top(s, X_s, A_s) \mid X_s ],
\]
where $\drift^\pi$ and $\volatility^\pi$ are exactly the coefficients defined in \eqref{eq:ito:averaged}.
Hence, by \cite{brunickMimickingItoProcess2013}, provided that $\drift^\pi$ and $\volatility^\pi$ are regular enough to guarantee that \eqref{eq:ito:policy:averaged} is well-posed in law, the processes $X^\pi = (X^\pi_s)_{s\geq t}$ and $X = (X_s)_{s\geq t}$ are equal in law.\footnote{\,The assumptions under which we work guarantee that $X^\pi \eqlaw X$.}
Here $(X^\pi_s)_{s \geq t}$ satisfies \eqref{eq:ito:policy:averaged} with $X^\pi_t = x$.
Consequently,
\[
\randretx(t, x) \eqlaw \int_t^T \gamma^{s-t} r(s, X_s) \, \mathrm{d}s + \gamma^{T-t}\termrew(X_T).
\]
This, formally, justifies $\randretx$ and $\randretxa_\timestep$, as defined in \eqref{eq:randretx} and \eqref{eq:randretxa:h}.

\subsection{On Assumption \ref{a:ito:averaged}}
\label{s:averagecoeffassumption}
Here we provide some conditions under which Assumption \ref{a:ito:averaged} is established.
These conditions are presented as additional assumptions.
Assumptions \ref{a:ito:bounded} and \ref{a:ito:ellipticity} are common in  stochastic control theory (see, e.g., \cite{fleming2006controlled}) and SDE theory in general (see, e.g., \cite{oksendal2013stochastic}).
Assumption \ref{a:ito:tv} is ubiquitous in the continuous-time RL literature \cite{jiaPolicyGradientActorCritic2022,jiaQLearningContinuousTime2023,zhao2024policy}.
Notably, these conditions together guarantee the existence of transition probabilities for policy-induced state processes arising from \eqref{eq:ito:policy:averaged}.
\begin{assumption}
\label{a:ito:bounded}
The coefficients $\drift$ and $\volatility$ are uniformly bounded: a finite, positive constant $C_{\ref{a:ito:bounded}}$ exists such that
\begin{align*}
\sup_{t,x,a}|\drift(t,x, a)| + \sup_{t,x,a}|\volatility(t,x, a)|\leq C_{\ref{a:ito:bounded}}.
\end{align*}
\end{assumption}
\begin{assumption}
\label{a:ito:ellipticity}
The matrix $\volatility \volatility^\top$ is uniformly elliptic: a positive, finite constant $\lambda_{\ref{a:ito:ellipticity}}$ exists such that 
\[
\inf_{t,x,a} \inf_{| v | = 1 } v^\top \volatility\volatility^\top(t,x,a)v \geq \lambda_{\ref{a:ito:ellipticity}}^2 I.
\]
\end{assumption}
A consequence of Assumption \ref{a:ito:ellipticity} is
\begin{equation}
\label{eq:ito:ellipticity}
\inf_{t,x} \ \inf_{| v | = 1 }  v^\top \volatility^\pi(t, x) v \geq \lambda_{\ref{a:ito:ellipticity}} I.
\end{equation}
In other words, $\volatility^\pi$ is also uniformly elliptic.
\begin{assumption}
\label{a:ito:tv}
A finite, positive constant $C_{\ref{a:ito:tv}}$ exists for which
\[
\sup_{t} \mathrm{TV}(\pi(\cdot \mid t, x), \pi( \cdot \mid t, y)) \leq C_{\ref{a:ito:tv}} |x-y| \quad \forall x, y\in\statespace,
\]
where $\mathrm{TV}$ is the total variation metric on $\probset{\actionspace}$.
\end{assumption}

Observe if $\pi$ satisfies Assumption \ref{a:ito:tv}, then $\pi|_{h,a,t}$ also satisfies Assumption \ref{a:ito:tv}.
Indeed, 
\[
\mathrm{TV}(\pi|_{h,a,t}(\cdot \mid s, x), \pi|_{h,a,t}( \cdot \mid s, y)) \leq \mathrm{TV}(\pi(\cdot \mid s, x), \pi( \cdot \mid s, y)) 
\]
since
\[
\sup_{s\in [t, t + \timestep)} \mathrm{TV}(\pi|_{h,a,t}(\cdot \mid s, x), \pi|_{h,a,t}( \cdot \mid s, y)) = 0
\]
and $\pi|_{h,a,t}(\cdot \mid s, x) = \pi(\cdot \mid s, x)$ for all $s \in \timespace \setminus [t, t + \timestep)$.

\begin{proposition}
\label{prop:lipschitz-coeffs:ellipticity}
If Assumptions \ref{a:ito:action}, \ref{a:ito:bounded}, and \ref{a:ito:ellipticity} hold and $\pi$ satisfies Assumption \ref{a:ito:tv}, then Assumption \ref{a:ito:averaged} holds.
\end{proposition}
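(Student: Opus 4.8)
The goal is to deduce Assumption \ref{a:ito:averaged}---Lipschitz continuity of $\drift^\pi$ and $\volatility^\pi$ in state, uniformly in time---from the hypotheses. The plan is to treat $\drift^\pi$ and $\volatility^\pi$ separately, though the arguments are parallel. For the drift, recall $\drift^\pi(t,x) = \int_\actionspace \drift(t,x,a)\,\pi(\mathrm{d}a\mid t,x)$. I would add and subtract to split the difference $\drift^\pi(t,x) - \drift^\pi(t,y)$ into two pieces: one where the integrand varies but the measure is fixed, namely $\int_\actionspace (\drift(t,x,a) - \drift(t,y,a))\,\pi(\mathrm{d}a\mid t,x)$, and one where the integrand is fixed at $\drift(t,y,\cdot)$ but the measure varies, namely $\int_\actionspace \drift(t,y,a)\,(\pi(\mathrm{d}a\mid t,x) - \pi(\mathrm{d}a\mid t,y))$. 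The first piece is bounded by $C_{\ref{a:ito:action}}|x-y|$ using the state-Lipschitz bound in Assumption \ref{a:ito:action} and the fact that $\pi(\cdot\mid t,x)$ is a probability measure. The second piece is bounded by $\sup_{a}|\drift(t,y,a)| \cdot \mathrm{TV}(\pi(\cdot\mid t,x),\pi(\cdot\mid t,y))$---more carefully, by $2C_{\ref{a:ito:bounded}}\,\mathrm{TV}(\cdots)$ using the standard bound $|\int f\,\mathrm{d}(\mu-\nu)| \leq 2\|f\|_\infty \mathrm{TV}(\mu,\nu)$---which is at most $2C_{\ref{a:ito:bounded}} C_{\ref{a:ito:tv}}|x-y|$ by Assumptions \ref{a:ito:bounded} and \ref{a:ito:tv}. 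Summing, $\drift^\pi$ is Lipschitz in state with constant $C_{\ref{a:ito:action}} + 2C_{\ref{a:ito:bounded}}C_{\ref{a:ito:tv}}$, uniformly in $t$.

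For the diffusion, the analogous quantity is $\volatility^\pi(t,x) = \bigl(\int_\actionspace \volatility\volatility^\top(t,x,a)\,\pi(\mathrm{d}a\mid t,x)\bigr)^{\nicefrac{1}{2}}$, so the matrix square root introduces the extra difficulty. First I would establish that the ``pre-square-root'' map $M^\pi(t,x) := \int_\actionspace \volatility\volatility^\top(t,x,a)\,\pi(\mathrm{d}a\mid t,x)$ is Lipschitz in state uniformly in time, by exactly the same two-piece decomposition: the integrand-varying piece is controlled using that $\volatility\volatility^\top$ is state-Lipschitz (which follows from $\volatility$ being state-Lipschitz and uniformly bounded, via $|\volatility\volatility^\top(t,x,a) - \volatility\volatility^\top(t,y,a)| \leq 2C_{\ref{a:ito:bounded}}C_{\ref{a:ito:action}}|x-y|$), and the measure-varying piece is controlled via $\|\volatility\volatility^\top\|_\infty \leq C_{\ref{a:ito:bounded}}^2$ together with Assumption \ref{a:ito:tv}. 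Then I would invoke the standard fact that the matrix square root is Lipschitz on the set of symmetric positive-semidefinite matrices that are uniformly elliptic (bounded below by $\lambda_{\ref{a:ito:ellipticity}}^2 I$): for such matrices, $|A^{\nicefrac12} - B^{\nicefrac12}| \leq \frac{1}{2\lambda_{\ref{a:ito:ellipticity}}}|A - B|$. Here Assumption \ref{a:ito:ellipticity} is exactly what guarantees that $M^\pi(t,x) \succeq \lambda_{\ref{a:ito:ellipticity}}^2 I$ for all $t,x$ (the consequence \eqref{eq:ito:ellipticity} / the inequality preceding it, which follows by averaging the quadratic form $v^\top\volatility\volatility^\top(t,x,a)v \geq \lambda_{\ref{a:ito:ellipticity}}^2$ over $\pi$). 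Composing the Lipschitz bound on $M^\pi$ with the Lipschitz bound on the square root yields the desired bound on $\volatility^\pi$.

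I would then combine the two bounds, take the maximum of the two Lipschitz constants (or their sum), and call it $C_{\ref{a:ito:averaged}}$; since every bound obtained was uniform in $t$, this is exactly the statement of Assumption \ref{a:ito:averaged}. I should also remark that the boundedness of $\drift$ and $\volatility$ (Assumption \ref{a:ito:bounded}) is used only in the measure-varying pieces and in passing from $\volatility$ to $\volatility\volatility^\top$; the linear-growth-plus-Lipschitz Assumption \ref{a:ito:action} handles the integrand-varying pieces.

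\textbf{Main obstacle.} The only non-routine ingredient is the Lipschitz estimate for the matrix square root under uniform ellipticity, i.e.\ $|A^{\nicefrac12}-B^{\nicefrac12}| \le (2\lambda_{\ref{a:ito:ellipticity}})^{-1}|A-B|$ for symmetric $A,B \succeq \lambda_{\ref{a:ito:ellipticity}}^2 I$. This is a known perturbation fact (it fails without the ellipticity lower bound, since $t\mapsto\sqrt t$ has unbounded derivative at $0$), but I would either cite it from a standard reference on matrix analysis or SDE theory, or give a short self-contained proof via the integral representation $A^{\nicefrac12}-B^{\nicefrac12} = \frac1\pi\int_0^\infty s^{-\nicefrac12}\bigl((A+s)^{-1} - (B+s)^{-1}\bigr)\,\mathrm{d}s\cdot(\text{something})$, or more simply by noting that $(A^{\nicefrac12}-B^{\nicefrac12})$ solves the Sylvester equation $A^{\nicefrac12}(A^{\nicefrac12}-B^{\nicefrac12}) + (A^{\nicefrac12}-B^{\nicefrac12})B^{\nicefrac12} = A - B$ and bounding the solution using that both $A^{\nicefrac12}$ and $B^{\nicefrac12}$ have spectrum bounded below by $\lambda_{\ref{a:ito:ellipticity}}$. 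Everything else is bookkeeping with the triangle inequality, the total-variation bound on integrals, and the hypotheses.
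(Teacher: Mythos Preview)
Your proposal is correct and essentially identical to the paper's proof. The two-piece decomposition for $\drift^\pi$ matches exactly, and for $\volatility^\pi$ the paper carries out precisely the Sylvester-equation bound you suggest: it picks a unit eigenvector $v$ of $\volatility^\pi(t,x)-\volatility^\pi(t,y)$, uses the identity $A-B = A^{1/2}(A^{1/2}-B^{1/2}) + (A^{1/2}-B^{1/2})B^{1/2}$ in quadratic form to get $|\lambda| \le (2\lambda_{\ref{a:ito:ellipticity}})^{-1}|v^\top(M^\pi(t,x)-M^\pi(t,y))v|$, and then bounds $M^\pi(t,x)-M^\pi(t,y)$ via the same two-piece split.
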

\begin{proof}
Observe that
\[
\begin{split}
|\drift^\pi(t, x) - \drift^\pi(t, y) |& \leq \left|\int (\drift(t, x, a) - \drift(t, y, a) ) \, \pi( \mathrm{d}a \mid t, x) \right|
\\
&\hspace{1.0cm}+ \left|\int \drift(t, y, a)\, \pi( \mathrm{d}a \mid t, x) - \int \drift(t, y, a)  \, \pi( \mathrm{d}a \mid t, y)   \right|
\\
&\leq (C_{\ref{a:ito:action}} + 2C_{\ref{a:ito:bounded}} C_{\ref{a:ito:tv}} )|x-y|,
\end{split}
\]
by Assumptions \ref{a:ito:action}, \ref{a:ito:bounded}, and \ref{a:ito:tv}.
Here we have also used Kantorovich duality to computed $\mathrm{TV}$ and invoke Assumption \ref{a:ito:tv}.

Let $\lambda$ be an eigenvalue of $\volatility^\pi(t, x) - \volatility^\pi(t, y)$ with with unit eigenvector $v$.
Observe that
\[
\begin{split}
v^\top(\volatility^\pi(t, x)^2 - \volatility^\pi(t, y)^2) v &= v^\top(\volatility^\pi(t, x) - \volatility^\pi(t, y))\volatility^\pi(t, x) + 
\volatility^\pi(t, y)(\volatility^\pi(t, x) - \volatility^\pi(t, y))v 
\\
&= \lambda v^\top (\volatility^\pi(t, x) + \volatility^\pi(t, x)) v
\end{split}
\]
Hence, by \eqref{eq:ito:ellipticity},
\[
|\lambda| = \frac{|v^\top(\volatility^\pi(t, x)^2 - \volatility^\pi(t, y)^2) v|}{v^\top (\volatility^\pi(t, x) + \volatility^\pi(t, x)) v} \leq \frac{1}{2 \lambda_{\ref{a:ito:ellipticity}}}|v^\top(\volatility^\pi(t, x)^2 - \volatility^\pi(t, y)^2) v|.
\]
In turn, by Assumptions \ref{a:ito:action}, \ref{a:ito:bounded}, and \ref{a:ito:tv}, as done to prove that $\drift^\pi$ was Lipschitz above,
\[
|\lambda| \leq \frac{1}{ \lambda_{\ref{a:ito:ellipticity}}} (C_{\ref{a:ito:action}} +  C_{\ref{a:ito:bounded}}C_{\ref{a:ito:tv}})|x-y|.
\] 
Assumption \ref{a:ito:averaged} follows, since $\lambda$ was an arbitrary eigenvalue and all norms on finite dimensional spaces are equivalent.
\end{proof}

We conclude this section with one final fact: under Assumption \ref{a:ito:action}, the policy-averaged coefficient \eqref{eq:ito:averaged} have linear growth.
Indeed,
\[
|\drift^\pi(t,x)| \leq \int |\drift(t,x,a)| \, \pi(\mathrm{d}a \mid t, x) \leq C_{\ref{a:ito:action}}(1+|x|) 
\]
and
\[
|\volatility^\pi(t,x)|^2 \leq \int |\volatility(t,x,a)|^2 \, \pi(\mathrm{d}a \mid t, x) \leq C_{\ref{a:ito:action}}^2(1+|x|)^2.
\]

\subsection{Action-Independent Rewards}
\label{app:controlled-processes:rewards}
In this paper, we assume that the rewards do not depend on actions.
This is a theoretical limitation of not just our work, but continuous-time DRL in general (see \cite{wiltzerDistributionalHamiltonJacobiBellmanEquations2022,halperin2024distributional}).
In the following sections, we discuss the nature of this theoretical limitation.
However, many MDPs have action-independent reward functions.
For example, MDPs encoding goal-reaching problems, tracking problems, and commodity-trading problems all have action-independent rewards.  

\subsubsection{Continuous-Time, Expected Return}
\label{ss:cts-time,exp-ret}
In continuous-time, expected-value RL, when the reward function $r$ depends on actions, the standard approach to analysis involves considering the averaged reward function $r^\pi:\timespace\times\statespace\to\R$ given by
\[
r^\pi(t, x) := \int_{\actionspace}r(t, x, a)\, \pi(\mathrm{d}a\mid t, x).
\]
In continuous-time RL specifically, the averaged reward $r^\pi$ is justified exactly as the coefficients $\drift^\pi$ and $\volatility^\pi$ are justified.
However, the ``true'' return distribution is not equal to the law of
\begin{equation}
\label{eq:action-conditioned-reward:rpi:fail}
\int_t^T\gamma^{s - t}r^\pi(s, X^\pi_s) \, \mathrm{d}s + \gamma^{T-t}\termrew(X^\pi_T).
\end{equation}
To see this, it suffices to consider an MDP with a single state $x$. 
In this case, the expression in \eqref{eq:action-conditioned-reward:rpi:fail} is deterministic.
However, if $\pi$ is nondeterministic and $r$ is dependent on actions, then the ``true'' return distribution is nondeterministic.
Hence, the law of the expression in \eqref{eq:action-conditioned-reward:rpi:fail} cannot be the ``true'' return distribution associated to $\pi$.

\subsubsection{Discrete-Time, Random Return}
\label{ss:diss-time,rand-ret}
In discrete-time RL, the distribution of returns given an action-dependent reward is analyzed through the state-action process induced by a policy.
This processes is defined by extending the action-parameterized family of transition probability kernels on $\statespace$, which define the dynamics of a given MDP, to a single transition probability kernel on $\statespace \times \actionspace$.
In the time-homogeneous setting, for instance, with transition kernels $\{ P(\mathrm{d}y \mid x, a)\}_{a \in \actionspace}$, this amounts to constructing
\begin{equation*}
\label{eq:stateactionprocessdisctime}
P^\pi(\mathrm{d}y\mathrm{d}b \mid x, a) := \pi(\mathrm{d}b \mid y) \otimes P(\mathrm{d}y \mid x , a), 
\end{equation*}
provided that map $y \mapsto \pi(E \mid y)$ is measurable for all $y \in \statespace$.
In continuous-time environments, such a constructing has yet to be discovered.

We note that trying to analogously extend the action-parameterized family of transition semigroups on $\statespace$, which define the dynamics of a given time-homogeneous MDP in continuous time, to a single transition semigroup on $\statespace \times \actionspace$ by defining
\[
P^\pi_t(\mathrm{d}y\mathrm{d}b \mid x, a)  :=  \pi(\mathrm{d}b \mid y) \otimes P_t(\mathrm{d}y \mid x, a),
\]
where $P_t(\mathrm{d}y \mid x)$ is a transition semigroup, may fail to satisfy the Chapman--Kolmogorov identity.
Indeed, suppose $\actionspace$ has two elements and $\statespace = \R$.
Let $\pi(\mathrm{d}a \mid x)$ be the uniform measure on $\actionspace$ for all $x \in \statespace$.
If $P_t(\mathrm{d}y \mid x, a_\delta) = \delta_{x + t}(\mathrm{d}y)$ and $P_t(\mathrm{d}y \mid x, a_{\rm g}) = (2\pi)^{-{\nicefrac{1}{2}}} \exp(-|y-x|^2/2t) \, \mathrm{d}y$, then Chapman--Kolmogorov identity fails, for example, on any tuple $(s, t, x, a_\delta, E \times F)$ where $E \subset \statespace$ is open, $x + t +s \notin E$, and $\pi(F) \neq 0$.
On one hand,
\[
\begin{split}
P^\pi_{t+s}(E \times F \mid x, a_\delta) 
&= \pi(F)\delta_{x + t + s}(E) = 0.
\end{split}
\] 
On the other hand,
\[
\begin{split}
\int P_s^\pi(E \times F \mid y, a)\, P^\pi_t( \mathrm{d}y \mathrm{d}a \mid x, a_\delta) 
&= \frac{\pi(F)}{2}(P_s(E \mid x+ t, a_{\rm g}) 
+ \delta_{x + t + s}(E)) > 0.
\end{split}
\]
So
\[
P^\pi_{t+s}(E \times F \mid x, a_\delta) \neq \int P^\pi_s(E \times F \mid y, a)\, P^\pi_t( \mathrm{d}y \mathrm{d}a \mid x, a_\delta).  
\]

At present, the question of how to generate a well-defined (even in law) state-action process in any continuous-time MDP framework given a stochastic policy is generally open.
Of course, if $\pi$ is deterministic, then the state-action process is $(X_s, \pi(s, X_s))_{s \geq t}$. If $\drift$ and $\volatility$ are Lipschitz in state and action, uniformly in time, and $\pi$ is Lipschitz in state, uniformly in time, then  \eqref{eq:ito:random} with $A_s = \pi(s, X_s)$ is well-posed.

\section{Proofs}
\label{app:proofs}
\subsection{The Distributional Action Gap}
\label{app:proofs:distactgap}
In this section, we prove the statements made in Section \ref{s:distributional-action-gap}.

Before proving any of the statements made in Section \ref{s:distributional-action-gap}, we recall an identity that relates the $W_p$ distance between two probability measures $\mu$ and $\nu$ and the absolute central $p$th moments of the differences of random variables distributed according to $\mu$ and $\nu$:
\begin{equation}
\label{eq:Wp=pthmoment}    
W_p(\mu, \nu)^p = \inf_{(X,Y)} \{ \mathbf{E}[|X - Y|^p]: \law{X} = \mu \text{ and } \law{Y} = \nu \}.
\end{equation}
This identity will be used a number of times, including in the proof of Section \ref{s:distributional-action-gap}'s first result, which we restate here for the readers convenience.
\distgapactiongap*
\begin{proof}
Let $(Z_1,Z_2)$ be any random vector with such that $\law{Z_i} = \distretxa_\timestep(t,x,a_i)$ for $i = 1, 2$.
Then, by Jensen's inequality,
\[
\mathbf{E}[|Z_1 - Z_2|^p]^{1/p} \geq \mathbf{E}[|Z_1 - Z_2|] \geq |\mathbf{E}[Z_1 - Z_2]| = |Q^\pi_\timestep(t,x,a_1) - Q^\pi_\timestep(t,x,a_2)|.
\]
Hence, since $(Z_1,Z_2)$ was arbitrary, by \eqref{eq:Wp=pthmoment},
\[
W_p(\distretxa_\timestep(t,x,a_1), \distretxa_\timestep(t,x,a_2)) \geq |Q^\pi_\timestep(t,x,a_1) - Q^\pi_\timestep(t,x,a_2)|.
\]
Finally, taking the minimum over pairs of actions $(a_1, a_2)$ such that $a_1 \neq a_2$ concludes the proof. 
\end{proof}

Now we move on to the proofs of Theorems \ref{thm:vanishingwassersteingap} and \ref{thm:wassersteinactiongapdecay}.
We defer the proof of Theorem \ref{thm:witnesslargegap} until after the proof of Theorem \ref{thm:wassersteinactiongapdecay} as the proofs of Theorems \ref{thm:vanishingwassersteingap} and \ref{thm:wassersteinactiongapdecay} are similar.
For clarity's sake, we first prove a collection of lemmas. 
\begin{lemma}
\label{lem:action:gronwall}
Under Assumption \ref{a:ito:action}, let $(X_s^a)_{s \geq t}$ be the unique strong solution to \eqref{a:ito:action} with $X_t^a = x$ $\mathbf{P}$-a.s.
Then, for all $q \geq 1$ and for all $s \geq t$,
\[
\mathbf{E}[|X_s^a - x|^{2q}] \leq C_qC_{ \ref{a:ito:action}}^{2q}(1+|x|)^{2q}((s-t)^q+1)(s-t)^{q}e^{C_qC_{\ref{a:ito:action}}^{2q}((s-t)^q+1)(s-t)^{q}}
\]
where $C_q = 4^{2q-1}$.
\end{lemma}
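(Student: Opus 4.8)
The plan is to derive a closed integral inequality for $g(s) := \mathbf{E}[|X_s^a - x|^{2q}]$ and then run a Gr\"onwall argument. Writing the SDE \eqref{eq:ito:action} in integral form, $X_s^a - x = \int_t^s \drift(u, X_u^a, a)\,\mathrm{d}u + \int_t^s \volatility(u, X_u^a, a)\,\mathrm{d}B_u$, I would first apply the elementary convexity bound $|A + B|^{2q} \le 2^{2q-1}(|A|^{2q} + |B|^{2q})$ to split $|X_s^a - x|^{2q}$ into a drift contribution and a diffusion contribution, and take expectations.

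For the drift contribution, H\"older's inequality in time on $[t,s]$ gives $|\int_t^s \drift\,\mathrm{d}u|^{2q} \le (s-t)^{2q-1}\int_t^s |\drift(u, X_u^a, a)|^{2q}\,\mathrm{d}u$; the linear growth bound in Assumption \ref{a:ito:action} yields $|\drift(u, X_u^a, a)|^{2q} \le C_{\ref{a:ito:action}}^{2q}(1+|X_u^a|)^{2q}$, and a second application of the convexity bound to $1 + |X_u^a| \le (1+|x|) + |X_u^a - x|$ produces the factor $2^{2q-1}$ that, combined with the first, accounts for the constant $C_q = 4^{2q-1}$; this leaves $\mathbf{E}[(1+|X_u^a|)^{2q}] \le 2^{2q-1}((1+|x|)^{2q} + g(u))$. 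For the diffusion contribution I would invoke the Burkholder--Davis--Gundy inequality to bound $\mathbf{E}|\int_t^s \volatility\,\mathrm{d}B_u|^{2q}$ by a constant times $\mathbf{E}[(\int_t^s |\volatility(u, X_u^a, a)|^2\,\mathrm{d}u)^q]$, apply H\"older in time once more (now with exponent $q$) and the same linear-growth/convexity reduction. Collecting everything produces an inequality of the shape
\[
g(s) \le C_q C_{\ref{a:ito:action}}^{2q}\big((s-t)^{2q-1} + (s-t)^{q-1}\big)\Big((s-t)(1+|x|)^{2q} + \int_t^s g(u)\,\mathrm{d}u\Big),
\]
valid for all $s \ge t$ (the Burkholder--Davis--Gundy constant and the lower-order powers of $s-t$ being absorbed into $C_q$ and into the $+1$ in $(s-t)^q + 1$).

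The one genuinely delicate point is the Gr\"onwall step, because the coefficient multiplying $\int_t^s g$ depends on the terminal time $s - t$ rather than being a true constant, so the classical lemma does not apply verbatim. I would fix $s$, observe that $r \mapsto (r-t)^{2q-1} + (r-t)^{q-1}$ is non-decreasing for $q \ge 1$, hence bound this coefficient on $[t,s]$ by its value at $s$, and then apply the integrating-factor form of Gr\"onwall to $h(r) := \int_t^r g(u)\,\mathrm{d}u$ (which then satisfies $h'(r) \le \mathrm{const} + \mathrm{const}\cdot h(r)$ with constants independent of $r$). Reading off the constants and powers of $s-t$ in the resulting bound, and using $(s-t)^{2q-1} \le ((s-t)^q + 1)(s-t)^q$ and $(s-t)^{q-1}\cdot(s-t) \le ((s-t)^q + 1)(s-t)^q$, delivers exactly the stated estimate.

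Two routine points remain. To justify the use of Burkholder--Davis--Gundy and the manipulations of the stochastic integral one first wants $g(s) < \infty$ a priori; this follows from a standard localization argument (truncate at $\tau_N := \inf\{u \ge t : |X_u^a| \ge N\}$, derive the inequality above for $g^N(s) := \mathbf{E}[|X_{s\wedge\tau_N}^a - x|^{2q}]$, Gr\"onwall to a bound uniform in $N$, then $N\to\infty$ by Fatou). Existence and uniqueness of the strong solution $(X_s^a)_{s\ge t}$ under Assumption \ref{a:ito:action} is classical. I expect the only real obstacle to be the bookkeeping in the Gr\"onwall step with a time-dependent coefficient; the rest is a routine moment estimate.
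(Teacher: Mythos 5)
Your proof is correct and follows essentially the same route as the paper's: integral form of the SDE, a convexity/H\"older splitting that produces the factor $4^{2q-1}$, a moment bound on the stochastic integral, and Gr\"onwall --- you merely extract the two factors of $2^{2q-1}$ via linear growth at $X_u^a$ plus the triangle inequality, where the paper freezes the coefficients at $x$ (linear growth) and bounds the increment with the Lipschitz hypothesis. Your explicit appeal to Burkholder--Davis--Gundy, your treatment of the $s$-dependent Gr\"onwall coefficient by monotonicity, and your localization argument are all more careful than the paper's terse invocation of ``It\^o's isometry'' and ``Gr\"onwall'' (the isometry is only literal for $q=1$); the one consequence is that for $q>1$ the BDG constant should strictly be absorbed into $C_q$, an imprecision already present in the paper's stated constant and harmless downstream, where the lemma is only used up to unspecified constants.
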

\begin{proof}
Let $C_q = 4^{2q-1}$.
By Jensen's inequality and It\^{o}'s isometry, observe that
\[
\begin{split}
|X_s^a - x|^{2q} 
&\leq C_q(s-t)^{2q-1}\int_t^s| \drift(s', x, a)|^{2q} \, \mathrm{d}s' + C_q(s-t)^{q-1}\int_t^s |\volatility(s', x, a)|^{2q} \, \mathrm{d}s' 
\\
&\hspace{1.0cm}+ C_q(s-t)^{2q-1}\int_t^s |\drift(s', X_{s'}^a, a) - \drift(s', x, a)|^{2q} \, \mathrm{d}s' 
\\
&\hspace{1.0cm}+ C_q(s-t)^{q-1}\int_t^s |\volatility(s', X_{s'}^a, a) - \volatility(s', x, a)|^{2q} \, \mathrm{d}s'
\\
&\leq C_q((s-t)^{2q} + (s-t)^q ) C_{\ref{a:ito:action}}^{2q}(1+|x|)^{2q} 
\\
&\hspace{1.0cm}+ C_q((s-t)^{2q-1} + (s-t)^{q-1} )C_{\ref{a:ito:action}}^{2q} \int_t^s  |X_{s'}^a -  x|^{2q} \, \mathrm{d}s'. 
\end{split}
\]
Thus, the lemma follows after taking expectation and applying Gronwall's inequality.
\end{proof}

\begin{lemma}
\label{lem:shorttimeGronwall}
Under Assumptions \ref{a:ito:action} and \ref{a:ito:averaged}, let $(X^\bullet_s)_{s \geq t}$ with $\bullet \in \{ \pi, \pi|_{\timestep,a,t} \}$ be the unique strong solution to \eqref{eq:ito:policy:averaged} with $X^\bullet_t = x$ $\mathbf{P}$-a.s.
Then, for all $s \leq t + \timestep$,
\[
\mathbf{E}[|X^{\pi|_{\timestep,a,t}}_s - X^\pi_s|^{2q} ] \leq C(1+|x|)^{2q}((s-t)^q + 1)(s-t)^qe^{C((s-t)^q+1)(s-t)^q} 
\]
where $C$ is some finite positive constant depending on $q$, $C_{\ref{a:ito:action}}$, and $C_{\ref{a:ito:averaged}}$.
\end{lemma}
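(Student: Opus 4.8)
The plan is to follow the proof of Lemma~\ref{lem:action:gronwall}, but run the Gronwall argument on the difference process. Set $Z_s := X^{\pi|_{\timestep,a,t}}_s - X^\pi_s$ for $s \in [t, t+\timestep]$. Since both processes start at $x$, and since $s \le t + \timestep$ keeps us inside the window on which $\pi|_{\timestep,a,t}(\cdot \mid s', \cdot) = \delta_a$, we have $Z_t = 0$ and
\[
Z_s = \int_t^s\bigl(\drift^{\pi|_{\timestep,a,t}}(s',X^{\pi|_{\timestep,a,t}}_{s'}) - \drift^\pi(s',X^\pi_{s'})\bigr)\,\mathrm{d}s' + \int_t^s\bigl(\volatility^{\pi|_{\timestep,a,t}}(s',X^{\pi|_{\timestep,a,t}}_{s'}) - \volatility^\pi(s',X^\pi_{s'})\bigr)\,\mathrm{d}B_{s'}.
\]
In each integrand I would add and subtract the coefficient of $\pi|_{\timestep,a,t}$ evaluated along $X^\pi$, splitting it into a \emph{Lipschitz part}, for example $\drift^{\pi|_{\timestep,a,t}}(s',X^{\pi|_{\timestep,a,t}}_{s'}) - \drift^{\pi|_{\timestep,a,t}}(s',X^\pi_{s'})$, and a \emph{policy-discrepancy part}, for example $\drift^{\pi|_{\timestep,a,t}}(s',X^\pi_{s'}) - \drift^\pi(s',X^\pi_{s'})$; likewise for $\volatility$.

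Following the proof of Lemma~\ref{lem:action:gronwall}, I would apply the power-mean inequality to bound $|Z_s|^{2q}$ by $4^{2q-1}$ times a sum of four integrals, take expectations, and estimate the two drift integrals by Jensen/H\"older and the two diffusion integrals by Burkholder--Davis--Gundy followed by H\"older. The Lipschitz parts are controlled by $C_{\ref{a:ito:averaged}}|Z_{s'}|$ (using that the averaged coefficients of $\pi|_{\timestep,a,t}$ are Lipschitz in state --- for the drift this is immediate from Assumption~\ref{a:ito:action}, and for the diffusion it follows from Assumption~\ref{a:ito:averaged} applied to $\pi|_{\timestep,a,t}$, cf.\ Appendix~\ref{s:averagecoeffassumption}), contributing $\leqsim \bigl((s-t)^{2q-1}+(s-t)^{q-1}\bigr)\int_t^s\mathbf{E}[|Z_{s'}|^{2q}]\,\mathrm{d}s'$. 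The key point is that the policy-discrepancy parts are \emph{not} small --- the coefficients are not assumed Lipschitz in the action --- but both $\drift^{\pi|_{\timestep,a,t}},\drift^\pi$ and $\volatility^{\pi|_{\timestep,a,t}},\volatility^\pi$ have linear growth with constant $C_{\ref{a:ito:action}}$ (Assumption~\ref{a:ito:action} together with the linear-growth bound for averaged coefficients at the end of Appendix~\ref{s:averagecoeffassumption}), so these parts are $\leqsim (1+|X^\pi_{s'}|)$ pointwise and contribute $\leqsim \bigl((s-t)^{2q-1}+(s-t)^{q-1}\bigr)\int_t^s\mathbf{E}[(1+|X^\pi_{s'}|)^{2q}]\,\mathrm{d}s'$. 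By the same argument as Lemma~\ref{lem:action:gronwall} applied to $X^\pi$, one gets $\mathbf{E}[(1+|X^\pi_{s'}|)^{2q}] \leqsim (1+|x|)^{2q}$ uniformly for $s' \in [t,t+\timestep]$, taking $\timestep \le 1$ (harmless, as $\timestep$ is assumed small). Writing $g(s) := \mathbf{E}[|Z_s|^{2q}]$ and using $(s-t)^{2q-1}+(s-t)^{q-1} = (s-t)^{q-1}\bigl((s-t)^q+1\bigr)$, this collects into
\[
g(s) \leq A_0(s-t)^{q-1}\bigl((s-t)^q+1\bigr)\int_t^s g(s')\,\mathrm{d}s' + B_0(1+|x|)^{2q}(s-t)^q\bigl((s-t)^q+1\bigr),
\]
with $A_0,B_0$ depending only on $q$, $C_{\ref{a:ito:action}}$, $C_{\ref{a:ito:averaged}}$.

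To close, I would use the integrating-factor form of Gronwall rather than the crude ``pull the prefactor out'' form: with $G(s) := \int_t^s g(s')\,\mathrm{d}s'$, the inequality reads $G'(s) \leq A_0(s-t)^{q-1}\bigl((s-t)^q+1\bigr)G(s) + B_0(1+|x|)^{2q}(s-t)^q\bigl((s-t)^q+1\bigr)$, so multiplying by $\exp\bigl(-A_0\int_t^s (s'-t)^{q-1}((s'-t)^q+1)\,\mathrm{d}s'\bigr)$ (which is $\le 1$) and integrating gives $G(s) \leq \exp\bigl(A_0\bigl(\tfrac{(s-t)^{2q}}{2q}+\tfrac{(s-t)^q}{q}\bigr)\bigr)\int_t^s B_0(1+|x|)^{2q}(s'-t)^q((s'-t)^q+1)\,\mathrm{d}s'$. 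Substituting back into the inequality for $g(s)$, computing the elementary polynomial integrals, and simplifying with $s-t \le \timestep \le 1$ (so that $(s-t)^{2q} \le (s-t)^q$ and $((s-t)^q+1)^2 \le 3((s-t)^q+1)$) yields precisely the claimed bound with $C$ an appropriate function of $A_0$ and $B_0$. The only place any care is needed --- and hence the main obstacle --- is exactly this last step: one must \emph{not} pull the $s$-dependent prefactor outside the integral, since the lemma demands a bound in powers of $s-t$ valid for every $s \leq t + \timestep$, not merely in powers of $\timestep$; everything upstream is the moment-bound-plus-Gronwall template already used for Lemma~\ref{lem:action:gronwall}.
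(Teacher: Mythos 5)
Your proposal is correct and follows essentially the same route as the paper's proof: localize to $[t,t+\timestep]$ where the modified policy is constant, split each coefficient difference into a Lipschitz part (feeding the Gronwall term) and a policy-discrepancy part controlled by linear growth, take $2q$-th moments, and close with Gronwall. The only cosmetic difference is the pivot of the decomposition---the paper routes through the fixed initial state $x$ and invokes Lemma \ref{lem:action:gronwall} for $\mathbf{E}[|X^a_{s'}-x|^{2q}]$, whereas you route through $X^\pi_{s'}$ and use the analogous moment bound for $X^\pi$---and both yield the stated estimate.
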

\begin{proof}
Let $C_q = 8^{2q-1}$. 
Note that $X^{\pi|_{\timestep,a,t}}_{s'} = X^a_{s'}$ $\mathbf{P}$-a.s. for all $s' \leq t+\timestep$, by the definition of $\pi|_{\timestep,a,t}$ and the uniqueness of strong solutions to \eqref{eq:ito:action}.
So, by Jensen's inequality and It\^{o}'s isometry, observe that
\[
\begin{split}
|X^{\pi|_{\timestep,a,t}}_s - X^\pi_s|^{2q} 
&\leq C_q((s-t)^{2q-1} + (s-t)^{q-1})\bigg( \int_t^s  (C_{\ref{a:ito:action}}^{2q} +C_{\ref{a:ito:averaged}}^{2q})|X^a_{s'} - x|^{2q}  \, \mathrm{d}s' 
\\
&\hspace{1.0cm}+ \int_t^s   (C_{\ref{a:ito:action}}^{2q} +C_{\ref{a:ito:averaged}}^{2q})(1+|x|)^{2q} \, \mathrm{d}s' 
\\
&\hspace{1.0cm}+ \int_t^s C_{\ref{a:ito:averaged}}^{2q}|X^{\pi|_{\timestep,a,t}}_{s'}-  X^\pi_{s'} |^{2q} \, \mathrm{d}s' \bigg).
\end{split}
\]
Thus, after taking expectation, we deduce that
\[
\begin{split}
\mathbf{E}[|X^{\pi|_{\timestep,a,t}}_s - X^\pi_s|^2 ] &\leq C_q((s-t)^{2q} + (s-t)^{q})\bigg((C_{\ref{a:ito:action}}^{2q} +C_{\ref{a:ito:averaged}}^{2q})\max_{s' \in [s,t]}\mathbf{E}[|X^a_{s'} - x|^{2q}|]
\\
&\hspace{1.0cm}+  (C_{\ref{a:ito:action}}^{2q} +C_{\ref{a:ito:averaged}}^{2q})(1+|x|)^{2q} \bigg)
\\
&\hspace{1.0cm}+ C_q((s-t)^{2q-1} + (s-t)^{q-1})C_{\ref{a:ito:averaged}}^{2q}\int_t^s \mathbf{E}[|X^{\pi|_{\timestep,a,t}}_{s'} -  X^\pi_{s'} |^{2q}] \, \mathrm{d}s'.
\end{split}
\]
And so, the lemma follows after applying Lemma \ref{lem:action:gronwall} and Gronwall's inequality.
\end{proof}

\begin{lemma}
\label{lem:longtimeGronwall}
Under Assumptions \ref{a:ito:action} and \ref{a:ito:averaged}, let $(X^\bullet_s)_{s \geq t}$ with $\bullet \in \{ \pi, \pi|_{\timestep,a,t} \}$ be the unique strong solution to \eqref{eq:ito:policy:averaged} with $X^\bullet_t = x$ $\mathbf{P}$-a.s.
Then, for all $s > t + \timestep$,
\[
\mathbf{E}[|X^{\pi|_{\timestep,a,t}}_s - X^\pi_s|^{2q} ] \leq C(1+|x|)^{2q}(\timestep^q + 1)\timestep^q e^{C((s-t-\timestep)^{q} + 1) (s-t-\timestep)^{q}} 
\]
where $C$ is some finite positive constant depending on $q$, $C_{\ref{a:ito:action}}$, and $C_{\ref{a:ito:averaged}}$.
\end{lemma}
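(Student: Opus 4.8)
The plan is to exploit a \textbf{restart structure} at time $t+\timestep$. For $s > t+\timestep$, both $X^{\pi|_{\timestep,a,t}}$ and $X^\pi$ solve the \emph{same} SDE \eqref{eq:ito:policy:averaged}, driven by the same policy-averaged coefficients $\drift^\pi,\volatility^\pi$ and the same Brownian motion, differing only in their (random) values at time $t+\timestep$. This is because $\pi|_{\timestep,a,t}(\cdot\mid s',y)=\pi(\cdot\mid s',y)$ for every $s'\notin[t,t+\timestep)$. So the proof splits into two parts: (i) a Gronwall stability estimate comparing two solutions of a single SDE with different initial data on the interval $[t+\timestep,s]$, and (ii) bounding the initial-data mismatch at time $t+\timestep$ via Lemma \ref{lem:shorttimeGronwall}.

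For step (i), set $\Delta_{s'}:=X^{\pi|_{\timestep,a,t}}_{s'}-X^\pi_{s'}$ and, for $s'\in[t+\timestep,s]$, write $\Delta_{s'}=\Delta_{t+\timestep}+\int_{t+\timestep}^{s'}(\drift^\pi(u,X^{\pi|_{\timestep,a,t}}_u)-\drift^\pi(u,X^\pi_u))\,\mathrm{d}u+\int_{t+\timestep}^{s'}(\volatility^\pi(u,X^{\pi|_{\timestep,a,t}}_u)-\volatility^\pi(u,X^\pi_u))\,\mathrm{d}B_u$. Applying Jensen's inequality and It\^{o}'s isometry (the Burkholder--Davis--Gundy inequality for $q>1$) exactly as in the proof of Lemma \ref{lem:shorttimeGronwall}---but now only the Lipschitz terms survive, since the two sets of coefficients coincide---and using Assumption \ref{a:ito:averaged}, one gets
\[
\mathbf{E}[|\Delta_{s'}|^{2q}]\leq C_q\,\mathbf{E}[|\Delta_{t+\timestep}|^{2q}]+C_qC_{\ref{a:ito:averaged}}^{2q}\big((s'-t-\timestep)^{2q-1}+(s'-t-\timestep)^{q-1}\big)\int_{t+\timestep}^{s'}\mathbf{E}[|\Delta_u|^{2q}]\,\mathrm{d}u.
\]
Gronwall's inequality then yields $\mathbf{E}[|\Delta_s|^{2q}]\leq C_q\,\mathbf{E}[|\Delta_{t+\timestep}|^{2q}]\exp\big(C_qC_{\ref{a:ito:averaged}}^{2q}((s-t-\timestep)^{q}+1)(s-t-\timestep)^{q}\big)$, where the exponent has been rewritten using $(s-t-\timestep)^{2q}+(s-t-\timestep)^{q}=(s-t-\timestep)^q((s-t-\timestep)^q+1)$ to match the form in the statement.

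For step (ii), by the definition of $\pi|_{\timestep,a,t}$ and uniqueness of strong solutions to \eqref{eq:ito:action}, we have $X^{\pi|_{\timestep,a,t}}_{t+\timestep}=X^a_{t+\timestep}$, so $\Delta_{t+\timestep}=X^{\pi|_{\timestep,a,t}}_{t+\timestep}-X^\pi_{t+\timestep}$ is precisely the object bounded by Lemma \ref{lem:shorttimeGronwall} evaluated at $s=t+\timestep$, giving $\mathbf{E}[|\Delta_{t+\timestep}|^{2q}]\leq C(1+|x|)^{2q}(\timestep^q+1)\timestep^q e^{C(\timestep^q+1)\timestep^q}$. Under the standing assumption that $\timestep$ is small enough that $t+\timestep\in\timespace$, the factor $e^{C(\timestep^q+1)\timestep^q}$ is bounded by an absolute constant and can be absorbed into $C$. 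Combining with step (i) delivers the claim. The only genuine subtlety is the control of the $2q$-th moment of the stochastic integral, which requires BDG rather than the plain It\^{o} isometry when $q>1$; everything else is routine Gronwall bookkeeping already performed for the short-time lemma, applied here on a shifted time interval.
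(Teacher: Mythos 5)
Your proposal is correct and follows essentially the same route as the paper's proof: decompose at the restart time $t+\timestep$, observe that the two processes satisfy the same SDE with Lipschitz coefficients on $[t+\timestep,s]$ so only the Lipschitz difference terms appear, apply Gronwall on that interval, and bound the initial mismatch $\mathbf{E}[|X^{\pi|_{\timestep,a,t}}_{t+\timestep}-X^\pi_{t+\timestep}|^{2q}]$ via Lemma \ref{lem:shorttimeGronwall}. Your remark that the $2q$-th moment of the stochastic integral requires the Burkholder--Davis--Gundy inequality rather than the plain It\^{o} isometry when $q>1$ is a fair refinement of the paper's phrasing, but it does not change the argument.
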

\begin{proof}
Let $C_q = 3^{2q-1}$.
Observe that
\[
X^\bullet_s = X^\bullet_{t+\timestep} + \int_{t+\timestep}^s \drift^\bullet(s', X^\bullet_{s'}) \, \mathrm{d}s' + \int_{t+\timestep}^s \volatility^\bullet(s', X^\bullet_{s'}) \, \mathrm{d}B_{s'}
\]
So, as $\pi|_{\timestep,a,t}(\cdot \mid s', y) = \pi(\cdot \mid s', y)$ for all $(s',y) \in \timespace \setminus [t, t+\timestep) \times \statespace$, by Jensen's inequality and It\^{o}'s isometry, 
\[
\begin{split}
|X^{\pi|_{\timestep,a,t}}_s - X^\pi_s|^{2q} 
&\leq C_q|X^{\pi|_{\timestep,a,t}}_{t+\timestep} - X^\pi_{t+\timestep}|^{2q} + C_q\Big( (s-t-\timestep)^{2q-1} 
\\
&\hspace{1.0cm}+ (s-t-\timestep)^{q-1}\Big)C_{\ref{a:ito:averaged}}^{2q}\int^s_{t+\timestep} |X^{\pi|_{\timestep,a,t}}_{s'} - X^\pi_{s'}|^{2q} \, \mathrm{d}s'.
\end{split}
\]
Thus, after taking expectation, applying Gronwall's inequality, and considering Lemma \ref{lem:shorttimeGronwall}, the lemma follows.
\end{proof}

One consequence of Lemmas \ref{lem:shorttimeGronwall} and \ref{lem:longtimeGronwall} is
\[
\mathbf{E}[|X^{\pi|_{\timestep,a,t}}_s - X^\pi_s|^p ] \to 0 \quad\text{as}\quad \timestep \downarrow 0,
\]
for all $s \in \timespace$ and for all $p \in [1,\infty)$.
And so, if $\termrew$ is bounded, then $\termrew(X^{\pi|_{\timestep,a,t}}_T) - \termrew(X^\pi_T)$ is bounded and converges to zero $\mathbf{P}$-a.s. as $\timestep$ converges to zero.
The dominated convergence theorem implies that
\begin{equation}
\label{eq:termrewDCT}
\mathbf{E}[| (\termrew(X^{\pi|_{\timestep,a,t}}_T) -\termrew(X^\pi_T))|^p] \to 0 \quad\text{as}\quad \timestep \downarrow 0.  
\end{equation}
Similarly, we see that the functions $g_\timestep : \timespace \to [0, \infty)$ defined by
\[
g_\timestep(s) := \mathbf{E}[|r(s, X^{\pi|_{\timestep,a,t}}_s) - r(s, X^\pi_s)|^p] 
\]
are uniformly bounded (in $\timestep$) and converge to zero as $\timestep \downarrow 0$ for every $s \in \timespace$.
Hence, by the dominated convergence theorem, again,
\begin{equation}
\label{eq:intrewDCT}
\int_t^T \mathbf{E}[|r(s, X^{\pi|_{\timestep,a,t}}_s) - r(s, X^\pi_s)|^p] \,\Gamma(\mathrm{d}s) \to 0 \quad\text{as}\quad \timestep \downarrow 0
\end{equation}
where $\Gamma(\mathrm{d}s) := (\gamma^{s - t}/C_{T,t,\gamma})\, \mathrm{d}s$ for $C_{T,t,\gamma} := (\gamma^{T-t} - 1)/\log \gamma$.
We now prove Theorem \ref{thm:vanishingwassersteingap}.
\vanishingwassersteingap*
\begin{proof}
Observe that
\[
\begin{split}
\distgap[p](\distretxa_{\timestep}, t, x) &= \min_{a_1 \neq a_2} W_p(\distretxa_{\timestep}(t, x, a_1 ), \distretxa_{\timestep}(t, x, a_1 )) 
\\
&\leq \min_{a_1 \neq a_2} W_p(\distretxa_{\timestep}(t, x, a_1 ), \distretx(t, x)) + \min_{a_1 \neq a_2} W_p(\distretxa_{\timestep}(t, x, a_2 ), \distretx(t, x))
\\
&\leq 2 \max_a W_p(\distretxa_{\timestep}(t, x, a ), \distretx(t, x)).
\end{split}
\]
Thus, as claimed, it suffices to show that
\[
W_p(\distretxa_{\timestep}(t, x, a), \distretx(t, x)) \to 0 \quad\text{as}\quad \timestep \downarrow 0,
\]
for all $(t,x,a) \in \timespace \times \statespace \times \actionspace$.
By \eqref{eq:Wp=pthmoment}, it suffices to show that
\begin{equation}
\label{eq:L2to0}
\mathbf{E}[|\randretxa_{\timestep}(t, x, a) - \randretx(t, x) |^p] \to 0 \quad\text{as}\quad \timestep \downarrow 0,  
\end{equation}
for all $(t,x,a) \in \timespace \times \statespace \times \actionspace$.

Since $(s, \omega) \mapsto X^\bullet_s(\omega)$ is measurable for $\bullet \in \{ \pi, \pi|_{\timestep,a,t}\}$, by Jensen's inequality and Fubini's theorem, we see that
\begin{equation}
\label{eq:L2inequality}
\begin{split}
\mathbf{E}[|\randretxa_{\timestep}(t, x, a) - \randretx(t, x) |^p] 
&\leq 
2^{p-1}C_{T,t,\gamma}^p \int_t^T \mathbf{E}[|r(s, X^{\pi|_{\timestep,a,t}}_s) - r(s, X^\pi_s)|^p] \, \Gamma(\mathrm{d}s)
\\
&\hspace{1.0cm} + 2^{p-1}\gamma^{p(T-t)}\mathbf{E}[| (\termrew(X^{\pi|_{\timestep,a,t}}_T) -\termrew(X^\pi_T))|^p].
\end{split}
\end{equation}
In turn, \eqref{eq:L2to0} follows from \eqref{eq:termrewDCT} and \eqref{eq:intrewDCT}. 
\end{proof}

If $T < \infty$ and $\timestep < 1$, the inequalities in the statements of Lemmas \ref{lem:shorttimeGronwall} and \ref{lem:longtimeGronwall} yield the following inequality: for all $p \in [1, \infty)$,
\begin{equation}
\label{eqn:alltimeGronwallsummary}
\mathbf{E}[|X^{\pi|_{\timestep,a,t}}_s - X^\pi_s|^p ] \leq C_{\eqref{eqn:alltimeGronwallsummary}} \timestep^{p/2} \quad \forall s \in \timespace,   
\end{equation}
for some finite positive constant $C_{\eqref{eqn:alltimeGronwallsummary}}$ depending on $p$, $|x|$, $t$, $T$, $C_{\ref{a:ito:action}}$, and $C_{\ref{a:ito:averaged} }$.
With this inequality in hand, we now restate and prove Theorem \ref{thm:wassersteinactiongapdecay}.
\wassersteinactiongapdecay*
\begin{proof}
Arguing as in the proof of Theorem \ref{thm:vanishingwassersteingap}, we see that
\[
\mathbf{E}[|\randretxa_{\timestep}(t, x, a) - \randretx(t, x) |^p] \leq 2^{p-1}C_{T,t,\gamma}^p{\rm I} + 2^{p-1}\gamma^{p(T-t)}{\rm II }
\]
with
\[
{\rm I} := \int_t^T \mathbf{E}[|r(s, X^{\pi|_{\timestep,a,t}}_s) - r(s, X^\pi_s)|^p] \, \Gamma(\mathrm{d}s)
\quad\text{and}\quad {\rm II} :=  \mathbf{E}[| (\termrew(X^{\pi|_{\timestep,a,t}}_T) -\termrew(X^\pi_T))|^p].
\]
This is \eqref{eq:L2inequality}.
In turn, by \eqref{eqn:alltimeGronwallsummary} and that $r$ is Lipschitz in space, uniformly in time and $\termrew$ is Lipschitz, we deduce that
\[
{\rm I} + {\rm II} \leq C_{\eqref{eqn:alltimeGronwallsummary}}(C_r^p + C_\termrew^p)h^{p/2},
\]
as desired, where $C_r$ and $C_\termrew$ are the Lipschitz constants of $r$ and $\termrew$ respectively.
\end{proof}

Recall $r : \timespace \times \statespace \to \R$ is Lipschitz in state, uniformly in time if a finite positive constant $C_r$ exists such that
\[
\sup_t |r(t,x) - r(t,y)| \leq C_r|x-y| \quad \forall x,y \in \statespace.
\]

As promised, we now prove Theorem \ref{thm:witnesslargegap}.
\witnesslargegap*
\begin{proof} 
Let $\statespace=\R$ and $\actionspace = \{0,1\}$; set $\drift \equiv 0$ and $\volatility = \indicator{a = 1}$; for all $(s, y) \in \timespace \times \statespace$, let $r(s, y) = y$; and set $\termrew \equiv 0$.
In words, our action space has two elements, and when action $1$ is executed, the system follows Brownian dynamics, and otherwise, the state is fixed.
Now consider the policy $\pi$ which always selects the action $0$: $\pi( \cdot \mid s, y) = \delta_0$, for all $(s, y) \in \timespace \times \statespace$. 

{\bf Case 1: $\gamma=1$ and $T<\infty$.}
Observe that
\[
\randretxa_{\timestep}(t, x, 1) - \randretxa_{\timestep}(t, x, 0) \eqlaw  \int_0^{\timestep} \tilde{B}_s\,\mathrm{d}s + (T -t - \timestep)\tilde{B}_\timestep,
\]
where $(\tilde{B}_s)_{s\geq0}$ is a Brownian motion.
Hence, $\randretxa_{\timestep}(t, x, 1) - \randretxa_{\timestep}(t, x, 0)$ is equal in law to the sum of two zero mean Gaussian random variables with variances $\sigma_1^2 = \timestep^3/3$ and $\sigma_2^2 = (T - t - \timestep)^2\timestep$ respectively. 
And so, it is also Gaussian, and its variance is $ \sigma_\timestep^2 = \sigma_1^2 + \sigma_2^2 + 2c\sigma_1\sigma_2$ for some $-1 \leq c \leq 1$.
In particular, $\sigma_\timestep^p \geqsim \timestep^{p/2}$.
Recall that central absolute $p$th moment of a Gaussian random variable is proportional to its standard deviation to the power $p$, for all $p \geq 1$.
Therefore, by \eqref{eq:Wp=pthmoment}, we deduce that $\distgap[p](\distretxa_{\timestep}, t, x) \geqsim \timestep^{\nicefrac{1}{2}}$, for $\timestep < 1$, as desired.

{\bf Case 2: $T \in [0, \infty]$ and $\gamma \in (0,1)$.}
Observe that
\[    
\randretxa_\timestep(t, x, 1) - \randretxa_\timestep(t, x, 0) \eqlaw   N(\timestep) + \frac{\gamma^{T -t} - \gamma^\timestep}{\log \gamma} \tilde{B}_\timestep \quad\text{with}\quad N(\timestep) := \int_0^\timestep \gamma^s \tilde{B}_s \, \mathrm{d}s,
\]
for some Brownian motion $(\tilde{B}_s)_{s \geq 0}$.
We claim that $N(\timestep)$ is a mean zero Gaussian random variable with variance $\sigma_\timestep^2 \approx \timestep^3/3$.
Then the concluding argument in the proof of Case 1 also concludes the proof of this case. 

To prove our claim, first note that
\[
\mathbf{E}[N(\timestep)] = \int_0^h \gamma^s \mathbf{E}[\tilde{B}_s] \, \mathrm{d}s = 0 \quad\text{and}\quad \mathbf{E}[N(\timestep)^2]  = \int_0^h \int_0^h  \gamma^{s+s'} \mathbf{E}[\tilde{B}_s \tilde{B}_{s'}] \, \mathrm{d}s \mathrm{d}s'.
\]
As 
\[
\gamma^{2\timestep} \frac{\timestep^3}{3} = \gamma^{2\timestep} \int_0^h \int_0^h \min \{s, s'\} \, \mathrm{d}s \mathrm{d}s' \leq \mathbf{E}[N(\timestep)^2] \leq \int_0^h \int_0^h  \min \{s, s'\} \, \mathrm{d}s \mathrm{d}s' \leq \frac{\timestep^3}{3},
\]
we see that $N(\timestep)$ has the claimed statistics.
Second, observe that $N(\timestep) = \lim_{n \to \infty} N_n(\timestep)$ where
\[
N_n(\timestep) := \lim_{n\to \infty} \sum_{i=0}^{n-1} \gamma^{s_i} \tilde{B}_{s_i} d_{\timestep,n} \quad\text{with}\quad d_{\timestep,n} := \frac{\timestep}{n} \text{ and } s_i := i  d_{\timestep,n}.
\]
(This is simply a Riemann sum approximation of the integral that defines $N(\timestep)$.)
As the sum of any finite number of Gaussian random variables is a Gaussian random variable, $N_n(\timestep)$ is Gaussian.
Furthermore, as the limit of a sequence of Gaussian random variables whose sequences of means and variances converge (to finite values) is Gaussian, $N(\timestep)$ is Gaussian, which proves our claim.
\end{proof}

\subsection{Distributional Superiority}
\label{app:proofs:distsup}
In this section, we prove the statements and claims made in Section \ref{s:distributional-superiority}.

First, we prove that the mean of every $\distsup_{\timestep}(t, x, a) \in \cdrset(\distretxa_{\timestep}(t, x, a), \distretx(t, x))$ is $Q^\pi_{\timestep}(t, x, a) - V^\pi(t, x)$.
\begin{lemma}
If $\distsup_{\timestep}(t, x, a) \in \cdrset(\distretxa_{\timestep}(t, x, a), \distretx(t, x))$, then its mean is $Q^\pi_{\timestep}(t, x, a) - V^\pi(t, x)$.
\end{lemma}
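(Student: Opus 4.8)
The plan is to unpack the definition of a coupled difference representation and compute the mean directly. By Definition \ref{def:coupled-difference-representation}, if $\distsup_{\timestep}(t, x, a) \in \cdrset(\distretxa_{\timestep}(t, x, a), \distretx(t, x))$, then $\distsup_{\timestep}(t, x, a) = \pushforward{\diffmap}{\kappa}$ for some $\kappa \in \couplingset(\distretxa_{\timestep}(t, x, a), \distretx(t, x))$, with $\diffmap(z, w) = z - w$. The mean of $\distsup_{\timestep}(t, x, a)$ is then $\int_{\R} u \,(\pushforward{\diffmap}{\kappa})(\mathrm{d}u)$, which by the change-of-variables formula for push-forwards equals $\int_{\R^2}(z - w)\,\kappa(\mathrm{d}z\mathrm{d}w)$.

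Next I would split this integral into $\int_{\R^2} z\,\kappa(\mathrm{d}z\mathrm{d}w) - \int_{\R^2} w\,\kappa(\mathrm{d}z\mathrm{d}w)$ by linearity. This step requires that both pieces are finite, which holds because the marginals $\distretxa_{\timestep}(t, x, a)$ and $\distretx(t, x)$ have finite first absolute moments---this is implicit in $Q^\pi_\timestep$ and $V^\pi$ being well-defined, finite-valued expectations, and is guaranteed explicitly, e.g., when $r$ and $\termrew$ are bounded or via the moment estimates of Lemma \ref{lem:action:gronwall} together with Fubini's theorem. Invoking the marginal property of $\kappa$, the first integral equals $\int_{\R} z\,\distretxa_{\timestep}(t, x, a)(\mathrm{d}z) = \mathbf{E}[\randretxa_{\timestep}(t, x, a)] = Q^\pi_{\timestep}(t, x, a)$ and the second equals $\int_{\R} w\,\distretx(t, x)(\mathrm{d}w) = \mathbf{E}[\randretx(t, x)] = V^\pi(t, x)$. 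Subtracting yields the claim.

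The only subtlety---and it is mild---is the integrability justification needed to split the integral of $z - w$ into its two parts; there is no genuine obstacle beyond this. I would also remark that the computation never uses anything about $\kappa$ other than its marginals, so the conclusion holds uniformly over all CDRs of $\distretxa_{\timestep}(t, x, a)$ and $\distretx(t, x)$, exactly as stated.
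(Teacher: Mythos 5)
Your proof is correct and follows essentially the same route as the paper's: push forward under $\diffmap$, rewrite the mean as $\int (z-w)\,\kappa(\mathrm{d}z\,\mathrm{d}w)$, and use the marginal property of the coupling. The only difference is that you explicitly flag the integrability needed to split the integral, which the paper's proof leaves implicit.
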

\begin{proof}
If $\kappa^\pi_\timestep(t,x,a)$ be such that $\pushforward{\diffmap}{\kappa^\pi_\timestep(t,x,a)} = \distsup_{\timestep}(t, x, a) $, then
\[
\begin{split}
\int r \, \distsup_{\timestep}(t, x, a)(\mathrm{d}r) 
&= \int (z-w)  \, \kappa^\pi_\timestep(t,x,a)(\mathrm{d} z \mathrm{d} w) 
\\
&= \int z \, \distretxa_{\timestep}(t, x, a)(\mathrm{d}z)  - \int w \, \distretx(t, x) (\mathrm{d} w) 
\\
&= Q^\pi_{\timestep}(t, x, a) - V^\pi(t, x),
\end{split}
\]
as desired.
\end{proof}

Second, we prove Theorem \ref{thm:unique superiority}.
\unqiuesuperiority*
First, we establish that there is only one $W_p$ optimal coupling between a given $\mu\in\probset{\R}$ and itself, for every $p \in [1, \infty)$.
\begin{lemma}
\label{lem:unique-ot-identity}
Let $\mu\in\probset{\R}$. 
There is only one $W_p$ optimal coupling 
between $\mu$ and itself, for every $p \in [1, \infty)$.
It is $\kappa_\mu := \pushforward{(\identity,\identity)}{\mu}\in\couplingset(\mu, \mu)$.
\end{lemma}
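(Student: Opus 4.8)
The plan is to exploit the fact that the diagonal coupling has zero transport cost, so optimality is essentially automatic and all the content is in uniqueness. First I would check that $\kappa_\mu := \pushforward{(\identity,\identity)}{\mu}$ really belongs to $\couplingset(\mu,\mu)$: writing $\pi_i : \R^2 \to \R$ for the coordinate projections, each marginal of $\kappa_\mu$ is $\pushforward{(\pi_i \circ (\identity,\identity))}{\mu} = \pushforward{\identity}{\mu} = \mu$. Since $\diffmap(z,z) = 0$, the cost $\int_{\R^2}|z-w|^p\,\kappa_\mu(\mathrm{d}z\mathrm{d}w)$ equals $0$; because this cost functional is nonnegative on every element of $\couplingset(\mu,\mu)$, $\kappa_\mu$ attains the infimum in Definition~\ref{def:wasserstein}. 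Hence $W_p(\mu,\mu) = 0$ and $\kappa_\mu$ is a $W_p$-optimal coupling, for every $p \in [1,\infty)$.

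For uniqueness, I would take an arbitrary $W_p$-optimal $\kappa \in \couplingset(\mu,\mu)$ and observe that $\int_{\R^2}|z-w|^p\,\kappa(\mathrm{d}z\mathrm{d}w) = W_p(\mu,\mu)^p = 0$. As the integrand is nonnegative, this forces $\kappa$ to give zero mass to the set $\{(z,w) : z \neq w\} = \R^2 \setminus \Delta$, where $\Delta := \{(z,z) : z \in \R\}$ is the (closed, hence Borel) diagonal. In other words, every optimal coupling is concentrated on $\Delta$.

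It then remains to show that concentration on $\Delta$ together with the marginal constraint determines $\kappa$ uniquely. Let $D : \R \to \R^2$, $D(z) := (z,z)$, so that $\Delta = D(\R)$ and $\pi_1$ restricted to $\Delta$ is a Borel isomorphism onto $\R$ with inverse $D$. For any Borel rectangle $A \times B \subseteq \R^2$ one has $(A \times B) \cap \Delta = \{(z,z) : z \in A \cap B\} = \pi_1^{-1}(A \cap B) \cap \Delta$; using $\kappa(\R^2 \setminus \Delta) = 0$ to drop the intersections with $\Delta$ and then the first-marginal identity $\pushforward{\pi_1}{\kappa} = \mu$, we get $\kappa(A \times B) = \kappa(\pi_1^{-1}(A \cap B)) = \mu(A \cap B) = \pushforward{D}{\mu}(A \times B) = \kappa_\mu(A \times B)$. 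Since measurable rectangles form a $\pi$-system generating the Borel $\sigma$-algebra on $\R^2$ and both measures are probability measures, $\kappa = \kappa_\mu$, which proves the uniqueness claim.

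I do not expect a genuine obstacle here: optimality of $\kappa_\mu$ is immediate from nonnegativity and vanishing of the cost, and the only step requiring a little care is the last one, namely translating "supported on the diagonal, with the right marginal" into "equals $\pushforward{D}{\mu}$" via the rectangle-generation argument.
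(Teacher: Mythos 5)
Your proof is correct and follows essentially the same route as the paper's: show the diagonal coupling has zero cost, deduce that any optimal coupling must be concentrated on the diagonal, and then use the marginal constraint to pin it down as $\kappa_\mu$. Your final step (the $\pi$-system argument on rectangles) is in fact a more careful justification of the paper's brief assertion that a coupling concentrated on $\{z=w\}$ must be of the form $\pushforward{(\identity,\identity)}{\nu}$.
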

\begin{proof}
Let $\kappa\in\couplingset(\mu, \mu)$, and suppose there exists $\epsilon>0$ for which
\[
c_\epsilon := \kappa(\{|z - w|\geq\epsilon\, :\, z,w\in\R\}) > 0.
\]
Then
\[
\int|z - w|^p \, \kappa(\mathrm{d}z\mathrm{d}w)
\geq
\int_{\{|z - w| \geq\epsilon\}}|z - w|^p \, \kappa(\mathrm{d}z\mathrm{d}w) \geq |\epsilon|^p c_\epsilon  > 0 = \int |z-w|^p \, \kappa_{\mu}(\mathrm{d}z \mathrm{d}w).
\]
Hence, $\kappa$, as considered, is not optimal.
Since $\epsilon$ was arbitrary, it follows that a $W_p$ optimal coupling is concentrated on $\{z = w\}$. 
Therefore, every optimal coupling is of the form $\pushforward{(\identity,\identity)}{\nu}$ for some $\nu\in\probset{\R}$.
As the marginals of such a coupling are $\nu$ and $\nu$, we deduce that $\nu =\mu$, as desired.
\end{proof}
\begin{proof}[Proof of Theorem \ref{thm:unique superiority}]
Let $\kappa \in \couplingset(\mu, \mu)$ be such that $\pushforward{\diffmap}{\kappa} = \delta_0$. 
Observe that
\[
\int |z-w|^p \, \kappa(\mathrm{d}z \mathrm{d}w)  = \int r^2 \,  \delta_0(\mathrm{d}r)  = 0 = \int |z - w|^p \, \kappa_{\mu}(\mathrm{d}z \mathrm{d}w),
\]
where, again, $\kappa_{\mu} = \pushforward{(\identity,\identity)}{\mu}$.
Hence, $\kappa = \kappa_{\mu}$, by Lemma \ref{lem:unique-ot-identity}.
On the other hand, since $\kappa_{\mu}$ is concentrated on $\{ z = w \}$, for any bounded, continuous function $g$, we see
\[
\int g(r) \, \pushforward{\diffmap}{\kappa_{\mu}}(\mathrm{d} r) = \int g(z - w) \, \kappa_{\mu}(\mathrm{d}z \mathrm{d}w) = \int_{\{z = w\}} g(z-w)\, \kappa_{\mu}(\mathrm{d}z \mathrm{d}w) = g(0).
\]
In turn, $\pushforward{\diffmap}{\kappa_{\mu}}  = \delta_0$, as desired.
\end{proof}
\begin{remark}
Under the hypotheses of Theorem \ref{thm:wassersteinactiongapdecay},
we see that the ${\nicefrac{1}{2}}$-rescaled superiority distributions at any $(t,x,a)$ for $\timestep \in (0,1]$ is a family of probability measures with uniformly bounded second moment.
Hence, this family is tight.
So, up to subsequences, these rescalings converges to limiting probability measure as $\timestep \downarrow 0$.  
An interesting open question, is whether or not these subsequential limits are the same.
\end{remark}

Third, we prove Theorems \ref{thm:rescaleddsuppreserve1} and \ref{thm:rescaleddsuppreserve2}.
The proofs of these theorems are a consequence of the following expression for the $W_p$ distance between $\mu$ and $\nu$ when $\mu, \nu \in \probset{\R}$:
\[
W_p(\mu, \nu)^p = \int_0^1 | F^{-1}_\mu(\tau) - F^{-1}_\nu(\tau) |^p \, \mathrm{d} \tau. 
\]
\rescaleddsuppreservelow*
\begin{proof}
Recall that $F^{-1}_{\rescaleddistsup{q}(t, x, a)} = h^{-q}(F^{-1}_{\distretxa_\timestep(t,x,a)} - F^{-1}_{\distretx(t,x)})$, for all $a \in \actionspace$.
Hence, for every $a_1 \neq a_2$,
\begin{align*}
W_p(\rescaleddistsup{q}(t, x, a_1), \rescaleddistsup{q}(t, x, a_2))^p &=
\int_0^1|F_{\rescaleddistsup{q}(t, x, a_1)}^{-1}(\tau) - F_{\rescaleddistsup{q}(t, x, a_2)}^{-1}(\tau)|^p \, \mathrm{d}\tau\\
&=
h^{-qp}\int_0^1|F_{\distretxa_\timestep(t, x, a_1)}^{-1}(\tau) - F_{\distretxa_\timestep(t, x, a_2)}^{-1}(\tau)|^p\\
&= h^{-qp}W_p(\distretxa_\timestep(t, x, a_1), \distretxa_\timestep(t, x, a_2))^p.
\end{align*}
Thus, taking the $p$th root of both sides of the above equality, we deduce that
\[
W_p(\rescaleddistsup{q}(t, x, a_1), \rescaleddistsup{q}(t, x, a_2)) = h^{-q}W_p(\distretxa_\timestep(t, x, a_1), \distretxa_\timestep(t, x, a_2)).
\]
The example presented in Theorem \ref{thm:witnesslargegap} is such that $W_p(\distretxa_\timestep(t, x, a_1), \distretxa_\timestep(t, x, a_2)) \geqsim \timestep^{\nicefrac{1}{2}}$.
Whence, $W_p(\rescaleddistsup{q}(t, x, a_1), \rescaleddistsup{q}(t, x, a_2)) \geqsim h^{\nicefrac{1}{2} -q}$.
\end{proof}
\rescaleddsuppreserveup*
The proof of Theorem \ref{thm:rescaleddsuppreserve2} is almost identical to the proof of Theorem \ref{thm:rescaleddsuppreserve1}.
\begin{proof}
Arguing as in the proof of Theorem \ref{thm:rescaleddsuppreserve1}, we see that  
\[
W_p(\rescaleddistsup{q}(t, x, a_1), \rescaleddistsup{q}(t, x, a_2)) = h^{-q}W_p(\distretxa_\timestep(t, x, a_1), \distretxa_\timestep(t, x, a_2)).
\]
By Theorem \ref{thm:wassersteinactiongapdecay}, it follows that $W_p(\rescaleddistsup{q}(t, x, a_1), \rescaleddistsup{q}(t, x, a_2)) \leqsim h^{\nicefrac{1}{2}-q}$.
\end{proof}

Finally, we prove Theorem \ref{thm:rescaleddsuprank}
\rescaleddsuprank*
\begin{proof}
Observe that
\begin{align*}
\rho_\beta(\rescaleddistsup{q}(t, x, a))
&= \int_0^1 F_{\rescaleddistsup{q}(t, x, a)}^{-1}(\tau)\, \beta(\mathrm{d}\tau)\\
&= \int_0^1 h^{-q}F_{\distsup_\timestep(t, x, a)}^{-1}(\tau)\, \beta(\mathrm{d}\tau)\\
&= h^{-q}\int_0^1(F_{\distretxa_\timestep(t, x, a)}^{-1}(\tau) - F_{\distretx(t, x)}^{-1}(\tau))\, \beta(\mathrm{d}\tau)\\
&= h^{-q}\rho_\beta(\distretxa_\timestep(t, x, a)) - h^{-q}\rho_\beta(\distretx(t, x))
\end{align*}
Since $\rho_\beta(\distretx(t, x))$ is independent of $a$ and $\timestep > 0$, 
\[
\arg\max_a \rho_\beta(\rescaleddistsup{q}(t, x, a)) = \arg\max_a \rho_\beta(\distretxa_\timestep(t, x, a)).
\]
\end{proof}

\section{Algorithms and Pseudocode}
\label{app:pseudocode}
Here we discuss methods for policy optimization via distributional superiority. 
In practice, computers operate at a finite frequency---as such, all policies we consider here will be assumed to apply each action for $\timestep$ units of time, as in the settings of \cite{tallecMakingDeepQlearning2019,jiaQLearningContinuousTime2023}.

Before describing the superiority learning algorithms, we first remark on the form of exploration policies used in our approaches.
We consider policies of the form
\[
\pi^{\text{explore}}:\R^\actionspace\times\R^n\to\actionspace.
\]
The first argument to such a policy is a vector of ``action-values''. 
In our case, given a superiority distribution $\rescaleddistsup{q}(t, x, \cdot)$, the action values may be $(\rho_\beta(\rescaleddistsup{q}(t, x, a)))_{a\in\actionspace}$ for a distortion risk measure $\rho_\beta$. 
This generalizes the notion of $Q$-values for distributional learning.

The second argument represents a noise variable, in order to support stochastic policies. 
As input to our algorithms, we require a probability measure $\bbfont{P}_{\mathrm{act}}$ on processes $(\epsilon_t)_{t\geq 0}$. 
This framework generalizes common exploration methods in deep RL. 
For example, to recover $\epsilon$-greedy policies, $\bbfont{P}_{\mathrm{act}}$ represent a 2-dimensional white noise, and 
\begin{equation}\label{eq:piexplore:egreedy}
\pi^{\mathrm{explore}}(v, \xi) = 
\begin{cases}
\arg\max_{a}v_a &\text{if } \xi_1\leq\epsilon
\\
a_{\lceil \xi_2|\actionspace|\rceil} & \text{otherwise.}
\end{cases}
\end{equation}

Alternatively, one might choose to correlate the action noise. 
Approaches such as DDPG \cite{lillicrapContinuousControlDeep2016} and DAU present such examples, where action noise evolves over time as an Ornstein-Uhlenbeck process. 
This can be implemented in the framework above by choosing $\bbfont{P}_{\mathrm{act}}$ as the distribution of an $|\actionspace|$-dimensional Ornstein-Uhlenbeck process, and defining
\begin{equation}\label{eq:piexplore:ou}
\pi^{\mathrm{explore}}(v, \xi) = \arg\max_{a\in\actionspace}\left(v + \xi\right).
\end{equation}

In our experiments, we found that $\epsilon$-greedy exploration was sufficient for approximate policy optimization. 
To keep our implementation closest to the QR-DQN baseline, therefore, our implementations use the definition of $\pi^{\mathrm{explore}}$ from equation

The remainder of this section details the implementation of DSUP($q$) and DAU+DSUP($q$) as introduced in Section \ref{s:algoconsids}. Additionally, we provide source code for our implementations at \url{https://github.com/harwiltz/distributional-superiority}.

\subsection{Distributional Superiority}
Generally, our goal is to learn a $\rho_\beta$-greedy policy (see Definition \ref{def:srm}). 
Since all policies apply actions for $\timestep$ units of time, in order to satisfy Axiom \ref{axiom:consistency}, we want
\begin{align*}
    \randsup_\timestep(t, x, \pi(x)) \equiv 0,
\end{align*}
where $\pi : x\mapsto \arg\max_a\rho_\beta(\randsup_{\timestep}(t, x, a))$ is the $\rho_\beta$-greedy policy. As such, following the DAU algorithm of \cite{tallecMakingDeepQlearning2019}, our algorithms will model quantile functions $\phi(t, x, a)\in\R^m$ that aim to satisfy

\begin{align*}
    F^{-1}_{\rescaleddistsup{q}(t, x, a;\timestep)}
    \approx \phi(t, x, a) - \phi(t, x, a^\star) \quad \text{for some} \quad
    a^\star \in \arg\max_{a}\rho_\beta(\phi(t, x, a)).
\end{align*}

Our primary algorithmic contribution integrates such a model, with proper superiority rescaling, into the QR-DQN framework of \cite{dabneyDistributionalReinforcementLearning2017} for estimating action-conditioned return distributions. It is outlined in Algorithm \ref{alg:dsup}.

\newcommand{\reset}{\texttt{reset}\,}
\begin{algorithm}
    \caption{Distributional Superiority Quantile Regression}\label{alg:dsup}
    \begin{algorithmic}
    \Require $m$ (number of quantiles) and $q$ (rescale factor)
    \Require $h^{-1}$ (decision frequency), $\mu$ (initial state distribution)
    \Require $\pi^{\text{explore}}$ (exploration policy) and $\rho_\beta$ (distortion risk measure)
    \Require $\bbfont{P}_{\mathrm{act}}$ (action noise distribution)
    \Require $\theta(t, x), \phi(t, x, a)\in\R^m$, parameterized quantile representations for $\distretx(t, x), \rescaleddistsup{q}(t, x, a)$.
    \Require $\overline{\theta}(t, x)\in\R^m$, target quantile representations for $\distretx(t, x)$.
    \Require $n_\theta$ (target update period), $\alpha$ (step size), $N$ (batch size)
    \State $\mathcal{D}\gets \emptyset$
    \State $\reset\gets\texttt{True}$
    \For{$i\in\mathbb{N}$} %
        \State\textcolor{ForestGreen}{\texttt{// Gather data with exploratory policy}}
        \State $(\xi)_{s\geq 0}\sim\bbfont{P}_{\mathrm{act}}$\Comment{Sample from stochastic process for exploration noise}
        \For{$j\in\{0,\dots,\lfloor\timestep^{-1}\rfloor\}$} %
            \If{\reset is \texttt{True}}
                \State $t\gets 0$ and $x_0\sim\mu$
                \State $\reset\gets\texttt{False}$
            \EndIf
            \State $a^\star\gets\arg\max_{a}\rho_\beta\left(\frac{1}{m}\sum_{n=1}^m\delta_{\phi(t, x_t, a)_n}\right)$
            \State $F^{-1}_{\rescaleddistsup[]{q}}(a)\gets \phi(t, x_t, a) - \phi(t, x_t, a^\star)$ for each $a\in\actionspace$
            \State $v_a\gets \rho_\beta\left(\frac{1}{m}\sum_{n=1}^m\delta_{F^{-1}_{\rescaleddistsup[]{q}}(a)_n}\right)$ for each $a\in\actionspace$
            \State $a_t\sim\pi^{\text{explore}}((v_a)_{a\in\actionspace}, \xi_{j\timestep})$
            \State Perform action $a_t$ for $\timestep$ units of time, observe $(r_t, x_{t+\timestep}, \mathsf{done}_{t + \timestep})$.
            \State $Y_j\sim\mathsf{Bernoulli}(\timestep)$\Comment{Subsample transitions for replay}
            \If{$Y_j = 1\lor \mathsf{done}_{j+1} = \texttt{True}$}
            \State $\mathcal{D}\gets\mathcal{D}\cup\{(t, x_t, a_t, r_t, x_{t+\timestep}, \mathsf{done}_{t+\timestep})\}$
            \EndIf
            \State $\reset\gets \mathsf{done}_{t+\timestep}$ and $t\gets t + \timestep$
        \EndFor
        \\
        \State\textcolor{ForestGreen}{\texttt{// Update return/superiority distributions}}
        \State Sample minibatch $\{(t_k, x_k, a_k, r_k, x_{k}', \mathsf{done}_{k})\}_{k=1}^N$ from $\mathcal{D}$
        \For{$k\in[N]$}
            \State $a^\star_k\gets \arg\max_a\rho_\beta\left(\frac{1}{m}\sum_{n=1}^m\delta_{\phi(t_k, x_k, a)_n}\right)$\Comment{Risk-sensitive greedy action}
            \State ${F^{-1}_{\distretxa[]}(\phi, \theta)}\gets \theta(t_k, x_k) + \timestep^q(\phi(t_k, x_k, a_k) - \phi(t_k, x_k, a^\star))$\Comment{Prediction \eqref{eq:zeta-pred}}
            \State ${\mathcal{T}F^{-1}_{\distretxa[]}}\gets \timestep r_k + \gamma^{\timestep}(1 - \mathsf{done}_{k}){\overline\theta}(t_k + \timestep, x'_{k}) + \gamma^\timestep\mathsf{done}_{k}\termrew(x'_{k})$\Comment{Target \eqref{eq:zeta-target}}
            \State $\ell_k(\phi, \theta)\gets \mathsf{QuantileHuber}({F^{-1}_{\distretxa[]}(\phi, \theta)}, {\mathcal{T}F^{-1}_{\distretxa[]}})$\Comment{See \cite[Eq. (10)]{dabneyDistributionalReinforcementLearning2017}}
        \EndFor
        \State $\ell(\phi,\theta)\gets \frac{1}{N}\sum_{k=1}^N\ell_k(\phi, \theta)$
        \State $\phi\gets\phi - \alpha\nabla_\phi\ell(\phi, \theta)$ and $\theta\gets\theta - \alpha\nabla_\theta\ell(\phi, \theta)$\Comment{Gradient updates}
        \If{$i\mid n_\theta$}
            \State $\overline{\theta}\gets\theta$
        \EndIf
    \EndFor
    \end{algorithmic}
\end{algorithm}

To deal with the increased time-resolution of transitions, \cite{tallecMakingDeepQlearning2019} modified the step size by a factor of $\timestep$. More recently, \cite{bellemareAutonomousNavigationStratospheric2020} found that subsampling transitions before storing them in the replay buffer was most effective in their high-decision-frequency domain. Thus, we opt for such a strategy here. Rather than only storing every $\timestep^{-1}$ transitions, we randomly select transitions to store according to independent $\mathsf{Bernoulli}(\timestep)$ draws in order to avoid the possibility of only capturing cyclic phenomena in the replay buffer. We found that this strategy worked similarly to that of \cite{tallecMakingDeepQlearning2019}, but is far less computationally expensive. Likewise, as $\timestep$ decreases, we extend the number of training interactions by a factor of $\timestep^{-1}$. This corresponds to training for a constant amount of time units across decision frequencies, and likewise, a constant number of gradient updates across decision frequencies.

\subsection{Two-Timescale Advantage-Shifted Distributional Superiority}\label{app:pseudocode:shifted}
An astute reader might recognize that while $\rescaleddistsup{q}$ may have nonzero distributional action gap, since $\timestep^q$ is asymptotically larger than $\timestep$ for $q<1$, the works of \cite{tallecMakingDeepQlearning2019,jiaQLearningContinuousTime2023} would suggest that the \emph{expected} action gap under $\rescaleddistsup{q}$ should vanish.
To account for this, we propose shifting the rescaled superiority quantiles by the advantage function, as estimated e.g.
in DAU \cite{tallecMakingDeepQlearning2019}.
It is clear that such a procedure cannot cause the distributional action gap to vanish.
The resulting procedure is depicted in Algorithm \ref{alg:dsup:shifted}, with the modifications relative to Algorithm \ref{alg:dsup} highlighted in \textcolor{blue}{blue}.
In practice, we employ a shared feature extractor in the representations of $A_\timestep$ and $\phi$ to reap the representation learning benefits of DRL \cite{bellemareDistributionalReinforcementLearning2023} when approximating the advantage.

\begin{algorithm}
    \caption{Advantage-Shifted Distributional Superiority Quantile Regression}\label{alg:dsup:shifted}
    \begin{algorithmic}
    \Require $m$ (number of quantiles) and $q$ (rescale factor)
    \Require $h^{-1}$ (decision frequency), $\mu$ (initial state distribution)
    \Require $\pi^{\text{explore}}$ (exploration policy) and $\rho_\beta$ (distortion risk measure)
    \Require $\bbfont{P}_{\mathrm{act}}$ (action noise distribution)
    \Require $\theta(t, x), \phi(t, x, a)\in\R^m$, parameterized quantile representations for $\distretx(t, x), \rescaleddistsup{q}(t, x, a)$.
    \Require $\overline{\theta}(t, x)\in\R^m$, target quantile representations for $\distretx(t, x)$.
    {\color{blue}
    \Require $\widetilde{A}_\timestep(t, x, a)$ (parameterized representation of advantage function)
    }
    \Require $n_\theta$ (target update period), $\alpha$ (step size), $N$ (batch size)
    \State $\mathcal{D}\gets \emptyset$
    \State $\reset\gets\texttt{True}$
    \For{$i\in\mathbb{N}$} %
        \State\textcolor{ForestGreen}{\texttt{// Gather data with exploratory policy}}
        \State $(\xi)_{s\geq 0}\sim\bbfont{P}_{\mathrm{act}}$\Comment{Sample from stochastic process for exploration noise}
        \For{$j\in\{0,\dots,\lfloor\timestep^{-1}\rfloor\}$} %
            \If{\reset is \texttt{True}}
                \State $t\gets 0$ and $x_0\sim\mu$
                \State $\reset\gets\texttt{False}$
            \EndIf
            \State $a^\star\gets\arg\max_{a}\rho_\beta\left(\frac{1}{m}\sum_{n=1}^m\delta_{\phi(t, x_t, a)_n \textcolor{blue}{+ (1 - \timestep^{1-q})\widetilde{A}_\timestep(t, x_t, a)}}\right)$
            \State $F^{-1}_{\rescaleddistsup[]{q}}(a)\gets \phi(t, x_t, a) - \phi(t, x_t, a^\star)$ for each $a\in\actionspace$
            {\color{blue}
            \State $A_\timestep(t, x_t, \cdot)\gets \widetilde{A}_\timestep(t, x_t, \cdot) - \widetilde{A}_\timestep(t, x_t, a^\star)$\Comment{see \cite[DAU]{tallecMakingDeepQlearning2019}}
            }
            \State $v_a\gets \rho_\beta\left(\frac{1}{m}\sum_{n=1}^m\delta_{F^{-1}_{\rescaleddistsup[]{q}}(a)_n \textcolor{blue}{+ (1 - \timestep^{1-q})\widetilde{A}_\timestep(t,x_t, a)}}\right)$ for each $a\in\actionspace$
            \State $a_t\sim\pi^{\text{explore}}((v_a)_{a\in\actionspace}, \xi_{j\timestep})$
            \State Perform action $a_t$ for $\timestep$ units of time, observe $(r_t, x_{t+\timestep}, \mathsf{done}_{t + \timestep})$.
            \State $Y_j\sim\mathsf{Bernoulli}(\timestep)$\Comment{Subsample transitions for replay}
            \If{$Y_j = 1\lor \mathsf{done}_{j+1} = \texttt{True}$}
            \State $\mathcal{D}\gets\mathcal{D}\cup\{(t, x_t, a_t, r_t, x_{t+\timestep}, \mathsf{done}_{t+\timestep})\}$
            \EndIf
            \State $\reset\gets \mathsf{done}_{t+\timestep}$ and $t\gets t + \timestep$
        \EndFor
        \\
        \State\textcolor{ForestGreen}{\texttt{// Update return/superiority distributions}}
        \State Sample minibatch $\{(t_k, x_k, a_k, r_k, x_{k}', \mathsf{done}_{k})\}_{k=1}^N$ from $\mathcal{D}$
        \For{$k\in[N]$}
            \State $a^\star\gets\arg\max_{a}\rho_\beta\left(\frac{1}{m}\sum_{n=1}^m\delta_{\phi(t_k, x_k, a)_n \textcolor{blue}{+ (1 - \timestep^{1-q})\widetilde{A}_\timestep(t_k, x_k, a)}}\right)$
            \\
            \State \texttt{// Advantage Loss}
            {\color{blue}
            \State $A_\timestep(t_k, x_k, a_k) \gets \widetilde{A}_\timestep(t_k, x_k, a_k) - \widetilde{A}_\timestep(t_k, x_k, a^\star)$
            \State $V(t_k, a_k)\gets \frac{1}{m}\sum_{n=1}^,\theta(t_k, x_k)_n$
            \State $Q(t_k, x_k, a_k) \gets V(t_k, a_k) + \timestep A_\timestep(t_k, x_k, a_k)$
            \State $\mathcal{T}Q(t_k, x_k, a_k) = \timestep r_k + \gamma^\timestep \frac{1}{m}\sum_{n=1}^m\theta(t_k + \timestep, x_{k + 1})_n$
            \State $\ell^{\rm adv}_k(\widetilde{A}_\timestep)\gets \left(Q(t_k, x_k, a_k) - \mathcal{T}Q(t_k, x_k, a_k)\right)^2$\Comment{Bellman error}
            }
            \\
            \State \texttt{// Superiority Loss}
            \State ${F^{-1}_{\distretxa[]}(\phi, \theta)}\gets \theta(t_k, x_k) + \timestep^q(\phi(t_k, x_k, a_k) - \phi(t_k, x_k, a^\star))$\Comment{Prediction \eqref{eq:zeta-pred}}
            \State ${\mathcal{T}F^{-1}_{\distretxa[]}}\gets \timestep r_k + \gamma^{\timestep}(1 - \mathsf{done}_{k}){\overline\theta}(t_k + \timestep, x'_{k}) + \gamma^\timestep\mathsf{done}_{k}\termrew(x'_{k})$\Comment{Target \eqref{eq:zeta-target}}
            \State $\ell_k(\phi, \theta)\gets \mathsf{QuantileHuber}({F^{-1}_{\distretxa[]}(\phi, \theta)}, {\mathcal{T}F^{-1}_{\distretxa[]}})$\Comment{See \cite[Eq. (10)]{dabneyDistributionalReinforcementLearning2017}}
        \EndFor
        \State $\ell(\phi,\theta)\gets \frac{1}{N}\sum_{k=1}^N\ell_k(\phi, \theta)$ \textcolor{blue}{and $\ell^{\rm adv}(\widetilde{A}_\timestep)\gets\frac{1}{2N}\sum_{k=1}^N\ell^{\rm adv}_k(\widetilde{A_\timestep})$}
        \State $\phi\gets\phi - \alpha\nabla_\phi\ell(\phi, \theta)$ and $\theta\gets\theta - \alpha\nabla_\theta\ell(\phi, \theta)$ \textcolor{blue}{and $\widetilde{A}_\timestep\gets\widetilde{A}_\timestep - \alpha\nabla\ell^{\rm adv}(\widetilde{A}_\timestep)$}
        \If{$i\mid n_\theta$}
            \State $\overline{\theta}\gets\theta$
        \EndIf
    \EndFor
    \end{algorithmic}
\end{algorithm}

\subsection{Influence of the Rescaling Factor}
The algorithms \ref{alg:dsup} and \ref{alg:dsup:shifted} are parameterized by a \emph{rescaling factor} $q\in(0,1]$, which is meant to compensate for the collapse of the distributional action gap. Larger values of $q$ correspond to larger compensation. In this work, we argue that the distributional action gap collapses at rate $\timestep^{\nicefrac{1}{2}}$, leading to a natural choice of $q=\nicefrac{1}{2}$, which theoretically preserves constant order action gaps with respect to the decision frequency.
We also test the approach with $q=1$, which corresponds to the well-known scaling rate for preserving expected value action gaps. 

For any $q>\nicefrac{1}{2}$, the distributional action gap theoretically grows without bound as $h\downarrow 0$, leading to distributional estimates with arbitrarily large variance. On the other hand, for any $q<\nicefrac{1}{2}$, the distributional action gap decays to $0$ as $\timestep\downarrow 0$, which makes it difficult to identify the best actions in the presence of approximation error.

\section{Additional Experimental Results}
\label{app:additional-experiments}
Figure \ref{fig:cvar-results} include results comparing the performance of DSUP($\nicefrac{1}{2}$) and QR-DQN for risk-sensitive option trading across a variety of decision frequencies.
\begin{figure}[h]
\centering
$\omega = 20\mathrm{Hz}$\\
\includegraphics[width=\linewidth]{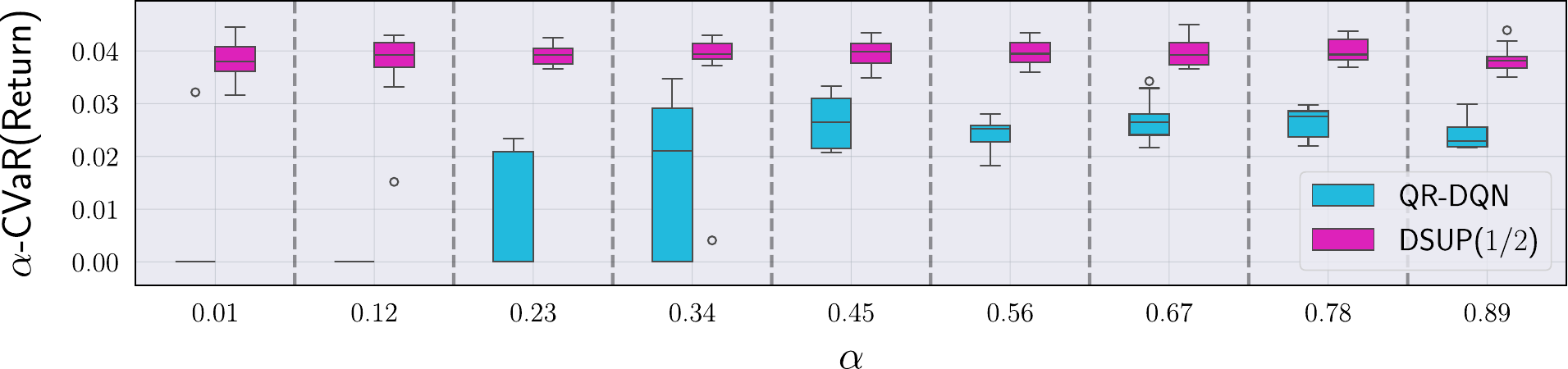}\\
$\omega = 25\mathrm{Hz}$\\
\includegraphics[width=\linewidth]{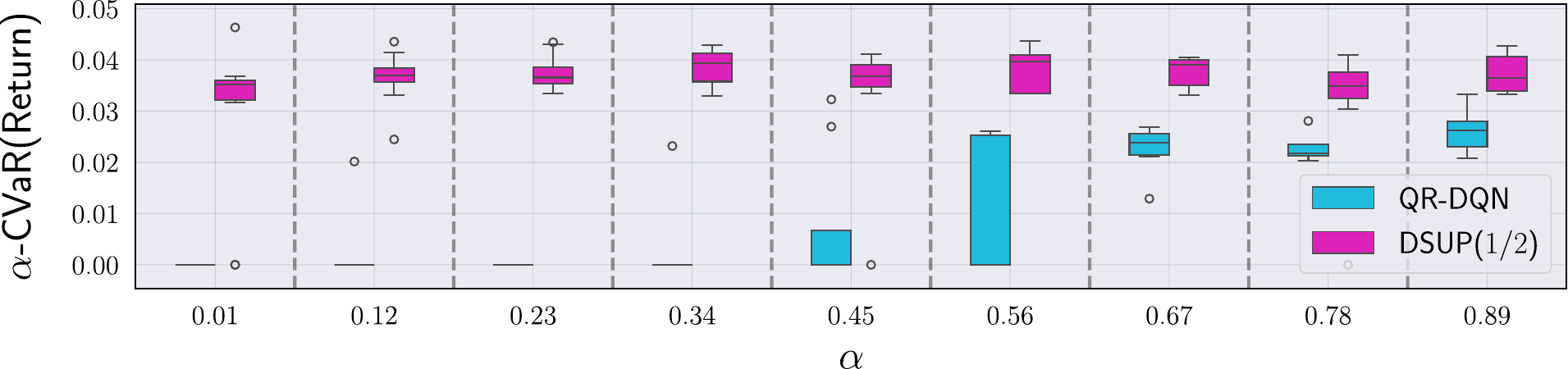}\\
$\omega = 30\mathrm{Hz}$\\
\includegraphics[width=\linewidth]{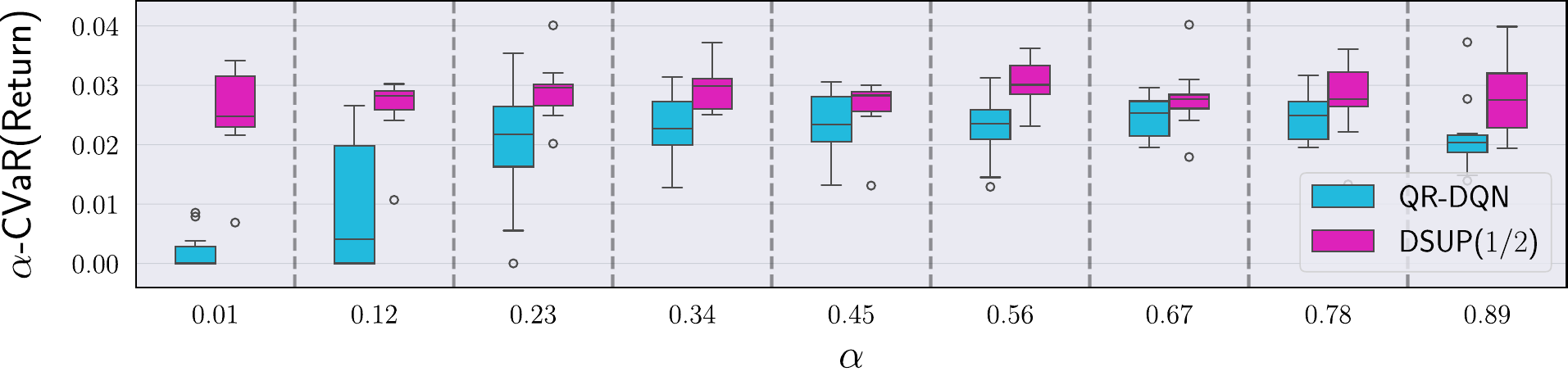}\\
$\omega = 35\mathrm{Hz}$\\
\includegraphics[width=\linewidth]{assets/cvar-experiment/lim_malek-risky-35Hz.pdf}\\
\caption{Risk-sensitive option trading performance for various decision frequencies $\omega$.}\label{fig:cvar-results}
\end{figure}

\section{Simulation Details}
\label{app:experiment-details}
Here we collect further information about the setup for the simulations described in Section \ref{s:experiments:options}.

\subsection{Option Trading Environment}
The environment used for the high-frequency option-trading setup is identical to that of Lim and Malik \cite{limDistributionalReinforcementLearning2022}. The environment emulates policies that decide when to exercise American call options. The state space is modeled as $\statespace=\R$ and with $\timespace = [0,T]$. Notably, existing works such as \cite{limDistributionalReinforcementLearning2022} and \cite{kastnerDistributionalModelEquivalence2023} describe the state space by $\statespace=\R^2$, where one dimension represents time---in our setup, we generally condition policies and returns on time, so we indeed model policies and returns as functions on $\R^2$. The state $X_t$ represents the price of the option at time $t$, and evolves according to a geometric Brownian motion.

There are two actions in the environment. Action $0$ ``holds'' the option, while action $1$ represents ``execute''. Upon taking action $1$ (or equivalently, once the time reaches $T$), the option is \emph{executed}, and the agent receives a reward $\termrew(x) = \max(0, 1 - x)$ and the episode terminates; here $x$ represents the price at the time of execution. No rewards are incurred otherwise.

Following the setup of \cite{limDistributionalReinforcementLearning2022}, the dynamics of the prices are simulated based on data collected between years 2016 and 2019 from 10 commodities on the DOW market. Lim and Malik proposed a method for estimating the most likely parameters of geometric Brownian motion to fit the data for each commodity, which is then used to simulate many environment rollouts for training and evaluation. This is particularly convenient for our setup, where we additionally scale the decision frequency, corresponding to finer time discretizations of the Euler-Maruyama scheme for the estimated geometric Brownian motion. Like \cite{limDistributionalReinforcementLearning2022}, separate dynamics parameters are estimated (for each commodity) between training and evaluation: the dynamics used for training are estimated on prefixes of the data, and those for testing (post-training) are estimated on suffixes of the data. Results are reported on the testing dynamics, averaged over the 10 commodities.

As is standard \cite{limDistributionalReinforcementLearning2022,kastnerDistributionalModelEquivalence2023}, we simulate the environment with $T=100$ and $X_0=1$. The simulations from \cite{limDistributionalReinforcementLearning2022} correspond to $\timestep=1$ in this setting. In our high-frequency simulations, we discretize the dynamics with timestep $\timestep < 1$.

\subsection{Hyperparameters}
In Table \ref{table:hparams}, we list the hyperparameters used in the simulations for the tested algorithms. We note that although the original DAU implementation scaled the learning rate with $\timestep$, we take an alternative approach by updating every $\timestep^{-1}$ environment steps akin to \cite{bellemareAutonomousNavigationStratospheric2020}. This is discussed in more detail in Appendix \ref{app:pseudocode}.

\begin{table}[h]
\centering
\caption{Hyperparameters}\label{table:hparams}
\begin{tabular}{l|l l}
\textbf{Method} & \textbf{Parameter} & \textbf{Value}\\\hline
All & Replay buffer sampling & Uniform\\
& Replay buffer capacity & $20000$\\
& Batch size & $32$\\
& Optimizer & Adam \cite{kingma2014adam}\\
& Learning rate & $1$e-$4$\\
& Discount factor & $0.999$\\
& Target network update period & $1000$\\
& $\epsilon$-greedy exploration parameter & Linear decay from $1$ to $0.02$\\
\hline
DAU \cite{tallecMakingDeepQlearning2019} & Value network & MLP (2 hidden, 100 units, ReLU)\\
& Advantage network & MLP (2 hidden, 100 units, ReLU)\\
\hline
QR-DQN \cite{dabneyDistributionalReinforcementLearning2017,limDistributionalReinforcementLearning2022} & Quantile network torso & MLP (2 hidden, 100 units, ReLU)\\
& Number of atoms & $100$\\
& Quantile Huber parameter $\kappa$ & $1$\\
\hline
DSUP & $\distretx[]$ network torso & MLP (2 hidden, 100 units, ReLU)\\
& $\phi$ (superiority proxy) network torso & MLP (2 hidden, 100 units, ReLU)\\
& Number of atoms & $100$\\
& Quantile Huber parameter $\kappa$ & $1$\\
DAU+DSUP & Advantage network torso & (shared with $\phi$)\\
\end{tabular}
\end{table}

\subsection{Compute Resources}
Our implementations are written in Jax \cite{jax2018github} and executed with a single NVidia V100 GPU. At highest decision frequencies, experiments took longer to execute, averaging out at a maximum of roughly four hours.

\end{document}